\documentclass[twocolumn,notitlepage,floatfix,reprint,superscriptaddress]{revtex4-2}

\usepackage[lmargin=.7in,rmargin=.7in,tmargin=.6in,bmargin=0.8in]{geometry}
\usepackage{graphicx}
\usepackage{amsmath}
\usepackage{amsthm}
\usepackage{amssymb}
\usepackage{latexsym}
\usepackage{array}
\usepackage{hyperref}
\usepackage{amsfonts}
\usepackage{dsfont}
\usepackage{mathrsfs}
\usepackage{verbatim}
\usepackage{bbold}
\usepackage[normalem]{ulem}
\usepackage{upgreek}
\usepackage{makecell}
\usepackage{adjustbox}
\usepackage{algorithm}
\usepackage{algpseudocode}
\usepackage{color}
\usepackage{algcompatible}

\usepackage{textcomp}
\usepackage[dvipsnames]{xcolor}

\usepackage{bm}
\usepackage{times}

\newcommand{\bra}[1]{\left<#1\right|}
\newcommand{\ket}[1]{\left|#1\right>}
\newcommand{\abs}[1]{\left|#1\right|}
\newcommand{\norm}[1]{\left\lVert#1\right\rVert}

\newcommand{\braket}[2]{\left<{#1}|{#2}\right>}
\newcommand{\ketbra}[2]{\ket{#1}\!\!\bra{#2}}

\newcommand{\tr}[1]{\operatorname{Tr}\!\left[#1\right]}
\newcommand{\idx}{\mathrm{idx}}

\newtheorem{theorem}{Theorem}

\newtheorem{lemma}{Lemma}
\newtheorem{corollary}{Corollary}
\newtheorem{definition}{Definition}

\makeatletter
\renewcommand\part[1]{%
  \clearpage
  \onecolumngrid
  \section*{#1}
  
}
\makeatother

\makeatletter
\let\origaddcontentsline\addcontentsline
\renewcommand{\addcontentsline}[3]{}
\makeatother

\begin{document}

\part{}

\title{Physics‑Aware Learnability: From Set‑Theoretic Independence to Operational Constraints}

\author{Jeongho~Bang}\email{jbang@yonsei.ac.kr}
\affiliation{Institute for Convergence Research and Education in Advanced Technology, Yonsei University, Seoul 03722, Republic of Korea}
\affiliation{Department of Quantum Information, Yonsei University, Incheon 21983, Republic of Korea}

\author{Kyoungho~Cho}
\affiliation{Institute for Convergence Research and Education in Advanced Technology, Yonsei University, Seoul 03722, Republic of Korea}
\affiliation{Department of Statistics and Data Science, Yonsei University, Seoul 03722, Republic of Korea}

\date{\today}

\begin{abstract}
Beyond binary classification, learnability can become a logically fragile notion: in EMX, even the class of all finite subsets of $[0,1]$ is learnable in some models of ZFC and not in others. We argue the paradox is operational. The standard definitions quantify over arbitrary set-theoretic learners that implicitly assume non-operational resources (infinite precision, unphysical data access, and non-representable outputs). We introduce \emph{physics-aware learnability} (PL), which defines the learnability relative to an explicit access model---a family of admissible physical protocols. Finite-precision coarse-graining reduces continuum EMX to a countable problem, via an exact pushforward/pullback reduction that preserves the EMX objective, making the independence example provably learnable with explicit $(\epsilon,\delta)$ sample complexity. For quantum data, admissible learners are exactly POVMs on $d$ copies, turning sample size into copy complexity and yielding Helstrom(-type) lower bounds. For finite no-signaling and quantum models, PL feasibility becomes linear or semidefinite and is therefore decidable.
\end{abstract}

\maketitle

Identifying learnability is a foundational promise of learning theory: given a learning task, can finite data guarantee optimal performance with high probability? In binary classification, the PAC/VC framework makes this promise unusually crisp by reducing learnability to a finitary combinatorial witness with sharp sample-complexity bounds~\cite{Valiant1984,Vapnik1971,Blumer1989,Shalev2014}. Beyond classification, however, learnability can become logically fragile: in EMX (estimating the maximum), Ben-David \emph{et al.} showed that even the class of all finite subsets of $[0,1]$ is learnable in some models of ZFC set theory but not in others~\cite{BenDavid2019}. In other words, a finite-sample question can hinge on axioms external to the learning problem, and EMX learnability admits no general VC-like characterization of finite character.

We argue that this pathology is operational rather than statistical. The standard definitions quantify over arbitrary set-theoretic learners---functions from samples to hypotheses---on uncountable domains; in the infinite-precision limit, this can smuggle a ``ghost in the machine'', i.e., non-operational resources such as unphysical data access and outputs that are not finitely nameable. However, real learners are physical devices: they interact with the environment through a finite-precision interface and are constrained, for example, by no-cloning and measurement back-action in quantum settings~\cite{WoottersZurek1982,Dieks1982}, and/or by causal structure in distributed settings~\cite{PopescuRohrlich1994,Barrett2005}. We therefore introduce \emph{physics-aware learnability} (PL), which keeps the success criterion fixed but makes data access explicit via a family of admissible protocols. This shift yields (i) an exact coarse-graining reduction that collapses continuum EMX to a countable problem and turns the ZFC-independence example into a provably learnable finite-precision task with explicit $(\epsilon,\delta)$ sample complexity, (ii) a quantum characterization in which admissible learners are precisely POVMs on $d$ copies, leading to Helstrom copy-complexity lower bounds, and (iii) decidability of PL feasibility in finite operational models via convex optimization.

We note that because the main text is intentionally compact, see the Supplementary Information (SI) for full proofs and more detailed descriptions. A main text--to--SI correspondence map is provided in Methods.

\section*{Results}

\subsection*{Physics-aware learnability (PL): a minimal operational interface}

PL separates what counts as success from what actions are allowed. A learning problem is specified by environments, hypotheses and a utility function. Herein, a physical access model specifies which input-output behaviors are realizable at a given resource budget.

\begin{definition}[Learning task and optimal value]
A learning task is a triple $(\Theta,\mathcal{H},U)$ where $\Theta$ is a set of environments, $\mathcal{H}$ is a set of hypotheses, and $U:\Theta\times\mathcal{H} \to [0,1]$ is a utility. For $\theta \in \Theta$, define $\mathrm{opt}_{\mathcal{H}}(\theta) := \sup_{h\in\mathcal{H}}U(\theta,h)$.
\label{def:task}
\end{definition}

The components have distinct operational meanings: $\theta\in\Theta$ is the unknown ``state of the world'' (a distribution, a quantum state/channel, a no-signaling box, etc); $h \in \mathcal{H}$ is the learner's final output, which must be nameable by a finite classical record, so we treat $\mathcal{H}$ as \emph{representable} (at most countable, with an implicit encoding); and $U(\theta,h)\in[0,1]$ is the performance score, with $\mathrm{opt}_{\mathcal{H}}(\theta)$ as the benchmark achievable if $\theta$ were known. Crucially, {\bf Definition~\ref{def:task}} specifies \emph{what} near-optimal means, not \emph{how} it is achieved---the latter is determined by the access model.

\begin{definition}[Admissible protocol family]
Fix a task $(\Theta,\mathcal{H},U)$. An admissible protocol family is a sequence $\mathfrak{L}=\{\mathfrak{L}_d\}_{d \in \mathbb{N}}$, where each $\mathfrak{L}_d$ is a set of Markov kernels $Q(\cdot\mid\theta) \in \Delta(\mathcal{H})$ mapping environments to distributions over hypotheses. We assume $\mathfrak{L}_d$ is convex and closed under classical post-processing.
\end{definition}

In PL, a protocol is identified with its observable behavior: the conditional output law $Q(\cdot|\theta)$ over hypotheses. All randomness, adaptivity, and internal dynamics are absorbed into $Q$, while physical restrictions are encoded by the set $\mathfrak{L}_d$ of kernels achievable with resource budget $d$. Convexity and closure under classical post-processing express that one may randomize between protocols and freely reprocess any classical output; different physical theories correspond to different choices of $\mathfrak{L}$ (classical i.i.d., $d$-copy quantum~\cite{Ciliberto2018QML,Zhao2024Learning}, no-signaling~\cite{BangChoJae2026}, coarse-grained access~\cite{Gray2002Quantization}, etc). With these primitives in place, we can now state the learnability notion that PL proposes.

\begin{definition}[Physics-aware learnability (PL)]
For $\epsilon,\delta \in (0,1)$, the task $(\Theta,\mathcal{H},U)$ is ``$(\epsilon,\delta)$-learnable in PL'' relative to $\mathfrak{L}$ if there exist a resource budget $d$ and a kernel $Q \in \mathfrak{L}_d$, such that, for all $\theta\in\Theta$,
\begin{eqnarray}
\Pr_{H\sim Q(\cdot\mid\theta)}\Big[U(\theta,H) \ge \mathrm{opt}_{\mathcal{H}}(\theta)-\epsilon\Big] \ge 1-\delta.
\label{eq:PL_def}
\end{eqnarray}
\end{definition}

When $\mathfrak{L}$ is chosen to coincide with classical i.i.d.\ sampling (the standard EMX interface), Eq.~(\ref{eq:PL_def}) reproduces ordinary EMX learnability (SI {\bf Theorem~6}).


\begin{figure*}[t]
\centering
\includegraphics[width=0.60\linewidth]{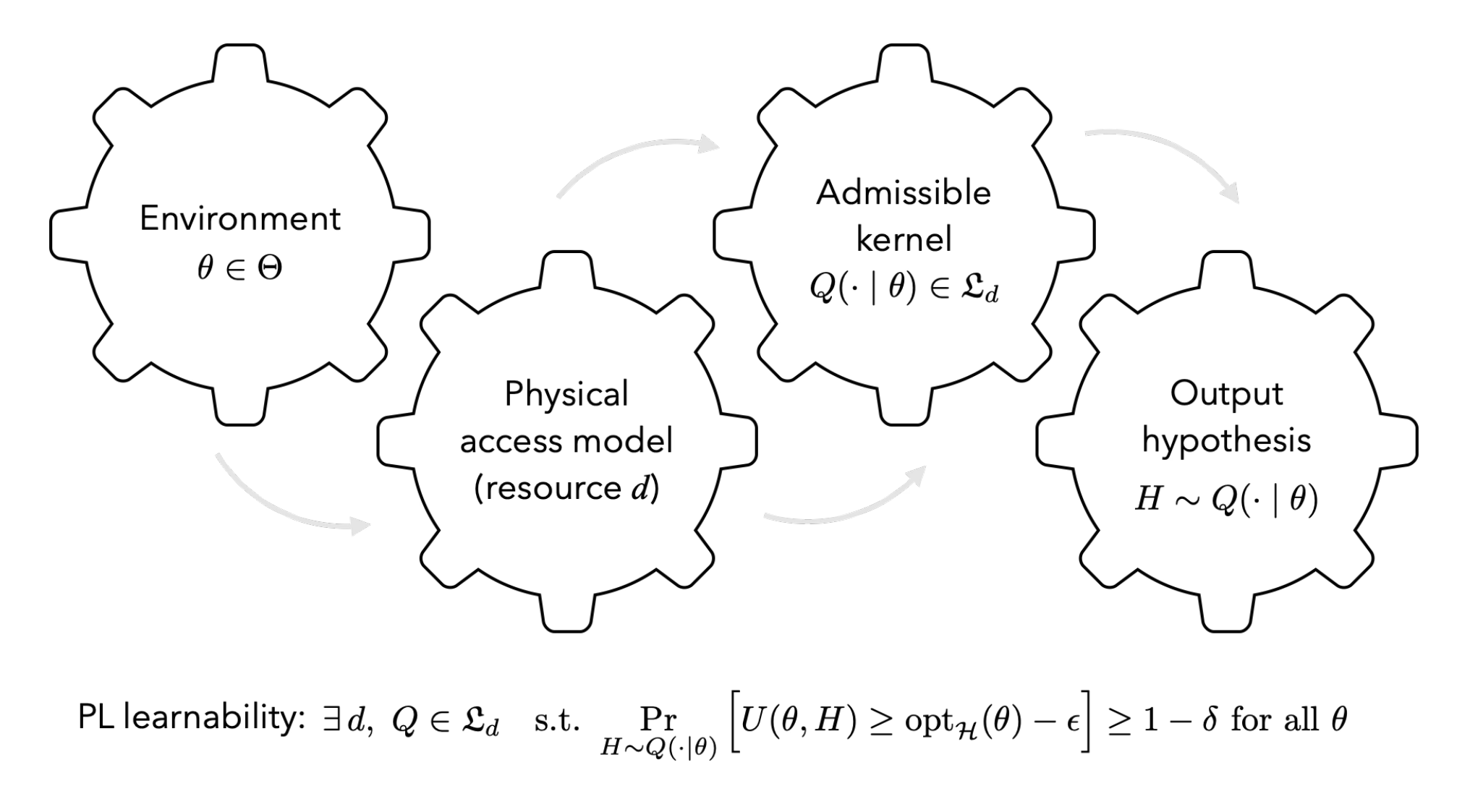}
\caption{{\bf Schematic of physics-aware learnability (PL).} A learning task is specified by $(\Theta,\mathcal{H},U)$: environments $\theta\in\Theta$, a representable (finitely nameable) hypothesis set $\mathcal{H}$, and a utility $U(\theta,h)\in[0,1]$. A physical access model fixes, for each resource budget $d$, a set $\mathfrak{L}_d$ of admissible input--output behaviors, represented as Markov kernels $Q(\cdot|\theta)\in\Delta(\mathcal{H})$. Any concrete protocol---possibly randomized, adaptive, quantum, or distributed---induces exactly one such kernel; convexity and closure under classical post-processing capture free randomization and classical relabeling of outcomes. PL asks whether there exist $d$ and $Q\in\mathfrak{L}_d$ such that, uniformly for all $\theta$, the output $H\sim Q(\cdot|\theta)$ achieves near-optimal performance with high probability, $\Pr[U(\theta,H)\ge \mathrm{opt}_{\mathcal{H}}(\theta)-\epsilon]\ge 1-\delta$, where $\mathrm{opt}_{\mathcal{H}}(\theta)=\sup_{h\in\mathcal{H}}U(\theta,h)$. This relocates the existential quantifier from arbitrary set-theoretic maps on samples to operational behaviors at the laboratory boundary: the objective $U$ (what counts as success) is held fixed while the admissible physics enters only through $\mathfrak{L}$. Different choices of $\mathfrak{L}$ recover classical i.i.d. sampling, finite-precision coarse-graining interfaces, $d$-copy quantum access (POVM-induced kernels), or finite no-signaling models; accordingly, $d$ becomes a genuine resource complexity (samples/copies/queries) that can both eliminate unphysical independence phenomena and expose genuine information-theoretic limits.}
\label{fig:schematic_PL}
\end{figure*}

In PL, the learnability is witnessed by an admissible \emph{behavior}---a kernel $Q\in\mathfrak{L}_d$---rather than an arbitrary set-theoretic map on samples, so all randomness and adaptivity are packaged into an operational input--output law. As a result, learnability is explicitly relative to an access model: it is a property of the pair $((\Theta,\mathcal{H},U),\mathfrak{L})$, with all physical assumptions confined to $\mathfrak{L}$ while the utility $U$ continues to encode what is valued. This is precisely why the Ben-David independence can disappear under realistic constraints: when $\mathfrak{L}_d$ is defined by finitary physics (e.g., POVMs on $d$ copies or functions of coarse-grained observations), PL no longer quantifies over abstract functions on an uncountable domain and the independence class becomes operationally learnable. The budget $d$ then becomes a genuine resource complexity (sample, copy, or query complexity) that measures the cost of admissible learning (See Fig.~\ref{fig:schematic_PL}).

\subsection*{Finite precision collapses undecidable EMX into a provable operational task}

Ben-David's paradox is triggered by treating continuum observations as exact and allowing learners to be arbitrary functions on an uncountable domain.

For clarity, recall the EMX objective in the present notation. Let $X$ be a domain and let $\mathcal{F}\subseteq\{0,1\}^X$ be a proper hypothesis family (we view each $f\in\mathcal{F}$ as a subset of $X$). For an unknown finitely supported distribution $P$ on $X$, define $P(f):=\mathbb{E}_{x\sim P}[f(x)]$ and $\mathrm{opt}_{\mathcal{F}}(P):=\sup_{f\in\mathcal{F}}P(f)$. An $(\epsilon,\delta)$-EMX learner observes $d$ i.i.d.\ samples $x_1,\ldots,x_d\sim P$ and outputs $f\in\mathcal{F}$ such that $P(f)\ge \mathrm{opt}_{\mathcal{F}}(P)-\epsilon$ with probability at least $1-\delta$. This is exactly the PL task with utility $U(P,f)=P(f)$; here we keep the objective fixed and modify only the access via $\pi$.

Operationally, access is mediated by a finite-precision interface, modeled as a coarse-graining map (measurement interface) $\pi:X \to Y$ with \emph{countable} $Y$ (bins, pixels, finite-resolution readouts), so the learner receives $\pi(x)$ rather than $x$. To connect this restriction to standard learnability results without weakening the EMX objective, we need an exact {\em reduction} between the continuum task on $X$ (under $\pi$-access) and the induced discrete task on $Y$. The following theorem provides this pushforward/pullback equivalence, preserving both the achieved utility and the optimum.

\begin{theorem}[Coarse-graining reduction for EMX]
\label{thm:coarse}
Let $\pi:X \to Y$ with countable $Y$. For a distribution $P$ on $X$, let $Q=\pi_{\#}P$ be the pushforward on $Y$. (Notationally, this $Q$ is a distribution on $Y$; PL output kernels appear as $Q(\cdot\mid\theta)$.) For any family $\mathcal{G} \subseteq \{0,1\}^{Y}$, define its pullback $\pi^{-1}(\mathcal{G})=\{g \circ \pi:g \in \mathcal{G}\} \subseteq \{0,1\}^{X}$. If $\mathcal{G}$ is $(\epsilon,\delta)$-EMX learnable over $\pi_{\#}\Theta := \{\pi_{\#}P: P \in \Theta\}$ by a standard learner on $Y$, then $\pi^{-1}(\mathcal{G})$ is $(\epsilon,\delta)$-learnable in PL over $\Theta$ under the access model that reveals only $\pi(x)$.
\end{theorem}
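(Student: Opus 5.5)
The plan is to build the PL protocol explicitly from the given learner on $Y$ and then check the success criterion pointwise in $P$. I would proceed in three steps: a change-of-variables identity, an equality of optima, and then admissibility plus the probability bound.

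\textbf{Step 1: the change-of-variables identity.} For every $g\in\mathcal{G}$ and every distribution $P$ on $X$,
\[
P(g\circ\pi)=\mathbb{E}_{x\sim P}[g(\pi(x))]=\mathbb{E}_{y\sim\pi_{\#}P}[g(y)]=(\pi_{\#}P)(g).
\]
This is the definition of the pushforward. Measurability is automatic here: $Y$ is countable and $P$ is finitely supported (as in the EMX setup), so all sums are finite or countable.

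\textbf{Step 2: equality of optima.} Taking suprema over $g\in\mathcal{G}$ in Step 1 gives
\[
\mathrm{opt}_{\pi^{-1}(\mathcal{G})}(P)=\sup_{g\in\mathcal{G}}P(g\circ\pi)=\sup_{g}(\pi_{\#}P)(g)=\mathrm{opt}_{\mathcal{G}}(\pi_{\#}P).
\]
One subtlety is that the map $g\mapsto g\circ\pi$ may fail to be injective. This is harmless, because the utility depends only on $g\circ\pi$.

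\textbf{Step 3: the protocol and its admissibility.} Let $A$ be the assumed standard $(\epsilon,\delta)$-EMX learner on $Y$, using $d$ samples, possibly randomized. The PL protocol does the following:
\begin{enumerate}
\item it receives $\pi(x_1),\dots,\pi(x_d)$ for i.i.d.\ $x_i\sim P$;
\item it feeds these to $A$ to obtain $g$;
\item it outputs $g\circ\pi$.
\end{enumerate}
The outputs $\pi(x_i)$ are exactly i.i.d.\ draws from $\pi_{\#}P$. So the induced kernel is
\[
Q(\cdot\mid P)=\Phi_{\#}\big(\text{law of }A\text{ on }(\pi_{\#}P)^{\otimes d}\big),\qquad \Phi(g)=g\circ\pi.
\]
This is a classical post-processing of data revealed by the $\pi$-access model. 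It therefore lies in $\mathfrak{L}_d$, using closure under classical post-processing; if $A$ randomizes, convexity handles that as well.

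\textbf{Step 4: the success bound.} By Steps 1 and 2, the event $\{P(H)\ge \mathrm{opt}_{\pi^{-1}(\mathcal{G})}(P)-\epsilon\}$ with $H=g\circ\pi$ coincides with the event $\{(\pi_{\#}P)(g)\ge\mathrm{opt}_{\mathcal{G}}(\pi_{\#}P)-\epsilon\}$. Because $\pi_{\#}P\in\pi_{\#}\Theta$, the EMX guarantee bounds the probability of the latter event below by $1-\delta$. This holds uniformly in $P\in\Theta$, which is Eq.~(\ref{eq:PL_def}).

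\textbf{Main obstacle.} The hard part is not computational but definitional. One must state precisely what the access model ``reveals only $\pi(x)$'' means, namely that $\mathfrak{L}_d$ consists of all kernels of the form $P\mapsto\int K(\cdot\mid y_{1:d})\,d(\pi_{\#}P)^{\otimes d}$. One must then check that the pullback hypothesis class is representable, i.e.\ countable. That follows because it is the image of a countable $\mathcal{G}$, or, if $\mathcal{G}$ is uncountable, because it is restricted to the range of $A$. I would make the first of these points explicit, which makes admissibility of the constructed kernel immediate.
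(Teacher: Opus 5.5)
Your proposal is correct and follows the paper's proof essentially step for step. You run the EMX learner on $(\pi(x_1),\ldots,\pi(x_d))$, output $g\circ\pi$, use $P(g\circ\pi)=(\pi_{\#}P)(g)$ to preserve both the achieved utility and the optimum, and transfer the $(\epsilon,\delta)$ guarantee unchanged. One step to tidy is admissibility. The paper takes it as built into the $\pi$-access model ("admissible by construction"), as in your final paragraph, and that is the right justification. The closure-under-post-processing axiom you invoke in Step 3 concerns relabelings $\mathcal{H}\to\Delta(\mathcal{H})$ of outputs, and convexity gives only finite mixtures, so neither is the real reason admissibility holds.
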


\begin{proof}[Proof sketch]---Run the EMX learner on the discretized sample $(\pi(x_1),\ldots,\pi(x_d))$ to output $g \in \mathcal{G}$ and return $f=g \circ \pi$. The identity $P(g \circ \pi)=Q(g)$ implies that both the achieved utility and the optimum are preserved by pushforward/pullback. Full proof: SI {\bf Theorem~8}.
\end{proof}

This exact reduction is the pivot: once $\pi$ is fixed, the continuum task collapses to an EMX instance on the countable alphabet $Y$, and ZFC-provable learnability on $Y$ transfers verbatim to an admissible PL learner on $X$. Consider the finite-subset class on $Y$, $\mathcal{F}^{Y}_{\mathrm{fin}}=\{F \subseteq Y: \text{$F$ finite}\}$. Since $Y$ is countable, weak EMX learnability is provable in ZFC and can be witnessed by explicit monotone compression schemes (SI Sec.~III and SI {\bf Corollary~1}). Pulling back to $X$ yields the finite-precision analogue of the Ben-David class,
\begin{eqnarray}
\mathcal{F}^{(\pi)}_{\mathrm{fin}} := \pi^{-1}(\mathcal{F}^{Y}_{\mathrm{fin}}) = \{\pi^{-1}(F):F\subseteq Y\ \text{finite}\}.
\label{eq:finite_precision_class}
\end{eqnarray}

\begin{corollary}[Finite-precision EMX is provably learnable]
\label{cor:finite_precision}
For any coarse-graining $\pi:X \to Y$ with countable $Y$, the class $\mathcal{F}^{(\pi)}_{\mathrm{fin}}$ is $(1/3,1/3)$-learnable in PL under the access model that reveals only $\pi(x)$.
\end{corollary}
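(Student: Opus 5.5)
The plan is to combine Theorem~\ref{thm:coarse} with a ZFC-provable weak EMX learner for the finite-subset class on a countable set. Concretely, I take $\mathcal{G}=\mathcal{F}^{Y}_{\mathrm{fin}}$, so that $\pi^{-1}(\mathcal{G})=\mathcal{F}^{(\pi)}_{\mathrm{fin}}$ by Eq.~(\ref{eq:finite_precision_class}). Theorem~\ref{thm:coarse} then reduces everything to one claim: $\mathcal{F}^{Y}_{\mathrm{fin}}$ is $(1/3,1/3)$-EMX learnable over $\pi_{\#}\Theta$ by a standard learner on $Y$. Since pushforwards of finitely supported distributions are finitely supported on $Y$, it suffices to prove this claim for all finitely supported distributions on $Y$.

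For the countable case I will give an explicit learner rather than appeal to the continuum hypothesis.
\begin{itemize}
\item Fix an enumeration $Y=\{y_1,y_2,\dots\}$ (countability is the only set-theoretic input).
\item Given samples $S=(s_1,\dots,s_d)$, let $m(S)=\max\{i: y_i\in S\}$.
\item Output the finite set $F_S=\{y_1,\dots,y_{m(S)}\}$.
\end{itemize}
This is a monotone compression-type learner: the output depends only on the single sample with the largest index. For any finitely supported $Q$ we have $\mathrm{opt}(Q)=1$, because the support itself lies in $\mathcal{F}^{Y}_{\mathrm{fin}}$. The target is therefore $Q(F_S)\ge 2/3$ with probability at least $2/3$.

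To analyze the learner, let $N$ be the smallest index with $Q(\{y_1,\dots,y_N\})\ge 2/3$. Then the tail $T=\{y_i:i\ge N\}$ has mass $Q(T)>1/3$. The learner succeeds as soon as some sample lands in $T$, since then $m(S)\ge N$. The failure probability is therefore at most $(1-Q(T))^d<(2/3)^d$. Taking $d=3$ gives $(2/3)^3=8/27<1/3$, so $d=3$ samples suffice. More generally, the argument gives $(\epsilon,\delta)$ with $d=\lceil \log(1/\delta)/\log(1/(1-\epsilon))\rceil$. This matches the SI's compression-scheme route (SI Corollary~1), but is self-contained.

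Finally, I need to confirm that the resulting protocol is admissible under the access model that reveals only $\pi(x)$. The learner sees only $(\pi(x_1),\dots,\pi(x_d))$, which are i.i.d.\ from $\pi_{\#}P$, applies a deterministic classical map, and returns $f=\pi^{-1}(F_S)$, a finitely nameable hypothesis. It therefore induces a kernel in $\mathfrak{L}_d$ by closure under classical post-processing. The identity $P(\pi^{-1}(F))=Q(F)$ from Theorem~\ref{thm:coarse} transfers both the utility and the optimum.

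The main obstacle I anticipate is conceptual rather than technical. Ordinary EMX learnability is a guarantee uniform over all distributions, and one must check that restricting the environment set to $\pi_{\#}\Theta$, with $\Theta$ the finitely supported distributions on $X$, stays within the scope of the countable-$Y$ guarantee. Because my bound is uniform over every finitely supported $Q$ on $Y$, this check goes through.
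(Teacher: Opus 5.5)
Your argument is correct, but it replaces the paper's key ingredient with a different one. The paper proves this corollary by applying Theorem~\ref{thm:coarse} with $\mathcal{G}=\mathcal{F}^{Y}_{\mathrm{fin}}$ and then invoking Corollary~\ref{cor:countable_emx}. That corollary comes from feeding the $2\to1$ initial-segment compression scheme $\eta(y)=\{y':\idx(y')\le\idx(y)\}$ of Theorem~\ref{thm:countable_two_to_one} into the compression-to-learning direction of Lemma~\ref{lem:learnability_compression_equivalence}. Concretely, the scheme is boosted to an $n\to1$ scheme, the learner outputs an empirical maximizer over the data-dependent candidates $\eta_n(x_i)$, and generalization is controlled by a leave-one-out Hoeffding bound plus a union bound.

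You bypass that lemma and analyze the quantile learner directly. Since $\mathrm{opt}=1$ for finitely supported distributions, success only requires a single sample to land in a tail of mass $>1/3$. This is exactly the learner of Theorem~\ref{thm:quantile_emx_countable} and Corollary~\ref{cor:quantile_emx_coarse_grain}, which the paper presents afterwards as a quantitative strengthening rather than as the proof of this corollary.

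Your route is shorter and self-contained. It gives $d=3$, and $d=\lceil\ln(1/\delta)/\ln(1/(1-\epsilon))\rceil$ in general. By contrast, the Hoeffding and union-bound conditions of the compression route need well over a hundred samples even at $(1/3,1/3)$.

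What the paper's route buys is structural. It places the finite-precision class at the bottom rung ($k=0$) of the compression--cardinality ladder of Theorem~\ref{thm:compression_cardinality}. This makes explicit that countability of the observation alphabet is exactly what removes the independence. The same machinery also applies to any union-bounded family with a constant-size monotone compression scheme, including families with $\mathrm{opt}<1$ and no nested initial-segment structure.

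One small imprecision: your protocol is admissible simply because it is a function of the coarse-grained record, which is how the $\pi$-access model is defined. Closure under post-processing only acts on kernels already in $\mathfrak{L}_d$, so it is not what certifies admissibility here.
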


\begin{proof}[Proof sketch]---Apply {\bf Theorem~\ref{thm:coarse}} with $\mathcal{G}=\mathcal{F}^{Y}_{\mathrm{fin}}$ and use weak EMX learnability on countable domains (SI {\bf Corollary~1}). Full proof: SI Sec.~IV.F.
\end{proof}

A quantitative message also survives the translation. Once a naming scheme (an injection $\idx:Y \to \mathbb{N}$) is part of the interface, a one-line ``quantile'' rule achieves explicit sample complexity: on a discretized sample $y_1,\ldots,y_d$, output the initial segment $\{y:\idx(y) \le \max_j \idx(y_j)\}$. For finitely supported distributions, this captures at least $1-\epsilon$ probability with probability at least $1-\delta$ using
\begin{eqnarray}
d \ \ge\ \frac{\ln(1/\delta)}{-\ln(1-\epsilon)} = O\left(\frac{1}{\epsilon}\ln\frac{1}{\delta}\right) \quad (\text{small $\epsilon$}),
\label{eq:quantile_sample}
\end{eqnarray}
as proved in SI {\bf Theorem~9}. Operationally, the undecidability disappears because the physically meaningful problem no longer quantifies over functions that depend on non-finitely-accessible distinctions. Instead, it is the discrete EMX problem induced by $\pi$.

\subsection*{Quantum access: admissible learners are POVMs and samples become copy complexity}

In quantum learning, ``data'' are physical systems~\cite{Huang2021PowerData}. If an environment prepares an unknown state $\hat{\rho}_{\theta}$ on a finite-dimensional Hilbert space $\mathcal{K}$, then a $d$-sample budget corresponds to having access to $\hat{\rho}_{\theta}^{\otimes d}$. Crucially, the no-cloning theorem forbids generating additional i.i.d. copies from fewer specimens~\cite{WoottersZurek1982,Dieks1982}. In PL terms, the admissibility is not ``any function of the sample,'' but it is ``any quantum protocol acting on $\hat{\rho}_{\theta}^{\otimes d}$ and outputting a classical label''~\cite{Ciliberto2018QML}.

This admissible set has an exact characterization.

\begin{theorem}[Quantum PL kernels are POVM-induced]
\label{thm:povm}
Fix a finite-dimensional Hilbert space $\mathcal{K}$ and a finite or countable hypothesis set $\mathcal{H}$. A kernel $Q(h\mid\hat{\rho})$ is realizable by a quantum protocol acting on $\hat{\rho}^{\otimes d}$ and outputting $h \in \mathcal{H}$ if and only if there exists a POVM $\{\hat{M}_h\}_{h \in \mathcal{H}}$ on $\mathcal{K}^{\otimes d}$, such that
\begin{eqnarray}
Q(h\mid\hat{\rho})=\tr{\hat{M}_h \hat{\rho}^{\otimes d}} \quad  (\forall h).
\label{eq:born_rule_kernel}
\end{eqnarray}
\end{theorem}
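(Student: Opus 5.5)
We prove the two directions separately. The common tool is the standard structure theorem for quantum protocols with a classical output: any such protocol is a quantum instrument, i.e., a quantum-to-classical channel. The only nontrivial step is showing that adaptivity and intermediate processing do not take us outside POVMs. We first fix what a ``quantum protocol acting on $\hat{\rho}^{\otimes d}$'' means. It is any finite (or countably branching) sequence of the following operations:
\begin{itemize}
\item appending ancillas in fixed states;
\item applying CPTP maps, possibly controlled by previously obtained classical outcomes;
\item performing measurements, whose instruments may themselves depend on earlier outcomes;
\item applying classical randomized post-processing to the recorded outcomes, producing a final label $h\in\mathcal{H}$.
\end{itemize}
No other access to $\hat{\rho}$ is allowed. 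By no-cloning, the protocol cannot obtain further copies. Formally, the input is exactly $\hat{\rho}^{\otimes d}$ and never $\hat{\rho}$ itself.

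\emph{Sufficiency ($\Leftarrow$).} Given a POVM $\{\hat{M}_h\}$ on $\mathcal{K}^{\otimes d}$, perform it on $\hat{\rho}^{\otimes d}$ and output the observed outcome. The Born rule then gives Eq.~(\ref{eq:born_rule_kernel}) directly. For countable $\mathcal{H}$ we need the POVM to be well defined: the series $\sum_h \hat{M}_h=\hat{1}$ converges (weakly, hence in norm in finite dimension), so the outcome probabilities $\tr{\hat{M}_h\hat{\rho}^{\otimes d}}$ are nonnegative and sum to one. This also shows the resulting kernel is a valid element of $\Delta(\mathcal{H})$.

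\emph{Necessity ($\Rightarrow$).} We work in the Heisenberg picture and induct over the steps of the protocol. The claim is that after each step, the joint probability of the classical transcript $t$ recorded so far has the form $\tr{\hat{E}_t\hat{\rho}^{\otimes d}}$, where the $\hat{E}_t\ge 0$ and $\sum_t\hat{E}_t=\hat{1}$.
\begin{itemize}
\item \emph{Base case.} Before any measurement there is a single trivial transcript with $\hat{E}=\hat{1}$.
\item \emph{Inductive step.} Suppose the current quantum state, conditioned on transcript $t$, is $\mathcal{I}_t(\hat{\rho}^{\otimes d})$ for a completely positive trace-nonincreasing map $\mathcal{I}_t$. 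Applying an adaptively chosen instrument $\{\mathcal{J}^{(t)}_k\}_k$ yields new operators
\[
\hat{E}_{t,k}=\mathcal{I}_t^{\dagger}\bigl(\mathcal{J}^{(t)\dagger}_k(\hat{1})\bigr).
\]
Positivity of $\hat{E}_{t,k}$ follows from complete positivity. Normalization, $\sum_k\hat{E}_{t,k}=\mathcal{I}_t^{\dagger}(\hat{1})$, follows from trace preservation of $\sum_k\mathcal{J}_k^{(t)}$. Appending ancillas and applying unitaries or channels is absorbed into $\mathcal{I}_t$ via Stinespring dilation and partial trace, both of which have CP adjoints.
\end{itemize}
The final classical post-processing is a stochastic matrix $p(h|t)$, and we set
\[
\hat{M}_h:=\sum_t p(h|t)\hat{E}_t.
\]
These operators are positive and sum to $\hat{1}$, and linearity of the trace gives Eq.~(\ref{eq:born_rule_kernel}). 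Note that $\hat{M}_h$ is independent of $\hat{\rho}$: every step of the protocol is fixed in advance, up to its dependence on classical outcomes. This is exactly where ``no further copies'' enters the argument.

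\emph{Expected obstacle.} The main obstacle is the case of unbounded or countably branching adaptive protocols, including protocols whose stopping time is random. There the transcript set may be infinite and the induction does not terminate. We plan to handle it as follows.
\begin{itemize}
\item Define $\hat{E}_t$ for every finite halting transcript, and include explicitly a ``non-halting'' mass, which contributes zero probability if the protocol halts almost surely.
\item Show that the partial sums of $\hat{M}_h$ are increasing and bounded by $\hat{1}$.
\item Conclude, since $\mathcal{K}^{\otimes d}$ is finite dimensional, that these partial sums converge in norm to positive operators.
\item Apply monotone convergence to the probabilities to obtain Eq.~(\ref{eq:born_rule_kernel}) in the limit.
\end{itemize}
Normalization of the limiting POVM is equivalent to the protocol halting with probability one for every input state, which we take to be part of the definition of ``outputting $h$''. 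As a corollary, we will note that the set of such kernels is convex and closed under classical post-processing, matching the definition of an admissible protocol family.
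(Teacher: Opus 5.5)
Your proof is correct. Its core mechanism is the same as the paper's: positivity of each $\hat{M}_h$ comes from complete positivity of a Heisenberg-picture adjoint, and $\sum_h \hat{M}_h=\hat{1}$ comes from trace preservation making that adjoint unital.

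Where you differ is in the decomposition. The paper asserts, ``without loss of generality,'' that the whole protocol is a single CPTP map $\hat{\Phi}$ into a classical register spanned by $\{\ket{h}\}$. It then reads off $\hat{M}_h=\hat{\Phi}^\dagger(\ketbra{h}{h})$ in one line. Your induction over adaptively chosen instruments, $\hat{E}_{t,k}=\mathcal{I}_t^\dagger\bigl(\mathcal{J}_k^{(t)\dagger}(\hat{1})\bigr)$, followed by the stochastic post-processing $\hat{M}_h=\sum_t p(h|t)\hat{E}_t$, is in effect a proof of that reduction. It shows explicitly that ancillas, classical control and adaptivity never take you outside the set of POVM-induced kernels.

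Your handling of countably branching or randomly stopped protocols also covers a point the paper leaves implicit. You take monotone partial sums, with every limit computed on the finite-dimensional $\mathcal{K}^{\otimes d}$. By contrast, for countable $\mathcal{H}$ the paper's register is infinite-dimensional, and passing $\sum_h$ through $\hat{\Phi}^\dagger$ relies tacitly on continuity of the dual map. Your claimed equivalence between normalization and almost-sure halting also holds in the weaker reading where only inputs $\hat{\rho}^{\otimes d}$ are considered. Taking $\hat{\rho}$ maximally mixed makes $\hat{\rho}^{\otimes d}$ full rank, so a positive deficit $\hat{1}-\sum_h\hat{M}_h$ with zero expectation there must vanish.

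The paper's route buys brevity. Yours buys an explicit operational account of what ``quantum protocol acting on $\hat{\rho}^{\otimes d}$'' means, at the cost of length.
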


\begin{proof}[Proof sketch]---Any quantum-to-classical protocol can be represented as a CPTP map to a classical register, and the adjoint of this map sends basis projectors to POVM elements; conversely, measuring a POVM implements the kernel via the Born rule. Full proof: SI {\bf Theorem~7}.
\end{proof}

{\bf Theorem~\ref{thm:povm}} highlights a methodological contrast with set-theoretic learners. In PL, the admissibility is explicit and grounded in physics. Moreover, the sample-complexity parameter $d$ becomes \emph{copy complexity}: a hard physical resource due to no-cloning~\cite{Bang2026RunLength}.

This shift also exposes intrinsically physical limits. Consider binary state identification, where $\theta \in \{0,1\}$ indexes one of two states $\hat{\rho}_0, \hat{\rho}_1$ and the hypothesis is a guess $h \in \{0,1\}$. Under $d$-copy access, any protocol is a two-outcome POVM $\{\hat{M}_0, \hat{M}_1\}$ on $\mathcal{K}^{\otimes d}$. A worst-case Helstrom inequality bounds the simultaneous success probability for the two environments~\cite{Helstrom1976,Watrous2018}:
\begin{eqnarray}
\tr{\hat{M}_0 \hat{\rho}_0^{\otimes d}} + \tr{\hat{M}_1 \hat{\rho}_1^{\otimes d}} \le 1 + \frac{1}{2}\norm{\hat{\rho}_0^{\otimes d} - \hat{\rho}_1^{\otimes d}}_1.
\label{eq:helstrom}
\end{eqnarray}
For non-orthogonal pure states with overlap $\gamma \in (0,1)$, one has $\norm{\hat{\rho}_0^{\otimes d} - \hat{\rho}_1^{\otimes d}}_1 = 2\sqrt{1-\gamma^{2d}}$ (SI {\bf Lemma~3}), implying that to achieve small error probability $\delta$ uniformly over $\theta$, one needs $d=\Omega(\log(1/\delta)/\abs{\log\gamma})$ copies (SI {\bf Corollary~4}). In other words, PL clarifies a second kind of impossibility that has nothing to do with set-theoretic independence: some tasks are limited by the geometry of state space and cannot be made reliable with finitely many copies~\cite{Huang2020Predicting,Zhao2024Learning}.

\subsection*{Operational constraints can restore decidability in finite models}

The EMX paradox is a logical independence statement about unconstrained learners on infinite domains. PL suggests a different and operationally relevant meaning of deciding learnability: given a finite description of the environment set, hypothesis set and admissible physics, can we algorithmically decide whether a protocol exists that meets a target guarantee?

In many physically motivated scenarios, the admissible behaviors form a convex set described by finitely many constraints. For example, no-signaling correlations in finite input-output alphabets are defined by linear equalities and inequalities~\cite{PopescuRohrlich1994,Barrett2005,Brunner2014Bell}. The quantum admissibility with fixed $d$ is semidefinite-representable via POVM constraints ({\bf Theorem~\ref{thm:povm}}). In such settings, PL feasibility becomes explicit convex feasibility.

\begin{theorem}[Decidability of PL feasibility in finite operational models]
\label{thm:decide}
Let $\Theta$ and $\mathcal{H}$ be finite, and fix $\epsilon,\delta \in (0,1)$ and a budget $d$. Suppose $\mathfrak{L}_d$ is specified as either (i) a rational polytope in the kernel variables $Q(h\mid\theta)$ (including normalization and non-negativity), or (ii) a quantum $d$-copy model in which $Q(h\mid\theta) = \tr{\hat{M}_h \hat{\rho}_\theta^{\otimes d}}$ for some POVM $\{\hat{M}_h\}$. Then, deciding whether there exists $Q \in \mathfrak{L}_d$ satisfying Eq.~(\ref{eq:PL_def}) reduces to (i) linear feasibility, or (ii) semidefinite feasibility, respectively.
\end{theorem}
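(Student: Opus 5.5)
The plan is to rewrite the PL condition in Eq.~(\ref{eq:PL_def}) so that it becomes linear in the kernel variables. Since $\Theta$ and $\mathcal{H}$ are finite, $\mathrm{opt}_{\mathcal{H}}(\theta)=\max_h U(\theta,h)$ is attained. For each $\theta$ we define the finite ``good set''
\[
G_\theta:=\{h\in\mathcal{H}: U(\theta,h)\ge \mathrm{opt}_{\mathcal{H}}(\theta)-\epsilon\}.
\]
This set is a fixed combinatorial datum of the instance and is computable whenever $U$ is given by rational (or otherwise comparable) entries. The success probability is then $\Pr_{H\sim Q(\cdot\mid\theta)}[H\in G_\theta]=\sum_{h\in G_\theta}Q(h\mid\theta)$. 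Eq.~(\ref{eq:PL_def}) is therefore equivalent to the finite family of linear inequalities
\[
\sum_{h\in G_\theta}Q(h\mid\theta)\ge 1-\delta \qquad (\theta\in\Theta).
\]
The key observation is that the nonlinear-looking quantifier ``with probability at least $1-\delta$ the utility is $\epsilon$-optimal'' collapses to linear constraints once the $G_\theta$ are fixed.

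In case (i), $\mathfrak{L}_d$ is a rational polytope $\{Q: AQ\le b,\ CQ=e\}$ in the $|\Theta||\mathcal{H}|$ variables $Q(h\mid\theta)$. We add the $|\Theta|$ inequalities above, with rational coefficients when $\delta$ is rational. Existence of $Q\in\mathfrak{L}_d$ satisfying Eq.~(\ref{eq:PL_def}) is then exactly nonemptiness of a rational polyhedron, which is a linear feasibility problem. This problem is decidable exactly, for instance by Fourier--Motzkin elimination or by the polynomial-time ellipsoid method over the rationals.

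In case (ii), we invoke Theorem~\ref{thm:povm}: realizable kernels are exactly $Q(h\mid\theta)=\tr{\hat{M}_h\hat{\rho}_\theta^{\otimes d}}$ for a POVM $\{\hat{M}_h\}_{h\in\mathcal{H}}$ on $\mathcal{K}^{\otimes d}$. The decision variables become the Hermitian matrices $\hat{M}_h$, subject to the semidefinite and affine constraints $\hat{M}_h\succeq 0$ and $\sum_h\hat{M}_h=\mathbb{1}$. The success constraints become
\[
\tr{\Big(\sum_{h\in G_\theta}\hat{M}_h\Big)\hat{\rho}_\theta^{\otimes d}}\ge 1-\delta,
\]
which are affine in the variables because $\hat{\rho}_\theta^{\otimes d}$ is a fixed matrix. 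Together these constraints form a semidefinite feasibility problem. If the POVM ranges over all of $\mathcal{H}$ and one wants the full convex set closed under post-processing, we note that post-processing of a POVM is again a POVM, so no extra variables are needed.

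The main obstacle is the meaning of ``decidable'' in case (ii). Exact SDP feasibility is not known to be polynomial-time, and interior-point methods only give approximate answers. The statement claims only a \emph{reduction} to semidefinite feasibility, and I will prove exactly that. If exact decidability is required, I would appeal to the fact that SDP feasibility with rational or algebraic data is a first-order sentence over the reals, which is decidable by Tarski--Seidenberg quantifier elimination. This requires the entries of $\hat{\rho}_\theta$, $U$, $\epsilon$, and $\delta$ to be given as rational or real-algebraic numbers. A secondary subtlety is computing the $G_\theta$, which needs exact comparisons of the utility values, and I would assume rational utilities for this.
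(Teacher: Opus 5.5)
Your proposal is correct and follows essentially the same route as the paper: define the $\epsilon$-optimal sets $\mathcal{G}_\theta(\epsilon)$, rewrite Eq.~(\ref{eq:PL_def}) as the linear constraints $\sum_{h\in\mathcal{G}_\theta(\epsilon)}Q(h\mid\theta)\ge 1-\delta$, and append these either to the polytope description (linear feasibility) or to the POVM constraints on the matrix variables $\hat{M}_h$, where the trace pairing keeps them linear (semidefinite feasibility). Your extra remarks go slightly beyond the paper, which claims only SDP decidability to arbitrary numerical precision: exact decidability via Fourier--Motzkin or the ellipsoid method for rational LPs, Tarski--Seidenberg for SDPs with algebraic data, and the need for exact utility comparisons to compute $\mathcal{G}_\theta(\epsilon)$.
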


\begin{proof}[Proof sketch]---For each $\theta$, define the $\epsilon$-optimal set $\mathcal{G}_\theta(\epsilon):=\{h\in\mathcal{H}:U(\theta,h)\ge \mathrm{opt}_{\mathcal{H}}(\theta)-\epsilon\}$. Then Eq.~(\ref{eq:PL_def}) is equivalent to the linear constraints $\sum_{h\in\mathcal{G}_\theta(\epsilon)} Q(h\mid\theta)\ge 1-\delta$ for all $\theta$ (since $\Theta$ and $\mathcal{H}$ are finite). Under (i), $\mathfrak{L}_d$ is also described by linear equalities/inequalities, so the decision problem is linear feasibility (LP). Under (ii), the admissibility is captured by linear matrix inequalities for $\{\hat{M}_h\}$, while the PL constraints remain linear in $\{\hat{M}_h\}$ through the trace pairing, yielding semidefinite feasibility (SDP). Full proof: SI {\bf Theorems~11--12}.
\end{proof}

The message is not that physics ``solves'' set theory. Rather, once the admissible interface is specified by finite constraints, the relevant question becomes the device feasibility: does a point in a finite-dimensional convex set satisfy a finite set of performance inequalities? In this regime, learnability becomes decidable (and often efficiently computable) by standard convex optimization~\cite{Boyd2004Convex}.

\section*{Discussion}

Ben-David \emph{et al.} showed that, in EMX, learnability of a simple class can be independent of the ZFC axioms~\cite{BenDavid2019}. PL reframes this result as a warning label about a particular modelling choice: if we insist on treating a learner as an arbitrary set-theoretic function on an uncountable domain, then the statement ``there exists a learner'' can hinge on axioms external to the learning problem. On the other hand, the operational question that an experimenter asks is different: given a concrete interface to the world, does there exist a protocol consistent with that interface that achieves the target guarantee?

PL clarifies this distinction by moving the existential quantifier to the laboratory boundary. Under finite precision, the continuum EMX instance collapses to a discrete problem induced by the coarse-graining map $\pi$, and the corresponding operational task becomes provably learnable with explicit sample complexity ({\bf Theorem~\ref{thm:coarse}}, {\bf Corollary~\ref{cor:finite_precision}}, and Eq.~(\ref{eq:quantile_sample})). This does not refute the set-theoretic independence but localizes it. Independence attaches to the non-operational limit in which the interface is the identity on an uncountable set and the learner is allowed to depend on distinctions no finite record can resolve.

At the same time, PL also exposes the impossibility statements that are genuinely physical rather than logical. In quantum settings, no-cloning turns the sample budget into an irreducible resource, and the admissible learners reduce exactly to POVMs on $d$ copies ({\bf Theorem~\ref{thm:povm}})~\cite{Ciliberto2018QML}. The basic identification tasks then exhibit the copy-complexity lower bounds enforced by state geometry (Eq.~(\ref{eq:helstrom})). These bounds are not artefacts of set theory; they reflect fundamental information constraints.

Finally, PL changes what it means to decide learnability. Once the admissibility is described by finitely many operational constraints, existence questions become concrete feasibility problems over convex sets. For finite no-signaling models this is linear programming; for finite-dimensional quantum models it is semidefinite programming ({\bf Theorem~\ref{thm:decide}}). This suggests a pragmatic outlook: for many physically specified learning scenarios, the right question is not whether learnability is provable in a fixed axiom system, but whether a device meeting a specification exists within an explicit admissible set.

Beyond the EMX paradox, PL points to a broader program for machine intelligence. Data access is part of the learning problem. Coarse-graining is not merely a nuisance but a design variable that trades resolution against learnability and resource cost. Quantum access is not a minor implementation detail. Rather, it reshapes admissible inference and introduces new resources, such as copy complexity. More generally, physical constraints can carve out the operational regimes in which different physics-dependent ``dimension-like'' invariants may exist, not as absolute characterizations of learnability across all mathematical models, but as robust descriptors within constrained access models~\cite{BangChoJae2026}.

\begin{quote}
``\emph{Physics decides what is learnable}''
\end{quote}
In PL, this slogan is not rhetorical: it is a reminder that ``learnability'' is always a statement about a task \emph{and} an interface, and that changing the interface can both eliminate unphysical paradoxes and expose genuine information constraints.

\section*{Methods}

\subsection*{Where to find full proofs (Supplementary Information map)}

The complete proofs and extended background are provided in the Supplementary Information (SI), which is the extended manuscript accompanying this main text. For transparency, we indicate where the complete arguments appear in the SI.
\begin{itemize}
\item Coarse-graining reduction ({\bf Theorem~\ref{thm:coarse}}): SI Sec.~IV.E and SI {\bf Theorem~8}.
\item Finite-precision EMX learnability and explicit quantile learner ({\bf Corollary~\ref{cor:finite_precision}}, Eq.~(\ref{eq:quantile_sample})): SI Secs.~III.B, IV.F, V.A and SI {\bf Theorem~9} / {\bf Corollaries~2--3}.
\item POVM characterization of quantum admissibility ({\bf Theorem~\ref{thm:povm}}): SI Sec.~IV.D and SI {\bf Theorem~7}.
\item Copy-complexity lower bounds for state identification (Eq.~(\ref{eq:helstrom})): SI Sec.~V.B, SI {\bf Theorem~10} and SI {\bf Corollary~4}.
\item Decidability in finite polytope/SDP models ({\bf Theorem~\ref{thm:decide}}): SI Sec.~V.C and SI {\bf Theorems~11--12}.
\end{itemize}

\subsection*{Conventions}

Following Ref.~\cite{BenDavid2019}, the EMX statements are formulated for finitely supported distributions, avoiding measurability issues over arbitrary subsets.

\section*{Acknowledgement}

This work was supported by the Ministry of Science, ICT and Future Planning (MSIP) by the National Research Foundation of Korea (RS-2024-00432214, RS-2025-03532992, and RS-2025-18362970) and the Institute of Information and Communications Technology Planning and Evaluation grant funded by the Korean government (RS-2019-II190003, ``Research and Development of Core Technologies for Programming, Running, Implementing and Validating of Fault-Tolerant Quantum Computing System''), the Korean ARPA-H Project through the Korea Health Industry Development Institute (KHIDI), funded by the Ministry of Health \& Welfare, Republic of Korea (RS-2025-25456722). We acknowledge the Yonsei University Quantum Computing Project Group for providing support and access to the Quantum System One (Eagle Processor), which is operated at Yonsei University.


%


\clearpage
\newpage
\onecolumngrid

\makeatletter
\let\addcontentsline\origaddcontentsline
\makeatother

\part{Supplementary Information}

This Supplementary Information is intended to serve as a companion to our main paper, ``Physics Aware Learnability from Set Theoretic Independence to Operational Constraints''. It is written so that readers can follow the motivation, definitions, and technical steps without having to fill in missing details. We provide intermediate calculations, extended proofs, and additional explanations that support the compact presentation in the main text.

\tableofcontents

\section{Introduction}

A central aspiration of learning theory is to delineate, with mathematical precision, which inference tasks are possible and/or which are not from finite data. This aspiration has both conceptual and practical force: on the conceptual side, it seeks a principled boundary between generalization and mere curve fitting; on the practical side, it guides the design of algorithms by isolating structural properties of hypothesis classes and data-generating processes that control statistical complexity. Nowhere is this program more successful than in binary classification, where the PAC/VC framework yields a strikingly sharp characterization of learnability and sample complexity in terms of a single combinatorial invariant~\cite{Valiant1984,Vapnik1971,Blumer1989,Shalev2014}. The existence of such a characterization has shaped the modern view that ``learnability'' is a mathematically well-posed property of a learning problem, and that one can hope for similarly robust characterizations in richer learning settings.

However, the robustness of learnability as a mathematical notion becomes subtle once one departs from the classical classification paradigm. Ben-David \emph{et al.} demonstrated that, for a natural generalization of PAC learning known as the ``estimating the maximum (EMX)'' problem, there exist remarkably simple hypothesis families for which the statement ``this class is learnable'' is independent of the standard axioms of mathematics (that is, of Zermelo--Fraenkel set theory with the axiom of choice, or ZFC set theory)~\cite{BenDavid2019}. In particular, in one model of ZFC, the class of all finite subsets of $[0,1]$ is learnable in the EMX sense, while in another model of ZFC, it is not. This is not merely a technical pathology tied to exotic constructions: the proof leverages the independence of statements about the cardinality of the continuum, and it implies that there can be no ``dimension-like'' quantity of finite character whose finiteness captures EMX learnability in full generality. The unsettling aspect of this phenomenon is that it undermines an implicit methodological assumption that once a learning problem has been formalized in a standard mathematical framework, the question of learnability is a determinate mathematical fact.

A closer look reveals that the source of the independence is not ``learning'' per se, but rather the particular formalization of the learning that abstracts away implementation. The standard definitions typically represent a learner as an arbitrary ``function'' from finite samples to hypotheses, i.e.,
\begin{eqnarray}
G:\bigcup_{m\in\mathbb{N}}\mathcal{Z}^m \to \mathcal{H},
\label{eq:learner_functional_form}
\end{eqnarray}
where $\mathcal{Z}$ denotes the sample (data) space, $\mathcal{H}$ the hypothesis class, and $G$ the learner that maps any finite sample sequence to a hypothesis. All such functions are quantified when defining the learnability. This choice is deliberate: by treating the learner as an unconstrained function, one separates statistical questions from computational considerations. This separation is instrumental in classical results such as the VC characterization. Yet, as emphasized by Ben-David \emph{et al.} in Ref.~\cite{BenDavid2019}, the existence questions for functions over infinite domains are logically delicate, and can depend on set-theoretic axioms in a way that has no operational counterpart. From the standpoint of learning as an empirical science, this suggests that a purely set-theoretic notion of the learnability may be too permissive: it allows ``learners'' that have no interpretation as realizable procedures, and it attributes to them a form of existence whose truth can hinge on axioms that are not fixed by the learning problem itself.

This observation motivates a shift in perspective. Every learner is ultimately a physical system that acquires, stores, and processes information. Even in classical settings, this immediately imposes constraints of \emph{effective realizability}: the learner must be representable by a finite description and executable by a feasible physical process. In quantum settings, the need to incorporate physical constraints is even more pronounced, because the nature of data and the admissible operations are dictated by fundamental principles. For example, when training data are quantum states, the no-cloning principle rules out the routine classical idealization that one may freely duplicate an unknown specimen~\cite{WoottersZurek1982,Dieks1982}. More generally, quantum measurement disturbs the system and constrains what information can be extracted from finitely many copies~\cite{NielsenChuang2010,Watrous2018,Bang2026RunLength} (Additionally, see Refs.~\cite{Biamonte2017QML,Ciliberto2018QML,Huang2021PowerData} for quantum-machine-learning overviews that emphasize data access and measurement as central constraints. See also Ref.~\cite{Sweke2021} for a rigorous comparison of quantum and classical efficient learnability for discrete distributions in the PAC framework). In distributed and relativistic scenarios, no-signaling constraints limit the flow of information and the admissible forms of correlation that a learning protocol can exploit~\cite{PopescuRohrlich1994,Barrett2005,BangChoJae2026}. These are not peripheral implementation details but structural constraints on what ``access to data'' means in the first place~\cite{Barrett2005,Watrous2018}.

The aim of this work is to articulate a notion of \emph{physics-aware learnability} that treats learnability as an operational property relative to a specified set of admissible physical processes. Conceptually, this amounts to replacing the unconstrained quantification in Eq.~(\ref{eq:learner_functional_form}) by a restricted class $\mathfrak{L}_{\rm phys}$ of learners that are implementable within a given physical theory and resource regime,
\begin{eqnarray}
G \in \mathfrak{L}_{\rm phys},
\label{eq:physical_learner_constraint}
\end{eqnarray}
and then asking whether there exists such a learner that achieves the desired generalization guarantee. The intent is not to ``resolve'' set-theoretic independence by fiat, nor to assert that physical principles decide cardinal arithmetic. Rather, the goal is to align the formal semantics of learnability with the operational meaning of learning: if learnability is meant to capture what can be achieved from finite observations by an admissible procedure, then admissibility---computational and physical---must be part of the definition.

This physics-aware viewpoint offers a principled way to reinterpret independence phenomena. When a learnability question is undecidable in the ZFC sense, one should ask whether the undecidability persists after restricting attention to physically realizable learners, or whether it is an artifact of allowing non-constructive, non-operational objects. At the same time, importing physical constraints does not trivialize the theory: it can introduce new, genuinely operational barriers (for instance, limitations induced by no-cloning or no-signaling) and can shift the frontier from set-theoretic existence to computability, complexity, and information-theoretic feasibility. In this sense, physics-aware learnability is not a minor refinement of existing theory, but a foundational stance: it treats the mathematical study of learning as inseparable from the physical laws that govern information.

\section{Background: Learnability Beyond PAC, and the Ben-David Undecidability Phenomenon}

This section recalls the classical PAC/VC paradigm and then introduces the EMX (estimating the maximum) problem as a canonical ``beyond-PAC'' learning task in which the familiar methodology of characterizing learnability by a finite combinatorial invariant can fail in a dramatic, set-theoretic sense. The technical core of the Ben-David \emph{et al.} phenomenon is an equivalence between (weak) EMX learnability and the existence of a particular compression primitive, which in turn can be tied to cardinality statements about the underlying domain. Here we therefore present, in a self-contained manner, the EMX formulation, the notion of monotone compression, and the central learnability--compression equivalence with full proofs.

\subsection{PAC learnability and the VC paradigm}

We begin by recalling the classical PAC formulation, both to fix notation and to highlight the feature that will fail in the EMX setting: in PAC learning, ``learnability'' can be characterized by a \,\emph{finite}\, combinatorial witness.

Let $\mathcal{X}$ be an instance space and let $\mathcal{H}\subseteq\{0,1\}^{\mathcal{X}}$ be a hypothesis class. Data are drawn i.i.d. from an unknown distribution $\mathcal{D}$ on $\mathcal{X}\times\{0,1\}$. For a hypothesis $h\in\mathcal{H}$, its (population) classification error is
\begin{eqnarray}
\operatorname{err}_{\mathcal{D}}(h) := \Pr_{(x,y)\sim\mathcal{D}}\big[h(x)\neq y\big].
\label{eq:pac_population_error}
\end{eqnarray}
In the realizable case, one assumes there exists a target $h^{\star}\in\mathcal{H}$ with $\operatorname{err}_{\mathcal{D}}(h^{\star})=0$. A learning rule takes a labeled sample $S=((x_1,y_1),\ldots,(x_m,y_m)) \sim \mathcal{D}^m$ and outputs $\widehat{h}=G(S) \in \mathcal{H}$. The PAC guarantee asks that, for all $\epsilon,\delta \in (0,1)$, there exists a sample size $m=m(\epsilon,\delta)$ such that
\begin{eqnarray}
\Pr_{S\sim\mathcal{D}^m}\big[\operatorname{err}_{\mathcal{D}}(G(S))\le \epsilon\big] \ge 1-\delta \quad \text{for all realizable}~\mathcal{D}.
\label{eq:pac_realizable_guarantee}
\end{eqnarray}

The central structural invariant in this theory is the Vapnik--Chervonenkis (VC) dimension. A finite set $\{x_1,\ldots,x_k\} \subseteq \mathcal{X}$ is shattered by $\mathcal{H}$ if for every labeling vector $(b_1,\ldots,b_k) \in \{0,1\}^k$ there exists $h \in \mathcal{H}$ such that $h(x_i)=b_i$ for all $i$. The VC dimension $\mathrm{VC}(\mathcal{H})$ is the largest $k$ that can be shattered (or $\infty$ if no such finite maximum exists)~\cite{Vapnik1971,Blumer1989,Shalev2014}. The fundamental theorem of PAC learning states that $\mathcal{H}$ is PAC learnable if and only if $\mathrm{VC}(\mathcal{H})<\infty$; moreover, finite VC dimension yields explicit sample complexity bounds for uniform convergence and empirical risk minimization~\cite{Valiant1984,Vapnik1971,Blumer1989,Shalev2014}.

Two features of this paradigm are worth emphasizing. First, both the definition of VC dimension and the mechanism that turns it into generalization bounds rely on finitary witnesses (finite shattered sets, finite growth bounds). Second, the learner is treated as an arbitrary map from samples to hypotheses (ignoring computational issues), yet the learnability criterion remains mathematically robust because the needed existence statements can be certified via finitary combinatorics.

A natural program is to extend PAC-style reasoning beyond binary classification: keep the meta-structure of ``choose a hypothesis that performs well under an unknown distribution from i.i.d. samples,'' but replace the classification error by other performance objectives. One might expect that an appropriate VC-like finite invariant continues to characterize learnability in such general settings. The EMX framework of Ben-David \emph{et al.} shows that this expectation can fail in a stark way~\cite{BenDavid2019}: even for very simple hypothesis families, the statement ``this class is learnable'' can become sensitive to set-theoretic axioms.

\subsection{The EMX problem}

Let $\mathcal{X}$ be a domain set and let $\mathcal{F} \subseteq \{0,1\}^{\mathcal{X}}$ be a family of $\{0,1\}$-valued functions. We identify each $f\in\mathcal{F}$ with the subset $\{x \in \mathcal{X}:f(x)=1\}$ and conversely. Let $P$ be a probability distribution over $\mathcal{X}$. For $f\in\mathcal{F}$, we write
\begin{eqnarray}
\mathbb{E}_{x\sim P}\big[f(x)\big] = P(f) := P\big(\{x:f(x)=1\}\big).
\label{eq:expectation_as_measure}
\end{eqnarray}
The EMX objective is to output, from i.i.d.\ samples, a function in $\mathcal{F}$ whose expectation under $P$ is close to the best achievable in $\mathcal{F}$. Formally, define
\begin{eqnarray}
\textrm{opt}_{\mathcal{F}}(P) := \sup_{f \in \mathcal{F}} P(f).
\label{eq:opt_emx}
\end{eqnarray}

A simple motivating example of EMX is the ``ads and website visitors'' problem. Following the motivating discussion in Ref.~\cite{BenDavid2019}, imagine a website whose potential visitors form the domain $\mathcal{X}$. Each advertisement $A$ in a fixed pool targets a population $F_A \subseteq \mathcal{X}$ (sports fans, travelers, etc.), and the site owner observes a sample of visitors drawn i.i.d. from an unknown distribution $P$ over $\mathcal{X}$. The objective is to pick an ad whose target population has near-maximal visit probability. Identifying each $A$ with its target set $F_A$, the pool becomes a family $\mathcal{F}=\{F_A\}$, and choosing the ``best'' ad becomes exactly the EMX task: output some $F \in \mathcal{F}$ with $P(F)$ close to $\sup_{F'\in\mathcal{F}}P(F')$ based only on the sample.

A direct measure-theoretic formulation would require measurability of each $f$ with respect to $P$. Following Ref.~\cite{BenDavid2019}, and because the undecidability phenomenon we discuss is set-theoretic rather than analytic, we adopt the standard simplification that all distributions under consideration are \emph{finitely supported} and are defined over the $\sigma$-algebra of all subsets of $\mathcal{X}$. Under this convention, the measurability is automatic and $P(f)$ is well-defined for every $f\subseteq \mathcal{X}$.

A (proper) learner is represented as a function that maps a finite sample sequence to an element of $\mathcal{F}$,
\begin{eqnarray}
G:\bigcup_{n\in\mathbb{N}}\mathcal{X}^n \to \mathcal{F}.
\label{eq:emx_learner_map}
\end{eqnarray}
Properness is crucial in EMX: if outputs outside $\mathcal{F}$ were allowed, the all-ones function would trivially achieve $\textrm{opt}_{\mathcal{F}}(P)=1$ whenever it is admissible, collapsing the problem.

\begin{definition}[EMX learnability]
Fix $\epsilon,\delta\in(0,1)$. We say that $G$ is an $(\epsilon,\delta)$-EMX learner for $\mathcal{F}$ if there exists an integer $d=d(\epsilon, \delta)$ such that for every finitely supported distribution $P$ over $\mathcal{X}$,
\begin{eqnarray}
\Pr_{S\sim P^d}\left[ P\big(G(S)\big) \ge \textrm{opt}_{\mathcal{F}}(P)-\epsilon \right] \ge 1-\delta.
\label{eq:emx_definition}
\end{eqnarray}
We say that $\mathcal{F}$ is EMX learnable if such a learner exists for all $\epsilon,\delta$.
\end{definition}

In the independence phenomenon of Ref.~\cite{BenDavid2019}, the emphasis is on a constant-parameter ``weak'' learnability notion, such as $(\epsilon, \delta)=(1/3,1/3)$, because the undecidability already arises at this coarse level.

\subsection{Monotone compression schemes}

The compression schemes are a classical bridge between learnability and finitary structure~\cite{LittlestoneWarmuth1986}. The version central to~\cite{BenDavid2019} is a monotone compression scheme tailored to families of finite sets. It is helpful to think of monotone compression as a two-stage game protocol. A ``compressor'' (Alice) sees $m$ sample points $x_1,\ldots,x_m$ that all lie in some unknown target set $F\in\mathcal{F}$ and is allowed to keep only $d$ of them. A ``reconstructor'' (Bob) then maps the retained $d$ points to a hypothesis $\eta(\cdot)\in\mathcal{F}$ that must contain \emph{all} original sample points.
The key requirement is \emph{monotonicity}: reconstruction outputs a \emph{superset} of the sample, not just a set that is consistent with it. This is exactly the form of compression that connects cleanly to EMX, where the objective is to cover as much probability mass as possible.

Here, we assume that $\mathcal{F}$ is a family of finite subsets of $\mathcal{X}$, and we consider samples as sequences $(x_1,\ldots,x_m) \in \mathcal{X}^m$ with the understanding that the underlying set of observed points is $\{x_1,\ldots,x_m\}$.

\begin{definition}[Monotone compression]
Let $d\le m$ be integers. An $m\to d$ monotone compression scheme for $\mathcal{F}$ is a reconstruction function
\begin{eqnarray}
\eta:\mathcal{X}^d \to \mathcal{F}
\label{eq:reconstruction_function}
\end{eqnarray}
such that for every $F\in\mathcal{F}$ and every sequence $x_1,\ldots,x_m\in F$, there exist indices $1\le i_1<\cdots<i_d\le m$ for which
\begin{eqnarray}
\{x_1,\ldots,x_m\} \;\subseteq\; \eta(x_{i_1},\ldots,x_{i_d}).
\label{eq:monotone_compression_condition}
\end{eqnarray}
The adjective ``monotone'' reflects that the reconstructed set is required to contain (rather than merely be consistent with) the full sample.
\end{definition}

A mild closure condition on $\mathcal{F}$ is needed to connect compression back to EMX learning.

\begin{definition}[Union boundedness]
A family $\mathcal{F}$ is \emph{union bounded} if for every $F_1,F_2\in\mathcal{F}$ there exists $F_3 \in \mathcal{F}$ such that $F_1\cup F_2\subseteq F_3$. In particular, by induction, for every finite collection $F_1,\ldots,F_r\in\mathcal{F}$ there exists $\widetilde{F}\in\mathcal{F}$ satisfying $\bigcup_{j=1}^r F_j \subseteq \widetilde{F}$.
\end{definition}

\subsection{Learnability $\Longleftrightarrow$ monotone compression}

We now state and prove, in full, the equivalence that forms the technical entry point to the Ben-David \emph{et al.} phenomenon. The equivalence is formulated for the weakest nontrivial parameters $(1/3,1/3)$, which suffice for undecidability.

\begin{lemma}[Weak EMX learnability $\Longleftrightarrow$ weak monotone compression]
\label{lem:learnability_compression_equivalence}
Let $\mathcal{F}$ be a union bounded family of finite subsets of $\mathcal{X}$. The following are equivalent:
\begin{itemize}
\item[(i)] $\mathcal{F}$ is $(1/3,1/3)$-EMX learnable.
\item[(ii)] There exists an $(m+1) \to m$ monotone compression scheme for $\mathcal{F}$ for some $m \in \mathbb{N}$.
\end{itemize}
\end{lemma}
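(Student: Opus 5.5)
We plan to prove each direction separately. Both rely on union boundedness, used in the following form: any finite collection of sets from $\mathcal{F}$, together with finitely many extra points, is contained in a single member of $\mathcal{F}$.

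\emph{(i)$\Rightarrow$(ii).} Let $G$ be a $(1/3,1/3)$-EMX learner with sample size $d$, and fix $m\ge 3d$. Define $\eta(y_1,\ldots,y_m)$ to be some member of $\mathcal{F}$ containing $\{y_1,\ldots,y_m\}\cup\bigcup_{T}G(T)$, where $T$ ranges over the finitely many $d$-tuples with entries from $\{y_1,\ldots,y_m\}$. To ensure the points $y_j$ themselves are covered, we use that each $\{y_j\}$ lies in some member of $\mathcal{F}$; this holds whenever $y_j$ belongs to a set of $\mathcal{F}$, which is our situation.

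Now let $x_1,\ldots,x_{m+1}\in F\in\mathcal{F}$. If two indices carry the same point, drop one copy; then $\eta$ of the remaining $m$ points contains everything. Otherwise let $P$ be uniform on the $m+1$ distinct points. Since $F\in\mathcal{F}$ contains all of them, $\mathrm{opt}_{\mathcal{F}}(P)=1$. The learner's success probability is $2/3>0$, so some $d$-tuple $T$ from the support has $P(G(T))\ge 2/3$. Then $G(T)$ covers at least $\tfrac{2}{3}(m+1)>d$ support points, so it covers some $x_j$ not appearing in $T$. Dropping $x_j$ keeps $T$ among the retained $m$ points, and $\eta$ of those points contains the retained points together with $G(T)\ni x_j$. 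This gives an $(m+1)\to m$ monotone compression scheme.

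\emph{(ii)$\Rightarrow$(i).} Given an $(m+1)\to m$ scheme $\eta$, define the learner on a sample $S=(x_1,\ldots,x_k)$ as follows: $G(S)$ is a member of $\mathcal{F}$ containing $\bigcup\eta(T)$ over all $m$-tuples $T$ of entries of $S$. By construction $G(S)\supseteq G(S')$ for every subsample $S'$. The key combinatorial claim is the following. For any points $z_1,\ldots,z_{n}$ lying in a common $F^\star\in\mathcal{F}$, call index $i$ \emph{bad} if $z_i\notin G(z_{-i})$. Then there are at most $m$ bad indices. To see this, suppose there were $m+1$ bad indices. Applying the compression scheme to them yields an omitted index $j$ with $z_j\in\eta(T)\subseteq G(z_{-j})$, a contradiction. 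Exchangeability then gives
\[
\Pr\bigl[z_{n}\notin G(z_1,\ldots,z_{n-1})\bigr]\le m/n
\]
for i.i.d.\ points in $F^\star$.

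To finish, fix $P$ and choose $F^\star$ with $P(F^\star)\ge \mathrm{opt}-\epsilon'$, where $\epsilon'$ is small. Condition on the number $N$ of sample points falling in $F^\star$; given $N$, these points are i.i.d.\ from $P(\cdot\mid F^\star)$. Monotonicity gives $G(S)\supseteq G(S\cap F^\star)$, so the expected uncovered mass of $F^\star$ is at most $P(F^\star)\,\mathbb{E}[m/(N+1)]$. Markov's inequality then gives $P(G(S))\ge \mathrm{opt}-1/3$ with probability at least $2/3$ once $k$ is large compared with $m$.

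We expect this last step to be the main obstacle. Two things must be handled. First, $N$ may be small when $P(F^\star)$ is small; but then $\mathrm{opt}$ itself is small, and the deficit is trivially bounded. Second, we need a single sample size $k$ that works uniformly over all $P$. We would handle both by splitting on whether $P(F^\star)\ge 1/6$ and applying a Chernoff bound to $N$ in the large case.
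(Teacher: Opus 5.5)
Your (i)$\Rightarrow$(ii) follows the paper's argument. You use the same reconstruction $\eta$, which covers the retained points and every $G(T)$ for $d$-tuples $T$ drawn from them, and the same uniform distribution on the $m+1$ points. You argue directly (some $T$ with $P(G(T))\ge 2/3$ must cover a point outside $T$), where the paper argues by contradiction with $m=\lceil 3d/2\rceil$. Your handling of repeated points, and of covering the $y_j$ themselves, is a bit more careful than the paper's.

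Your (ii)$\Rightarrow$(i) takes a genuinely different route.
\begin{itemize}
\item The paper first boosts the $(m+1)\to m$ scheme to $n\to m$ schemes $\eta_n$ by induction. It then outputs an empirical maximizer over the candidates $\eta_n(x_{i_1},\ldots,x_{i_m})$, and handles their data dependence with leave-$m$-out Hoeffding bounds and a union bound over $\binom{n}{m}$ index sets.
\item You never boost. Since enlarging the output can only increase $P(\cdot)$, you output a member of $\mathcal{F}$ covering all reconstructions. The fact that at most $m$ indices are bad then follows in one line from the $(m+1)\to m$ property, and exchangeability turns it into an in-expectation bound.
\end{itemize}
Your route is shorter and needs no concentration inequality. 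It also avoids the bookkeeping in the paper's Step~2, where the tuple compressed from the padded $S^\star$ must be matched to one of the listed candidates. The paper's Steps~2--3 are the generic compression-to-generalization template and do not exploit monotonicity of the utility. They give exponential tails and produce the boosting lemma as a by-product. Your Markov-type guarantee can in any case be amplified by covering the outputs of independent runs.

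Two small repairs are needed.

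First, ``by construction $G(S)\supseteq G(S')$'' is false as stated. $G(S')$ is an arbitrary member of $\mathcal{F}$ containing the union of its reconstructions, and its extra points need not lie in $G(S)$. You can either analyze the raw union $\widetilde{G}(S):=\bigcup_T\eta(T)$, or avoid monotonicity entirely as described next. $\widetilde{G}$ is genuinely monotone, is contained in $G(S)$, and satisfies your bad-index lemma verbatim.

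Second, the final step needs no case split or Chernoff bound.
\begin{itemize}
\item Draw $x_1,\ldots,x_{k+1}$ i.i.d.\ from $P$, and take $F^\star$ attaining $\mathrm{opt}_{\mathcal{F}}(P)$; this is possible since $P$ is finitely supported.
\item Call $i$ bad if $x_i\in F^\star\setminus G(x_{-i})$, where $x_{-i}$ is the $k$-tuple with $x_i$ removed.
\item Your argument shows there are at most $m$ bad indices. The compressed $m$-tuple $T$ consists of entries of $x_{-j}$, so $\eta(T)\subseteq G(x_{-j})$ directly, with no monotonicity used.
\end{itemize}
Hence, uniformly in $P$,
\[
\mathbb{E}\bigl[P\bigl(F^\star\setminus G(x_1,\ldots,x_k)\bigr)\bigr]=\Pr[\,k+1 \text{ is bad}\,]\le\frac{m}{k+1}.
\]
Markov's inequality with $k+1\ge 9m$ then gives the $(1/3,1/3)$ guarantee. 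In your conditioning version, the same uniformity comes from $p\,\mathbb{E}[1/(N+1)]\le 1/(k+1)$ for $N\sim\mathrm{Bin}(k,p)$.
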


\begin{proof}[Proof of (ii)$\Rightarrow$(i)]
Assume that $\mathcal{F}$ admits an $(m+1) \to m$ monotone compression scheme with reconstruction function $\eta_{m+1}:\mathcal{X}^m \to \mathcal{F}$.

\medskip
\paragraph*{\bf Step 1 (Boosting: $(m+1)\to m$ implies $n\to m$ for all $n\ge m+1$).}
We show by induction on $n$ that there exists a reconstruction function $\eta_n:\mathcal{X}^m \to \mathcal{F}$ such that for every $F\in\mathcal{F}$ and every sequence $x_1,\ldots,x_n \in F$ there are indices $i_1 < \cdots < i_m$ with
\begin{eqnarray}
\{x_1,\ldots,x_n\}\subseteq \eta_n(x_{i_1},\ldots,x_{i_m}).
\label{eq:n_to_m_compression}
\end{eqnarray}
The base case $n=m+1$ is exactly the assumed scheme $\eta_{m+1}$. For the inductive step, assume $\eta_n$ exists for some $n \ge m+1$ and let $x_1,\ldots,x_{n+1} \in F$. By the inductive hypothesis applied to the first $n$ points, there exist indices $i_1 < \cdots < i_m \le n$ such that
\begin{eqnarray}
\{x_1,\ldots,x_n\} \subseteq \eta_n(x_{i_1},\ldots,x_{i_m}).
\label{eq:inductive_cover_n}
\end{eqnarray}
Define the intermediate $m+1$-tuple
\begin{eqnarray}
U := \big(x_{i_1},\ldots,x_{i_m},x_{n+1}\big),
\label{eq:U_tuple}
\end{eqnarray}
whose entries all lie in $F$. Applying the $(m+1)\to m$ scheme to $U$ yields indices $j_1<\cdots<j_m$ (each $j_t\in\{1,\ldots,m+1\}$) such that
\begin{eqnarray}
\{x_{i_1},\ldots,x_{i_m},x_{n+1}\} \subseteq \eta_{m+1} \left(U_{j_1},\ldots,U_{j_m}\right).
\label{eq:compress_U}
\end{eqnarray}
Let $T:=(U_{j_1},\ldots,U_{j_m}) \in \mathcal{X}^m$ denote this compressed subtuple. Let $\mathcal{T}(T)$ be the finite set of all $m$-tuples whose entries are chosen from the finite set $\eta_{m+1}(T)$ (allowing repetitions). For each $V \in \mathcal{T}(T)$, the value $\eta_n(V)$ is a finite set in $\mathcal{F}$. By union boundedness, there exists a set in $\mathcal{F}$ containing $\eta_{m+1}(T)$ and the union of $\{\eta_n(V):V \in \mathcal{T}(T)\}$. Fix any such choice and define it to be $\eta_{n+1}(T)$:
\begin{eqnarray}
\eta_{n+1}(T)\ \in\ \mathcal{F},
\quad
\eta_{m+1}(T)\cup\Big(\bigcup_{V\in\mathcal{T}(T)} \eta_n(V)\Big)\ \subseteq\ \eta_{n+1}(T).
\label{eq:eta_nplus1_def}
\end{eqnarray}
Then, $x_{n+1} \in \eta_{m+1}(T)\subseteq\eta_{n+1}(T)$ by Eq.~(\ref{eq:compress_U}). Moreover, $(x_{i_1},\ldots,x_{i_m}) \in \mathcal{T}(T)$ (its entries lie in $\eta_{m+1}(T)$), so $\eta_n(x_{i_1},\ldots,x_{i_m}) \subseteq \eta_{n+1}(T)$ by Eq.~(\ref{eq:eta_nplus1_def}). Combined with Eq.~(\ref{eq:inductive_cover_n}), this yields $\{x_1,\ldots,x_n\} \subseteq \eta_{n+1}(T)$. This completes the induction and establishes Eq.~(\ref{eq:n_to_m_compression}) for all $n\ge m+1$.

\medskip
\paragraph*{\bf Step 2 (Define an EMX learner from compression).}
Fix an integer $n \ge m+1$ (to be chosen later). Given a sample $S=(x_1,\ldots,x_n) \sim P^n$, define the candidate family
\begin{eqnarray}
\mathcal{H}(S) := \Big\{\eta_n(x_{i_1},\ldots,x_{i_m}) : 1\le i_1<\cdots<i_m\le n\Big\}\ \subseteq\ \mathcal{F}.
\label{eq:candidate_family}
\end{eqnarray}
and the empirical measure
\begin{eqnarray}
\widehat{P}_S(F) := \frac{1}{n}\sum_{t=1}^n \mathbb{1}\{x_t\in F\}.
\label{eq:empirical_measure}
\end{eqnarray}
Let the learner output an empirical maximizer over this (data-dependent) candidate set,
\begin{eqnarray}
G(S) \in \arg\max_{H \in \mathcal{H}(S)} \widehat{P}_S(H).
\label{eq:emx_learner_via_erm_over_candidates}
\end{eqnarray}

Because $P$ is finitely supported, the supremum $\textrm{opt}_{\mathcal{F}}(P)$ is achieved by some $F^\star \in \mathcal{F}$. Let $S^\star:=\{x_t\in S:x_t\in F^\star\}$. Applying Eq.~(\ref{eq:n_to_m_compression}) to the multiset of points in $S^\star$ (padded with repetitions to length $n$ if needed) yields indices $i_1<\cdots<i_m$ such that $S^\star\subseteq \eta_n(x_{i_1},\ldots,x_{i_m})$. Let
\begin{eqnarray}
H^\star:=\eta_n (x_{i_1},\ldots,x_{i_m})\in\mathcal{H}(S).
\end{eqnarray}
Then every sample point that lies in $F^\star$ also lies in $H^\star$, hence $\widehat{P}_S(H^\star)\ge \widehat{P}_S(F^\star)$ and therefore
\begin{eqnarray}
\widehat{P}_S\big(G(S)\big) \ge \widehat{P}_S(H^\star) \ge \widehat{P}_S(F^\star).
\label{eq:empirical_comparison_chain}
\end{eqnarray}

\paragraph*{\bf Step 3 (Generalization via leave-$m$-out concentration).}
The subtlety is that $\mathcal{H}(S)$ depends on $S$. We therefore use a leave-$m$-out argument: each candidate $H_I\in\mathcal{H}(S)$ is determined by at most $m$ sample points and is independent of the remaining $n-m$ points.

For each index set $I=\{i_1<\cdots<i_m\}\subseteq [n]$, define
\begin{eqnarray}
H_I := \eta_n(x_{i_1},\ldots,x_{i_m}) \in \mathcal{H}(S), \quad J_I:=[n] \setminus I,
\end{eqnarray}
and the holdout empirical measure
\begin{eqnarray}
\widehat{P}^{(-I)}_S(H_I) := \frac{1}{n-m}\sum_{t \in J_I}\mathbb{1}\{x_t \in H_I\}.
\label{eq:holdout_empirical_def}
\end{eqnarray}
Conditioned on $\{x_i:i\in I\}$, the variables $\{\mathbb{1}\{x_t\in H_I\}\}_{t\in J_I}$ are i.i.d. Bernoulli with mean $P(H_I)$, so Hoeffding's inequality~\cite{Hoeffding1963} gives, for every fixed $I$,
\begin{eqnarray}
\Pr\left[\abs{\widehat{P}^{(-I)}_S(H_I)-P(H_I)} > \alpha\right] \le 2e^{-2(n-m)\alpha^2}.
\label{eq:hoeffding_holdout}
\end{eqnarray}
Taking a union bound over the $\binom{n}{m}$ index sets yields
\begin{eqnarray}
\Pr\left[\exists\,I \subseteq[n],\, \abs{I}=m: \abs{\widehat{P}^{(-I)}_S(H_I)-P(H_I)} > \alpha\right] \le 2\binom{n}{m}e^{-2(n-m)\alpha^2}.
\label{eq:uniform_over_candidates_holdout}
\end{eqnarray}
On the complement of this event, for every $I$ we have
\begin{eqnarray}
\abs{\widehat{P}_S(H_I)-P(H_I)} &=& \abs{\frac{n-m}{n}\widehat{P}^{(-I)}_S(H_I) + \frac{1}{n}\sum_{t\in I}\mathbb{1}\{x_t \in H_I\}-P(H_I)} \nonumber\\
	&\le& \frac{n-m}{n} \abs{\widehat{P}^{(-I)}_S(H_I)-P(H_I)} + \frac{m}{n} \nonumber \\
	&\le& \alpha + \frac{m}{n}.
\label{eq:full_empirical_from_holdout}
\end{eqnarray}

In addition, applying the one-sided Hoeffding bound to the fixed set $F^\star$ gives
\begin{eqnarray}
\Pr\left[\widehat{P}_S(F^\star) < P(F^\star)-\alpha\right] \le e^{-2n\alpha^2}.
\label{eq:opt_concentration}
\end{eqnarray}
Choose $\alpha:=1/6$ and pick $n$ large enough so that $m/n \le 1/6$ and
\begin{eqnarray}
2\binom{n}{m}e^{-2(n-m)\alpha^2}\ \le\ \frac{1}{6} \quad\mbox{and}\quad e^{-2n\alpha^2} \le \frac{1}{6}.
\label{eq:n_choice_conditions}
\end{eqnarray}
Then, with probability at least $2/3$, we have both
\begin{eqnarray}
\sup_{H \in \mathcal{H}(S)}\abs{\widehat{P}_S(H)-P(H)} \le \alpha + \frac{m}{n} \quad\text{and}\quad \widehat{P}_S(F^\star)\ge P(F^\star)-\alpha.
\end{eqnarray}
On this event, combining Eq.~(\ref{eq:empirical_comparison_chain}) with the deviation bounds yields
\begin{eqnarray}
P\big(G(S)\big) &\ge& \widehat{P}_S\big(G(S)\big)-\left(\alpha+\frac{m}{n}\right) \ge \widehat{P}_S(F^\star)-\left(\alpha+\frac{m}{n}\right) \nonumber\\
	&\ge& P(F^\star)-2\alpha-\frac{m}{n} \nonumber \\
	&\ge& \textrm{opt}_{\mathcal{F}}(P) - \frac{1}{3}.
\label{eq:emx_guarantee_from_compression}
\end{eqnarray}
Thus, $\mathcal{F}$ is $(1/3,1/3)$-EMX learnable.
\end{proof}

\medskip
\begin{proof}[Proof of (i)$\Rightarrow$(ii)]
Assume that $\mathcal{F}$ is $(1/3,1/3)$-EMX learnable. Then, there exists an integer $d$ and a learner $G:\mathcal{X}^d\to\mathcal{F}$ such that for every finitely supported $P$,
\begin{eqnarray}
\Pr_{S\sim P^d}\left[P\big(G(S)\big) \ge \textrm{opt}_{\mathcal{F}}(P)-\frac{1}{3}\right] \ge \frac{2}{3}.
\label{eq:weak_emx_assumption}
\end{eqnarray}
Set
\begin{eqnarray}
m \;:=\; \left\lceil \frac{3d}{2}\right\rceil.
\label{eq:m_choice}
\end{eqnarray}
Define a reconstruction function $\eta:\mathcal{X}^m \to \mathcal{F}$ as follows. For any $m$-tuple $S'=(x_1,\ldots,x_m)$, let $\mathcal{T}(S')$ be the finite collection of all $d$-tuples obtained by selecting $d$ elements from $\{x_1,\ldots,x_m\}$ (allowing repetitions). Consider the finite family
\begin{eqnarray}
\mathcal{U}(S') := \Big\{\{x_1,\ldots,x_m\}\Big\} \cup \Big\{G(T): T \in \mathcal{T}(S') \Big\} \subseteq \mathcal{F}.
\label{eq:U_family}
\end{eqnarray}
Since $\mathcal{F}$ is union bounded and every set in $\mathcal{U}(S')$ is finite, there exists $\eta(S')\in\mathcal{F}$ such that
\begin{eqnarray}
\bigcup_{U\in\mathcal{U}(S')} U \ \subseteq\ \eta(S').
\label{eq:eta_contains_union}
\end{eqnarray}
We claim that this $\eta$ is an $(m+1)\to m$ monotone compression scheme. Let $F\in\mathcal{F}$ and let $S=(x_1,\ldots,x_{m+1})$ be a sequence with all entries in $F$. Suppose towards a contradiction that for every $t\in\{1,\ldots,m+1\}$, writing $S^{(-t)}$ for the $m$-tuple obtained by removing $x_t$, one has
\begin{eqnarray}
x_t \notin \eta\big(S^{(-t)}\big).
\label{eq:xt_not_in_reconstruction}
\end{eqnarray}
Fix any $d$-tuple $T$ drawn from $\{x_1,\ldots,x_{m+1}\}$. If $x_t \notin T$, then $T$ is a $d$-tuple drawn from $S^{(-t)}$, so Eq.~(\ref{eq:eta_contains_union}) implies $G(T) \subseteq \eta(S^{(-t)})$ and hence $x_t \notin G(T)$. Therefore, for every such $T$,
\begin{eqnarray}
G(T)\cap \{x_1,\ldots,x_{m+1}\} \subseteq \{ \mbox{entries of $T$} \}.
\label{eq:G_T_support_on_T}
\end{eqnarray}
Let $P$ be the uniform distribution on $\{x_1,\ldots,x_{m+1}\}$,
\begin{eqnarray}
P(\{x_t\}) = \frac{1}{m+1}, \quad t=1,\ldots,m+1.
\label{eq:uniform_distribution_on_S}
\end{eqnarray}
Since $S\subseteq F$, we have $\textrm{opt}_{\mathcal{F}}(P)=1$. But Eq.~(\ref{eq:G_T_support_on_T}) implies $P(G(T)) \le d/(m+1)$ for every $T\sim P^d$, and by Eq.~(\ref{eq:m_choice}), we have $d/(m+1)<2/3$. Hence,
\begin{eqnarray}
\Pr_{T\sim P^d}\!\left[P(G(T)) \ge \frac{2}{3}\right] = 0,
\label{eq:zero_probability}
\end{eqnarray}
contradicting Eq.~(\ref{eq:weak_emx_assumption}) under $P$. Therefore, there exists some $t$ for which $x_t\in \eta(S^{(-t)})$, i.e., there exists an $m$-subsequence $S'$ with $\{x_1,\ldots,x_{m+1}\} \subseteq \eta(S')$. This proves the existence of an $(m+1) \to m$ monotone compression scheme.
\end{proof}

{\bf Lemma~\ref{lem:learnability_compression_equivalence}} formalizes an important conceptual message: for union bounded families of finite sets, the existence of a learner achieving even a coarse approximation to the maximum measure is equivalent to the ability to discard a single sample point while retaining enough information to reconstruct a finite superset containing the entire original sample.

\subsection{Cardinalities and the emergence of set-theoretic independence}

The next task is to relate the existence of monotone compression schemes to the cardinality of the domain. For a set $\mathcal{X}$, define the family of all finite subsets of $\mathcal{X}$,
\begin{eqnarray}
\mathcal{F}_{\textrm{fin}}^{\mathcal{X}} := \big\{F \subseteq \mathcal{X} : \mbox{$F$ is finite}\big\}.
\label{eq:finite_subsets_class}
\end{eqnarray}
This family is union bounded. The key theorem identifies precisely when $\mathcal{F}_{\textrm{fin}}^{\mathcal{X}}$ admits a ``constant-size'' monotone compression scheme.

We briefly recall the notation for infinite cardinals. Let $\aleph_0$ denote the cardinality of $\mathbb{N}$, and for each $k \ge 0$ let $\aleph_{k+1}$ be the smallest cardinal strictly larger than $\aleph_k$. For a set $\mathcal{X}$, the statement $\abs{\mathcal{X}} \le \aleph_k$ means that $\mathcal{X}$ injects into a set of cardinality $\aleph_k$.

\begin{theorem}[Compression and cardinality for finite-subset classes]
\label{thm:compression_cardinality}
Fix an integer $k \ge 0$ and a set $\mathcal{X}$. Then $\abs{\mathcal{X}} \le \aleph_k$ if and only if $\mathcal{F}_{\textrm{fin}}^{\mathcal{X}}$ admits a $(k+2) \to (k+1)$ monotone compression scheme.
\end{theorem}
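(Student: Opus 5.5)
The plan is to prove both directions by induction on $k$. The forward direction uses a well-ordering of $\mathcal{X}$ of order type at most $\omega_k$. The converse uses a closure argument that extracts a smaller compression scheme from a larger one. Throughout, the reconstruction function is applied to the retained subtuple in its original order. Repeated entries can always be padded or duplicated, so "retain $d$ points" may be read as "retain a $d$-tuple of sample entries". Both properties of the statement, $\abs{\mathcal{X}}\le\aleph_k$ and the existence of a scheme, transfer along injections: a scheme on a superset restricts to a subset by intersecting outputs, and the well-ordering argument only needs an injection. So we may assume $\abs{\mathcal{X}}=\aleph_k$ or work inside a subset of the required size.

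\emph{($\Rightarrow$).} For $k=0$, fix an injection $\idx:\mathcal{X}\to\mathbb{N}$ and set $\eta(x):=\{y:\idx(y)\le\idx(x)\}$. Given two points, keep the one with the larger index; its initial segment contains both. For the inductive step, well-order $\mathcal{X}$ in type $\omega_k$. Every proper initial segment $\mathcal{X}_{<y}$ then has cardinality at most $\aleph_{k-1}$. By the induction hypothesis and the axiom of choice, fix for each $y$ a $(k+1)\to k$ scheme $\eta_y$ for the finite subsets of $\mathcal{X}_{<y}$.

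Given $k+2$ points, let $x^\ast$ be their maximum. The remaining $k+1$ entries fall into two kinds: those strictly below $x^\ast$, and copies of $x^\ast$. Entries strictly below $x^\ast$ are compressed by $\eta_{x^\ast}$ to $k$ entries, padding with repeats if there are fewer than $k+1$ of them. Copies of $x^\ast$ are harmless. We retain $x^\ast$ together with these $k$ entries, which is $k+1$ entries in total. Reconstruction then takes the retained tuple, recomputes its maximum $x^\ast$ (identifiable because all other retained entries lie below it), and outputs
\[
\eta_{x^\ast}(\text{remaining }k\text{ entries})\cup\{x^\ast\}.
\]
This set is finite and contains the whole sample.

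\emph{($\Leftarrow$).} We argue by contradiction, again by induction. For $k=0$, suppose $\eta$ is a $2\to1$ scheme and $\mathcal{X}$ is uncountable. Close a countably infinite set $A_0$ under $\eta$, that is, iterate $A\mapsto A\cup\bigcup_{a\in A}\eta(a)$ countably often. The result is a countable set $A$ with $\eta(a)\subseteq A$ for every $a\in A$. Pick $x\notin A$. For each $a\in A$, the pair $(a,x)$ must be covered by either $\eta(a)$ or $\eta(x)$. It cannot be $\eta(a)$, since $\eta(a)\subseteq A$ does not contain $x$. So $a\in\eta(x)$ for every $a\in A$, contradicting the finiteness of $\eta(x)$.

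For general $k$, suppose a $(k+2)\to(k+1)$ scheme $\eta$ exists but $\abs{\mathcal{X}}\ge\aleph_{k+1}$. Take $A\subseteq\mathcal{X}$ of size $\aleph_k$ closed under $\eta$ on $(k+1)$-tuples. The closure still has size $\aleph_k$ because each step adds a union of $\aleph_k$ finite sets. Pick $x\notin A$ and define a scheme on $A$ by
\[
\eta'(z_1,\ldots,z_k):=\eta(z_1,\ldots,z_k,x)\cap A .
\]
Given $z_1,\ldots,z_{k+1}\in A$, apply $\eta$ to $(z_1,\ldots,z_{k+1},x)$. The dropped entry cannot be $x$: if it were, the retained entries would all lie in $A$, so their reconstruction would lie in $A$ and miss $x$. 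Hence $x$ is retained, and since it sits last, the retained subtuple is $(z_{i_1},\ldots,z_{i_k},x)$ in order. It follows that $\eta'$ is a $(k+1)\to k$ scheme for the finite subsets of $A$. By the induction hypothesis $\abs{A}\le\aleph_{k-1}$, which contradicts $\abs{A}=\aleph_k$.

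I expect the main obstacle to be the bookkeeping rather than the ideas. In the forward step one must check that repetitions and positions do not break the rule "recompute the maximum". In the converse one must check that the closure keeps cardinality exactly $\aleph_k$, which uses $\aleph_k\cdot\aleph_0=\aleph_k$ and the axiom of choice, and that placing $x$ last is compatible with the ordered-indices convention of the monotone compression definition.
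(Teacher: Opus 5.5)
Your proof is correct and is essentially the paper's argument recast as an induction on $k$. Unrolling your forward step reproduces the paper's nested maxima $z_k,\ldots,z_1$ under well-orders of types $\omega_k,\ldots,\omega_1$, with a finite initial segment of a countable order at the bottom. Your converse step (a point outside the reconstructions of all tuples from a set of size $\aleph_k$ must be retained, so it can be frozen) is the paper's cardinality-descent lemma, and your $k=0$ case absorbs its final $1\to 0$ contradiction.

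The only thing to tighten is the bookkeeping for repeated entries. When padding is needed, the $k$ entries returned by $\eta_{x^\ast}$ need not sit at distinct sample positions, and $\eta_{x^\ast}$ need not contain its own inputs. In that case, drop a duplicate copy of $x^\ast$ and let the reconstruction also include every retained entry, exactly as the paper's $S'\cup I_0(z_0)$ does.
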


\begin{proof}[Proof of ($\Rightarrow$)]
Assume $\abs{\mathcal{X}} \le \aleph_k$. By the well-ordering theorem (equivalent to the axiom of choice), there exists a well-order $\prec_k$ on $\mathcal{X}$ whose order type is the initial ordinal $\omega_k$ of cardinality $\aleph_k$~\cite{Jech2003,Kunen1980}. In particular, for any $x \in \mathcal{X}$, the initial segment
\begin{eqnarray}
I_k(x) := \{y \in \mathcal{X}: y \prec_k x\}
\label{eq:initial_segment}
\end{eqnarray}
has cardinality strictly smaller than $\aleph_k$, and since $k$ is finite this implies $\abs{I_k(x)} \le \aleph_{k-1}$ for $k \ge 1$ (and $\abs{I_0(x)} < \aleph_0$ means $I_0(x)$ is finite).

We define a $(k+2)\to(k+1)$ compression scheme for $\mathcal{F}_{\textrm{fin}}^{\mathcal{X}}$. Let $S=\{x_0,\ldots,x_{k+1}\} \subseteq \mathcal{X}$ be a set of size $k+2$. We construct $k+1$ ``representatives'' $z_k,\ldots,z_0$ recursively. Let $z_k$ be the $\prec_k$-maximum of $S$. For $k \ge 1$, the remaining points $S \setminus \{z_k\}$ lie inside $I_k(z_k)$, whose cardinality is at most $\aleph_{k-1}$, and hence admits a well-order $\prec_{k-1}$ of type $\omega_{k-1}$. Let $z_{k-1}$ be the $\prec_{k-1}$-maximum of $S \setminus \{z_k\}$. Continuing recursively, we obtain $z_k,z_{k-1},\ldots,z_1$, and at the final stage we are left with two points $u,v \in I_1(z_1)$.

The crucial point is that $I_1(z_1)$ is countable (cardinality $\aleph_0$), so we may choose $\prec_0$ on $I_1(z_1)$ of order type $\omega$. Let $z_0$ be the $\prec_0$-maximum of the two points $\{u,v\}$. Then the other point in $\{u,v\}$ must lie in the finite initial segment $I_0(z_0):=\{y\in I_1(z_1): y\prec_0 z_0\}$.

We then define the compressed representation of $S$ to be the $(k+1)$-element set
\begin{eqnarray}
S' := \{z_k,z_{k-1},\ldots,z_0\},
\label{eq:compressed_set_Sprime}
\end{eqnarray}
and define the reconstruction function
\begin{eqnarray}
\eta(S') := S' \cup I_0(z_0).
\label{eq:reconstruction_cardinality}
\end{eqnarray}
By construction, $\eta(S')$ is finite and hence belongs to $\mathcal{F}_{\textrm{fin}}^{\mathcal{X}}$. Moreover, $\eta(S')$ contains $S'$ and also contains the remaining point among $\{u,v\}$ via inclusion in $I_0(z_0)$. Since all other points of $S$ are among the $z_i$, we conclude that $S \subseteq \eta(S')$. This proves the existence of a $(k+2) \to (k+1)$ monotone compression scheme.
\end{proof}

\begin{lemma}
\label{lem:cardinality_descent}
Let $r \ge 1$ be an integer and let $\mathcal{Y} \subset \mathcal{X}$ be infinite sets with $\abs{\mathcal{Y}} < \abs{\mathcal{X}}$. If $\mathcal{F}_{\textrm{fin}}^{\mathcal{X}}$ admits an $(r+1) \to r$ monotone compression scheme, then $\mathcal{F}_{\textrm{fin}}^{\mathcal{Y}}$ admits an $r \to (r-1)$ monotone compression scheme.
\end{lemma}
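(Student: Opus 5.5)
The plan is the standard ``diagonal point'' argument. We use the fact that $\mathcal{Y}$ is too small to make the reconstructions of $\eta$ exhaust $\mathcal{X}$. Let $\eta:\mathcal{X}^r\to\mathcal{F}_{\textrm{fin}}^{\mathcal{X}}$ be the given $(r+1)\to r$ monotone compression scheme. First consider the set
\begin{eqnarray}
W := \bigcup_{T\in\mathcal{Y}^r} \eta(T) \subseteq \mathcal{X}.
\end{eqnarray}
Since $\mathcal{Y}$ is infinite, $\abs{\mathcal{Y}^r}=\abs{\mathcal{Y}}$ (cardinal arithmetic under AC). Each $\eta(T)$ is finite. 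Hence $\abs{W}\le \abs{\mathcal{Y}}\cdot\aleph_0=\abs{\mathcal{Y}}<\abs{\mathcal{X}}$, so we can fix a point $x^\ast\in\mathcal{X}\setminus W$. This cardinality count is the only place where $\abs{\mathcal{Y}}<\abs{\mathcal{X}}$ and the infinitude of $\mathcal{Y}$ enter, and it is the step to state carefully.

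Next, we define the candidate scheme for $\mathcal{Y}$ by appending $x^\ast$ and intersecting back with $\mathcal{Y}$:
\begin{eqnarray}
\eta'(y_1,\ldots,y_{r-1}) := \eta(y_1,\ldots,y_{r-1},x^\ast)\cap\mathcal{Y}.
\end{eqnarray}
This is a finite subset of $\mathcal{Y}$, hence lies in $\mathcal{F}_{\textrm{fin}}^{\mathcal{Y}}$. To verify monotone compression, take any $y_1,\ldots,y_r\in\mathcal{Y}$ and apply the $(r+1)\to r$ scheme to the $(r+1)$-tuple $(y_1,\ldots,y_r,x^\ast)$. All its entries lie in the finite set $\{y_1,\ldots,y_r,x^\ast\}\in\mathcal{F}_{\textrm{fin}}^{\mathcal{X}}$. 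So there is an increasing choice of $r$ indices whose reconstruction contains all $r+1$ points.

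The key case analysis is as follows. If the dropped index were the last one (the position of $x^\ast$), the retained tuple would be $(y_1,\ldots,y_r)\in\mathcal{Y}^r$. Its reconstruction would then contain $x^\ast$, contradicting $x^\ast\notin W$. Hence the dropped point is some $y_i$, and because indices are kept in increasing order, the retained tuple has the form $(y_{i_1},\ldots,y_{i_{r-1}},x^\ast)$ with $x^\ast$ last. Its $\eta$-image contains every $y_j$, so intersecting with $\mathcal{Y}$ gives $\{y_1,\ldots,y_r\}\subseteq\eta'(y_{i_1},\ldots,y_{i_{r-1}})$. This is exactly the $r\to(r-1)$ monotone compression condition.

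The main obstacle is really just bookkeeping. We must ensure the positional convention (subsequences with $i_1<\cdots<i_d$) is respected, which is why $x^\ast$ is placed last. We must also check the degenerate case $r=1$. There $\eta'$ is a single finite set $\eta(x^\ast)\cap\mathcal{Y}$ that would have to contain every $y\in\mathcal{Y}$, which is impossible since $\mathcal{Y}$ is infinite. The argument therefore shows the hypothesis cannot hold for $r=1$, so the lemma holds vacuously in that case and no separate treatment is needed.
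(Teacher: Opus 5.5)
Your proof is correct and follows the paper's argument: pick $x^\ast$ outside the union of all reconstructions of $\mathcal{Y}$-tuples (possible because that union has cardinality at most $\abs{\mathcal{Y}}<\abs{\mathcal{X}}$), show that the compression of $(y_1,\ldots,y_r,x^\ast)$ must retain $x^\ast$, and define the new scheme as $\eta(\cdot,x^\ast)\cap\mathcal{Y}$. Your bookkeeping is slightly tidier than the paper's padded-subset version: taking the union over all of $\mathcal{Y}^r$ and placing $x^\ast$ last handles arbitrary sample sequences, including ones with repeated points, and respects the increasing-index convention explicitly.
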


\begin{proof}
Let $\eta:\mathcal{X}^r \to \mathcal{F}_{\textrm{fin}}^{\mathcal{X}}$ be a reconstruction function witnessing an $(r+1) \to r$ monotone compression scheme for $\mathcal{F}_{\textrm{fin}}^{\mathcal{X}}$. Consider the set
\begin{eqnarray}
Z := \bigcup_{\substack{T \subseteq \mathcal{Y} \\ \abs{T} \le r}} \eta(T),
\label{eq:Z_union}
\end{eqnarray}
where we identify a finite subset $T=\{t_1,\ldots,t_s\}$ with an $r$-tuple by padding with repetitions as needed to form an element of $\mathcal{X}^r$. Each $\eta(T)$ is finite, and the union ranges over at most $\abs{\mathcal{Y}}$ many subsets. Since $\mathcal{Y}$ is infinite, the union of $\abs{\mathcal{Y}}$ many finite sets has cardinality at most $\abs{\mathcal{Y}}$, hence $\abs{Z} \le \abs{\mathcal{Y}}$. Because $\abs{\mathcal{Y}} < \abs{\mathcal{X}}$, there exists a point $x \in \mathcal{X} \setminus Z$.

Fix any $r$-element subset $T \subseteq \mathcal{Y}$ and define $S:=T \cup \{x\}$, which has size $r+1$. By the $(r+1) \to r$ compression property, there exists an $r$-subset $S' \subseteq S$ such that $S \subseteq \eta(S')$. We claim that $x \in S'$. Indeed, if $x\notin S'$, then $S' \subseteq \mathcal{Y}$, hence $\eta(S') \subseteq Z$ by definition of $Z$, contradicting $x\notin Z$ while $x \in S \subseteq \eta(S')$. Therefore, $x \in S'$, and letting $U:=S' \setminus \{x\}$ we have $\abs{U} \le r-1$ and
\begin{eqnarray}
T \subseteq \eta\big(U\cup\{x\}\big).
\label{eq:T_in_eta}
\end{eqnarray}
Define the reconstruction function $\eta_{\mathcal{Y}}:\mathcal{Y}^{r-1} \to \mathcal{F}_{\textrm{fin}}^{\mathcal{Y}}$ by
\begin{eqnarray}
\eta_{\mathcal{Y}}(U) := \eta\!\big(U\cup\{x\}\big)\cap \mathcal{Y}.
\label{eq:eta_Y_def}
\end{eqnarray}
Then, Eq.~(\ref{eq:T_in_eta}) implies $T \subseteq \eta_{\mathcal{Y}}(U)$. This establishes an $r \to (r-1)$ monotone compression scheme for $\mathcal{F}_{\textrm{fin}}^{\mathcal{Y}}$.
\end{proof}

\begin{proof}[Proof of ($\Leftarrow$)]
Assume that $\mathcal{F}_{\textrm{fin}}^{\mathcal{X}}$ admits a $(k+2) \to (k+1)$ monotone compression scheme with reconstruction $\eta:\mathcal{X}^{k+1} \to \mathcal{F}_{\textrm{fin}}^{\mathcal{X}}$. We show that this forces $\abs{\mathcal{X}} \le \aleph_k$.

Suppose towards a contradiction that $\abs{\mathcal{X}} > \aleph_k$. Then $\mathcal{X}$ contains a subset of cardinality $\aleph_{k+1}$; replacing $\mathcal{X}$ by such a subset does not affect the existence of a compression scheme for $\mathcal{F}_{\textrm{fin}}^{\mathcal{X}}$ because the restriction map preserves the property. Applying {\bf Lemma~\ref{lem:cardinality_descent}} iteratively along the strict chain
\begin{eqnarray}
\aleph_{k+1} > \aleph_k > \cdots > \aleph_0
\label{eq:aleph_chain}
\end{eqnarray}
yields, after $k+1$ steps, a $1 \to 0$ monotone compression scheme for $\mathcal{F}_{\textrm{fin}}^{\mathcal{Y}}$ for some infinite $\mathcal{Y}$ (in fact $\abs{\mathcal{Y}}=\aleph_0$). But no infinite set admits a $1 \to 0$ monotone compression scheme: if $\eta(\emptyset)$ were a finite reconstruction for the empty compressed sample, the monotone condition applied to each singleton $\{y\}$ would force $y \in \eta(\emptyset)$ for all $y \in \mathcal{Y}$, hence $\eta(\emptyset) \supseteq \mathcal{Y}$, contradicting finiteness. Therefore, $\abs{\mathcal{X}} \le \aleph_k$.
\end{proof}

{\bf Theorem~\ref{thm:compression_cardinality}} is precisely the point at which set theory enters the learning-theoretic narrative. Specializing to $\mathcal{X}=[0,1]$ and $\mathcal{F}=\mathcal{F}_{\textrm{fin}}^{[0,1]}$, {\bf Lemma~\ref{lem:learnability_compression_equivalence}} and {\bf Theorem~\ref{thm:compression_cardinality}} together show that weak EMX learnability is equivalent to a cardinal statement about the continuum: $\mathcal{F}_{\textrm{fin}}^{[0,1]}$ is $(1/3, 1/3)$-EMX learnable if and only if $\abs{[0,1]} \le \aleph_k$ for some finite $k$. In models of set theory where the continuum hypothesis holds, $\abs{[0,1]}=\aleph_1$ and thus a $3 \to 2$ monotone compression exists, implying EMX learnability. In models where the continuum is forced to be larger than $\aleph_k$ for every finite $k$, no such compression exists, implying non-learnability. Since the existence of such models is Bconsistent with ZFC by classical independence results~\cite{Cohen1964,Jech2003}, the learnability of this seemingly elementary class becomes independent of ZFC, as established in Ref.~\cite{BenDavid2019}.

\section{From Mathematical Learnability to Physical Learnability}

The independence phenomenon (reviewed in the previous section) hinges on a methodological choice that is ubiquitous in learning theory: a learner is formalized as a set-theoretic object---a function (or randomized kernel) from finite samples to hypotheses---and learnability is defined by quantifying over the existence of such objects. This perspective is powerful precisely because it separates statistical questions from implementation. Yet, as the EMX example shows, the separation can come at a foundational cost: when the relevant objects live over genuinely infinite domains, the statement ``there exists a learner'' may no longer be a determinate mathematical fact within ZFC~\cite{BenDavid2019}. Thus, the aim of this section is to introduce a minimal formal language for incorporating \emph{admissibility} constraints into learnability. The point is not to replace statistical learning theory by physics, but to isolate the precise location where physical constraints may, and arguably should, enter: in the specification of the allowable class of inference procedures.

\subsection{Admissible learners and relative notions of learnability}

Fix a domain $\mathcal{X}$ and a hypothesis family $\mathcal{F} \subseteq \{0,1\}^{\mathcal{X}}$. In the standard set-theoretic formulation, an EMX learner is any map of the form
\begin{eqnarray}
G:\bigcup_{n \in\mathbb{N}}\mathcal{X}^n \to \mathcal{F},
\label{eq:general_learner_map}
\end{eqnarray}
and learnability is defined by the existence of such a map satisfying the EMX guarantee for all distributions in a prescribed class (here, finitely supported distributions). Operationally, however, a learner is not an arbitrary function, but it is a physical process. This motivates the following definition, which makes the admissible class explicit.

\begin{definition}[Admissible learner class]
\label{def:admissible_class}
An admissible learner class for $(\mathcal{X}, \mathcal{F})$ is a set $\mathfrak{L}$ of learners of the form in Eq.~(\ref{eq:general_learner_map}). We interpret $G \in \mathfrak{L}$ as ``$G$ is implementable'' under the chosen computational/physical model. In particular, $\mathfrak{L}$ may represent deterministic algorithms, randomized procedures, quantum channels followed by measurements, or any other operationally meaningful family of inference rules~\cite{Bang2025SecurePAC}.
\end{definition}

The associated notion of learnability is simply the standard one with the existential quantifier restricted to $\mathfrak{L}$.

\begin{definition}[$\mathfrak{L}$-EMX learnability]
\label{def:L_emx_learnability}
Let $\mathcal{P}$ be a class of (finitely supported) distributions over $\mathcal{X}$ and fix $\epsilon, \delta \in (0,1)$. We say that $\mathcal{F}$ is $(\epsilon,\delta)$-EMX learnable over $\mathcal{P}$ relative to $\mathfrak{L}$ if there exists $G \in \mathfrak{L}$ and an integer $d=d(\epsilon,\delta)$ such that for every $P \in \mathcal{P}$,
\begin{eqnarray}
\Pr_{S\sim P^d}\left[ P\big(G(S)\big) \ge \textrm{opt}_{\mathcal{F}}(P)-\epsilon \right] \ge 1-\delta,
\label{eq:L_emx_definition}
\end{eqnarray}
where $\textrm{opt}_{\mathcal{F}}(P)$ is as in Eq.~(\ref{eq:opt_emx}).
\end{definition}

The relative learnability is monotone with respect to admissibility, a trivial but conceptually useful fact: making the physical model more permissive cannot destroy learnability, whereas tightening admissibility can.

\begin{theorem}[Monotonicity under admissibility]
\label{thm:monotonicity_admissibility}
Let $\mathfrak{L}_1 \subseteq \mathfrak{L}_2$ be admissible learner classes for $(\mathcal{X},\mathcal{F})$. If $\mathcal{F}$ is $(\epsilon,\delta)$-EMX learnable over $\mathcal{P}$ relative to $\mathfrak{L}_1$, then it is $(\epsilon,\delta)$-EMX learnable over $\mathcal{P}$ relative to $\mathfrak{L}_2$.
\end{theorem}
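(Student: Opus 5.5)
The argument is a direct unpacking of Definition~\ref{def:L_emx_learnability}. Nothing beyond the definition is needed, because the inclusion $\mathfrak{L}_1 \subseteq \mathfrak{L}_2$ only enlarges the range of the existential quantifier. The success criterion in Eq.~(\ref{eq:L_emx_definition}) does not refer to the admissible class at all. It depends only on the learner $G$, the sample size $d$, the distribution $P$, and the fixed objective $\textrm{opt}_{\mathcal{F}}(P)$.

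Concretely, I would proceed in three steps. First, assume $\mathcal{F}$ is $(\epsilon,\delta)$-EMX learnable over $\mathcal{P}$ relative to $\mathfrak{L}_1$. By Definition~\ref{def:L_emx_learnability}, fix a witness $G \in \mathfrak{L}_1$ and an integer $d=d(\epsilon,\delta)$ such that Eq.~(\ref{eq:L_emx_definition}) holds for every $P \in \mathcal{P}$. Second, note that $G \in \mathfrak{L}_2$ since $\mathfrak{L}_1 \subseteq \mathfrak{L}_2$. Third, observe that the same pair $(G,d)$ satisfies Eq.~(\ref{eq:L_emx_definition}) verbatim for every $P\in\mathcal{P}$. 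The probability space $S\sim P^d$, the output $G(S)$, and the benchmark $\textrm{opt}_{\mathcal{F}}(P)=\sup_{f\in\mathcal{F}}P(f)$ are all unchanged, since the hypothesis family $\mathcal{F}$ and the distribution class $\mathcal{P}$ are held fixed. Hence $(G,d)$ witnesses $(\epsilon,\delta)$-EMX learnability relative to $\mathfrak{L}_2$.

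The only point needing any care, and hence the nominal main obstacle, is checking that no ingredient of the guarantee implicitly depends on the admissible class. In particular, $\textrm{opt}_{\mathcal{F}}(P)$ must be defined by the supremum over $\mathcal{F}$ and not over outputs of admissible learners, and properness ($G$ maps into $\mathcal{F}$) must be part of the learner form in Eq.~(\ref{eq:general_learner_map}), which both classes share. Both facts hold by Definition~\ref{def:admissible_class}, so the argument closes. I would also remark that the same proof yields the contrapositive: non-learnability relative to $\mathfrak{L}_2$ implies non-learnability relative to $\mathfrak{L}_1$. Taking $\mathfrak{L}_2$ to be all set-theoretic maps recovers the fact that admissible learnability always implies ordinary EMX learnability.
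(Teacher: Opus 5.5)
Your proposal is correct and matches the paper's proof: a witness $G \in \mathfrak{L}_1$ lies in $\mathfrak{L}_2$, and the same sample size $d(\epsilon,\delta)$ satisfies the unchanged guarantee. Your extra check that $\textrm{opt}_{\mathcal{F}}(P)$ and properness do not depend on the admissible class is a harmless elaboration. The closing remark about recovering ordinary EMX learnability needs $\mathcal{P}$ to be the full class of finitely supported distributions.
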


\begin{proof}
If $G \in \mathfrak{L}_1$ satisfies Eq.~(\ref{eq:L_emx_definition}), then $G\in\mathfrak{L}_2$ as well, so the same sample size $d(\epsilon,\delta)$ witnesses learnability relative to $\mathfrak{L}_2$.
\end{proof}

The ``undecidability of learnability'' phenomenon of Ref.~\cite{BenDavid2019} should therefore be read as a statement about a particular (maximally permissive) choice of $\mathfrak{L}$, namely the class of all set-theoretic functions of the form in Eq.~(\ref{eq:general_learner_map}). Once $\mathfrak{L}$ is fixed by operational constraints, the object whose existence is being asserted changes, and with it the logical status of the learnability claim.

The existing learning theory deliberately treats the learner as unconstrained in order to isolate statistical structure. Our viewpoint keeps that statistical objective intact, but insists that the quantifier ``$\exists\,G$'' must be read relative to an explicit admissibility model. In other words, ``learnability'' is not an absolute predicate of $(\mathcal{X},\mathcal{F},\mathcal{P})$ alone; it is a predicate of the pair (\emph{learning problem, admissible inference physics}). When $\mathfrak{L}$ is taken to be the maximal set of all functions, one recovers the standard set-theoretic notion and therefore inherits its logical subtleties. When $\mathfrak{L}$ is fixed by operational constraints (finite description, quantum access, relativistic causality), learnability becomes a statement about the feasibility of a physically meaningful class of procedures, and the meaning of ``there exists a learner'' becomes aligned with experimental semantics.

\subsection{Finite description and the countable collapse}

A fundamental operational constraint is the finite description. In any laboratory, a ``sample'' is ultimately stored, processed, and communicated through finite physical resources: finite memory, finite time, and finite-precision interfaces. Even when the underlying variable is mathematically modeled as continuum-valued (for example, $\mathcal{X}=[0,1]$), what the learner actually receives is a finite record: a finite binary string produced by an instrument (digits on a display, a timestamped detector click, a binned measurement outcome), and any hypothesis it outputs must itself be communicated as a finite record.

This immediately raises a tension with the fully set-theoretic formulation of a learner as an arbitrary function on $\bigcup_n \mathcal{X}^n$ when $\mathcal{X}$ is uncountable. The formal learner may depend on distinctions between points of $\mathcal{X}$ that no finite record can ever reveal, and it may output hypotheses whose ``names'' require non-finitary information about $\mathcal{X}$. To express, in minimal mathematical terms, the idea that both inputs and outputs are finitely describable, one wants the physically distinguishable domain to admit a countable codebook. A minimal mathematical avatar of ``finite description'' is, therefore, \emph{countability}. Concretely, a countable $\mathcal{X}$ can be injected into $\mathbb{N}$ and hence represented by finite strings, while an uncountable $\mathcal{X}$ cannot be exhaustively named by any finitary codebook. The point is that this seemingly small change already collapses the set-theoretic pathology of the Ben-David example into an explicit and ZFC-provable learnability statement.

To connect this observation to the Ben-David phenomenon, we consider the canonical hypothesis class $\mathcal{F}_{\textrm{fin}}^{\mathcal{X}}$ of all finite subsets of $\mathcal{X}$. In the fully set-theoretic model, {\bf Theorem~\ref{thm:compression_cardinality}} shows that constant-size monotone compression for $\mathcal{F}_{\textrm{fin}}^{\mathcal{X}}$ is equivalent to the cardinal bound $\abs{\mathcal{X}} \le \aleph_k$ for some finite $k$, which is precisely where independence enters when $\mathcal{X}=[0,1]$. Operationally, however, once $\mathcal{X}$ is countable (or once only a countable subset is physically distinguishable), the strongest nontrivial compression exists outright and can be implemented by an explicit finitary rule.

\begin{theorem}[A computable $2 \to 1$ monotone compression for countable domains]
\label{thm:countable_two_to_one}
Let $\mathcal{X}$ be a countable set, and fix an injection $\idx:\mathcal{X} \to \mathbb{N}$. Then, the class $\mathcal{F}_{\textrm{fin}}^{\mathcal{X}}$ admits a $2 \to 1$ monotone compression scheme with reconstruction function
\begin{eqnarray}
\eta(x) := \{y \in \mathcal{X}: \idx(y) \le \idx(x)\}.
\label{eq:eta_countable}
\end{eqnarray}
Moreover, if $\idx$ is given as part of the model, the map $x \mapsto \eta(x)$ is finitary and effectively describable.
\end{theorem}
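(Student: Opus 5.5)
The plan is to verify the two defining requirements of a $2\to 1$ monotone compression scheme directly for the reconstruction $\eta$ in Eq.~(\ref{eq:eta_countable}), and then to comment on effectiveness. This is the $k=0$ case of the forward direction of \textbf{Theorem~\ref{thm:compression_cardinality}}, made explicit: the injection $\idx$ plays the role of the well-order $\prec_0$ of type $\omega$.

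First we check well-definedness. We need $\eta(x)\in\mathcal{F}_{\textrm{fin}}^{\mathcal{X}}$ for every $x$. Since $\idx$ is injective, $\eta(x)$ injects into $\{0,1,\ldots,\idx(x)\}$. Hence $\abs{\eta(x)}\le \idx(x)+1<\infty$, so $\eta$ indeed maps $\mathcal{X}^1$ into $\mathcal{F}_{\textrm{fin}}^{\mathcal{X}}$. Note also that $x\in\eta(x)$.

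Second we check the monotone compression property. Take any $F\in\mathcal{F}_{\textrm{fin}}^{\mathcal{X}}$ and any $x_1,x_2\in F$. Choose the index $i\in\{1,2\}$ maximizing $\idx(x_i)$, with ties (which occur only if $x_1=x_2$) broken arbitrarily. Then $\idx(x_j)\le\idx(x_i)$ for both $j$, so $\{x_1,x_2\}\subseteq\eta(x_i)$. This is exactly condition Eq.~(\ref{eq:monotone_compression_condition}) with $m=2$, $d=1$. The target set $F$ plays no role beyond containing the sample, since $\mathcal{F}_{\textrm{fin}}^{\mathcal{X}}$ contains every finite set.

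Third we address the effectiveness claim. Given $\idx$ as an oracle or computable naming, the compressor only compares two natural numbers. The reconstruction $\eta(x)$ is named by the single natural number $\idx(x)$: it is the finite set of points whose codes lie in $\{0,\ldots,\idx(x)\}\cap\idx(\mathcal{X})$. Membership of $y$ in $\eta(x)$ is decided by one comparison. If the range of $\idx$ is decidable, the set can also be listed explicitly.

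None of these steps is a real obstacle. The only delicate point is the ``effectively describable'' clause, which must be read relative to $\idx$. Without a decidable range or inverse, one can only describe $\eta(x)$ by its bound $\idx(x)$, and I would state the claim in that relative form. No choice principle is needed beyond the given injection, which is what makes the statement ZFC-provable. Combined with \textbf{Lemma~\ref{lem:learnability_compression_equivalence}} (with $m=1$), it yields weak EMX learnability on countable domains.
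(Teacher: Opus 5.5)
Your proposal is correct and follows essentially the same argument as the paper: finiteness of $\eta(x)$ from injectivity of $\idx$ (at most $\idx(x)+1$ elements), then compressing to the $\idx$-maximal of the two sample points so that both lie in its initial segment. Your additional remarks are accurate refinements of points the paper only asserts: ties occur only when $x_1=x_2$, the result is the $k=0$ case of \textbf{Theorem~\ref{thm:compression_cardinality}}, and the effectiveness clause should be read relative to $\idx$.
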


\begin{proof}
The injection $\idx$ induces a total order on $\mathcal{X}$ by declaring $y \preceq x$ whenever $\idx(y) \le \idx(x)$. The reconstruction map Eq.~(\ref{eq:eta_countable}) is simply the ``initial segment'' of this order up to $x$. Fix any $F \in \mathcal{F}_{\textrm{fin}}^{\mathcal{X}}$ and any two points $x_1,x_2 \in F$. Let $x_{\max}$ be the point among $\{x_1,x_2\}$ maximizing $\idx(\cdot)$ (equivalently, the $\preceq$-maximum). The compression rule sends $x_{\max}$. It remains to verify that $\eta(x_{\max})$ is a valid monotone reconstruction. First, $\eta(x_{\max})$ is finite: because $\idx$ is injective, the preimage $\idx^{-1}(\{0,1,\dots,\idx(x_{\max})\})$ contains at most one element per integer, hence has size at most $\idx(x_{\max})+1$. Second, $\eta(x_{\max})$ contains both sample points: by definition $\idx(x_1) \le \idx(x_{\max})$ and $\idx(x_2) \le \idx(x_{\max})$, so $x_1,x_2 \in \eta(x_{\max})$. Therefore,
\begin{eqnarray}
\{x_1,x_2\} \subseteq \eta(x_{\max}),
\label{eq:two_to_one_property}
\end{eqnarray}
which is exactly the $2 \to 1$ monotone compression condition for $\mathcal{F}_{\textrm{fin}}^{\mathcal{X}}$.
\end{proof}

{\bf Theorem~\ref{thm:countable_two_to_one}} is intentionally elementary, but its role is foundational: it shows that once the domain admits an explicit naming/encoding, the weakest monotone compression (dropping just one point) is not only possible but constructive. In particular, it provides a concrete finitary witness that replaces the set-theoretic ``there exists'' in the unconstrained formulation.

By combining {\bf Theorem~\ref{thm:countable_two_to_one}} with {\bf Lemma~\ref{lem:learnability_compression_equivalence}}, we can yield an immediate consequence: the canonical ``finite-subset'' EMX task is provably learnable once the domain is constrained to be countable, and the learner can be taken to be explicitly definable from an indexing of the domain.

\begin{corollary}[Weak EMX learnability for $\mathcal{F}_{\textrm{fin}}^{\mathcal{X}}$ on countable domains]
\label{cor:countable_emx}
If $\mathcal{X}$ is countable, then $\mathcal{F}_{\textrm{fin}}^{\mathcal{X}}$ is $(1/3, 1/3)$-EMX learnable over the class of finitely supported distributions on $\mathcal{X}$. In particular, this learnability statement is provable in ZFC.
\end{corollary}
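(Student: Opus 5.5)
The plan is to chain the two results already established: Theorem~\ref{thm:countable_two_to_one} supplies a monotone compression scheme, and the implication (ii)$\Rightarrow$(i) of Lemma~\ref{lem:learnability_compression_equivalence} turns it into a weak EMX learner. Three things need checking.

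\textbf{Step 1: the hypotheses of the lemma hold.} The lemma requires $\mathcal{F}_{\textrm{fin}}^{\mathcal{X}}$ to be a union bounded family of finite subsets of $\mathcal{X}$. This is immediate, since $F_1\cup F_2$ is finite whenever $F_1$ and $F_2$ are, and so $F_3:=F_1\cup F_2$ lies in the class.

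\textbf{Step 2: supply condition (ii) with $m=1$.} Since $\mathcal{X}$ is countable, an injection $\idx:\mathcal{X}\to\mathbb{N}$ exists. Fixing one, Theorem~\ref{thm:countable_two_to_one} gives a $2\to1$ monotone compression scheme $\eta(x)=\{y:\idx(y)\le\idx(x)\}$. Two small points need care here.

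\begin{itemize}
\item Finite subsets of size $0$ or $1$ and sequences with repetitions are harmless. A repeated pair $(x,x)$ compresses to $x$ itself, and the case analysis in the proof of Theorem~\ref{thm:countable_two_to_one} already covers arbitrary sequences $x_1,x_2\in F$.
\item The ordered-index requirement $i_1<\cdots<i_d$ in the definition of monotone compression is satisfied trivially when $d=1$.
\end{itemize}

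Applying (ii)$\Rightarrow$(i) of Lemma~\ref{lem:learnability_compression_equivalence} then yields $(1/3,1/3)$-EMX learnability. Concretely, one can make this explicit: the proof of that direction produces a sample size $n$ satisfying the conditions in Eq.~(\ref{eq:n_choice_conditions}) with $m=1$ (for instance, $2n\,e^{-(n-1)/18}\le 1/6$ together with $n\ge 6$). The learner is an empirical maximizer over the candidates $\eta_n(x_i)$, $i=1,\ldots,n$. For this particular $\eta$, the boosting in Step~1 of that proof can even be taken to be the same initial-segment map, since the initial segment up to the $\idx$-maximal sample point already contains all $n$ sample points.

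\textbf{Step 3: provability in ZFC.} This is the only part that needs genuine care, because the whole point is to contrast with the Ben-David independence. I will argue that every ingredient above is a theorem of ZFC (indeed of ZF):

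\begin{itemize}
\item The existence of an injection into $\mathbb{N}$ is just the definition of countability.
\item Theorem~\ref{thm:countable_two_to_one} uses no cardinal arithmetic beyond this.
\item The proof of (ii)$\Rightarrow$(i) uses only union boundedness, Hoeffding's inequality for finitely supported distributions, and a finite union bound.
\end{itemize}

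One subtle point in the lemma's proof is the use of choice in selecting the $\eta_{n+1}(T)$ and the witness sets. For $\mathcal{F}_{\textrm{fin}}$, these choices can be made canonically by taking actual unions, so no appeal beyond ZFC (and not even full choice) is needed. The statement ``for every countable $\mathcal{X}$, $\mathcal{F}_{\textrm{fin}}^{\mathcal{X}}$ is weakly EMX learnable'' is therefore a ZFC theorem. I expect the main obstacle to be stating this metatheoretic claim cleanly rather than any computation. The point to stress is that, unlike Theorem~\ref{thm:compression_cardinality} applied to $[0,1]$, nothing here depends on the value of $2^{\aleph_0}$.
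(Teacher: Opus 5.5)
Your proposal is correct and follows the paper's own argument. The paper likewise takes the $2\to1$ scheme from Theorem~\ref{thm:countable_two_to_one}, notes that $\mathcal{F}_{\textrm{fin}}^{\mathcal{X}}$ is union bounded, and applies direction (ii)$\Rightarrow$(i) of Lemma~\ref{lem:learnability_compression_equivalence} with $m=1$. Your extra remarks are accurate refinements that the paper leaves implicit: the explicit sample-size conditions, the observation that the initial-segment map already serves as the boosted $n\to1$ scheme, and the point that the union-boundedness witnesses can be taken to be actual unions, so nothing depends on $2^{\aleph_0}$.
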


\begin{proof}
By {\bf Theorem~\ref{thm:countable_two_to_one}}, $\mathcal{F}_{\textrm{fin}}^{\mathcal{X}}$ admits a $(m+1)\to m$ monotone compression scheme with $m=1$. The family $\mathcal{F}_{\textrm{fin}}^{\mathcal{X}}$ is union bounded (finite unions of finite sets are finite), so {\bf Lemma~\ref{lem:learnability_compression_equivalence}} applies and yields $(1/3,1/3)$-EMX learnability.
\end{proof}

{\bf Corollary~\ref{cor:countable_emx}} does not contradict the independence phenomenon for $\mathcal{X}=[0,1]$. Rather, it sharpens its operational interpretation. The set-theoretic undecidability is driven by two idealizations: (i) treating $\mathcal{X}$ as a continuum of perfectly distinguishable points, and (ii) allowing the learner to be an arbitrary function on that continuum. If instead the learner only ever interacts with $\mathcal{X}$ through a finite-description interface---an encoding, a rounding rule, a coarse-graining, or any other physical measurement map---then the effective domain seen by the learner is countable, and the corresponding EMX instance becomes learnable in a direct and provable way. In this sense, the countability is not a mere set-theoretic curiosity: it is the minimal mathematical proxy for the empirical fact that data and hypotheses are finitely representable.

\subsection{Quantum and relativistic constraints as restrictions on admissible inference}

Countability is only one facet of physical admissibility. When data are quantum or distributed across spacelike separated parties, the admissible class $\mathfrak{L}$ is restricted not merely by description length, but by fundamental information-theoretic principles. Two paradigmatic examples are the no-cloning and no-signaling principles. In the perspective of this study, these principles matter because they constrain how samples may be accessed and combined, and therefore constrain the set of admissible inference procedures whose existence is quantified in learnability definitions.

\subsubsection{No-cloning} 

In quantum settings, a ``sample'' may be a quantum state $\hat{\rho}$ on a Hilbert space $\mathcal{H}$. A learning procedure can interact with a bounded number of copies $\hat{\rho}^{\otimes d}$ via physically allowed operations (quantum channels, measurements), but it cannot in general manufacture additional independent copies from a single instance. This limitation is formalized by the no-cloning theorem~\cite{WoottersZurek1982,Dieks1982}.

\begin{theorem}[No-cloning]
\label{thm:no_cloning}
Let $\mathcal{H}$ be a Hilbert space with $\dim(\mathcal{H}) \ge 2$, and fix a unit vector $\ket{0} \in \mathcal{H}$. There is no linear isometry $\hat{U}:\mathcal{H}\otimes\mathcal{H} \to \mathcal{H}\otimes\mathcal{H}$ such that
\begin{eqnarray}
\hat{U}\big(\ket{\psi} \otimes \ket{0}\big) = \ket{\psi} \otimes \ket{\psi}
\label{eq:universal_cloner}
\end{eqnarray}
for all quantum states $\ket{\psi} \in \mathcal{H}$.
\end{theorem}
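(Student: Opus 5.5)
The plan is the standard linearity argument via inner products. Suppose, for contradiction, that such a linear isometry $\hat{U}$ exists. Pick two unit vectors $\ket{\psi},\ket{\phi}\in\mathcal{H}$ and take the inner product of the two cloning identities $\hat{U}(\ket{\psi}\otimes\ket{0})=\ket{\psi}\otimes\ket{\psi}$ and $\hat{U}(\ket{\phi}\otimes\ket{0})=\ket{\phi}\otimes\ket{\phi}$. Since $\hat{U}$ is an isometry, $\hat{U}^\dagger\hat{U}=\mathbb{1}$, so the left-hand side gives $\braket{\psi}{\phi}\braket{0}{0}=\braket{\psi}{\phi}$. The right-hand side gives $\braket{\psi}{\phi}^2$. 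Hence every pair of unit vectors must satisfy $s=s^2$ with $s:=\braket{\psi}{\phi}$, so $s\in\{0,1\}$.

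It then remains to exhibit a pair violating this, and here $\dim(\mathcal{H})\ge 2$ is used. Take orthonormal $\ket{e_1},\ket{e_2}$ and set $\ket{\psi}=\ket{e_1}$, $\ket{\phi}=(\ket{e_1}+\ket{e_2})/\sqrt{2}$. Then $\braket{\psi}{\phi}=1/\sqrt{2}\notin\{0,1\}$, which is the desired contradiction.

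As an alternative route that avoids the isometry property and uses only linearity, I would apply $\hat{U}$ to $\ket{\phi}\otimes\ket{0}$ with $\ket{\phi}=(\ket{e_1}+\ket{e_2})/\sqrt2$. Linearity gives $(\ket{e_1}\ket{e_1}+\ket{e_2}\ket{e_2})/\sqrt2$. The cloning requirement instead gives $\ket{\phi}\otimes\ket{\phi}$, which also contains the cross terms $\ket{e_1}\ket{e_2}$ and $\ket{e_2}\ket{e_1}$. These two vectors differ, because the cross terms are linearly independent of $\ket{e_1}\ket{e_1}$ and $\ket{e_2}\ket{e_2}$ in $\mathcal{H}\otimes\mathcal{H}$.

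There is no real obstacle. The only care needed is the quantifier: the identity is required for \emph{all} states, including superpositions. Also, the phrase ``for all quantum states'' should be read as all unit vectors, so the fixed ancilla $\ket{0}$ normalizes correctly in $\braket{0}{0}=1$.
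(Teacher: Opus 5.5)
Your main argument is correct and is essentially the paper's proof: inner-product preservation forces $\braket{\psi}{\phi}=\braket{\psi}{\phi}^2$, which a pair of unit vectors that are neither orthogonal nor parallel contradicts. The paper's witness is $\cos\theta\ket{e_0}+\sin\theta\ket{e_1}$, and yours is the case $\theta=\pi/4$; your alternative linearity-only route is also valid and slightly stronger, since it shows that no linear map at all, not just no isometry, can clone.
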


\begin{proof}[Proof sketch]
Assume such an isometry $\hat{U}$ existed. Choose two distinct non-orthogonal unit vectors $\ket{\psi}, \ket{\phi}$ with $0 < \abs{\braket{\psi}{\phi}} < 1$. The preservation of inner products under $\hat{U}$ would imply $\braket{\psi}{\phi} = \braket{\psi}{\phi}^2$, forcing $\abs{\braket{\psi}{\phi}} \in \{0,1\}$, a contradiction (See appendix~\ref{appendix:nocloning} for complete proof).
\end{proof}

The no-cloning turns the ``number of samples'' into a physical resource: one cannot freely replicate an unknown training example to amplify information. As a result, quantum learning problems naturally trade sample complexity for copy complexity. Moreover, admissibility is no longer ``any function of the data'' but ``any quantum protocol acting on the provided copies and outputting a classical description.'' This restriction of $\mathfrak{L}$ is not a computational afterthought; it changes what inference is even defined to mean for quantum data, and it leads to quantitative lower bounds (e.g., for state discrimination) that have no classical analogue.

\subsubsection{No-signaling} 

In distributed or relativistic settings, the admissibility is constrained by the causal structure of information flow. A minimal formal constraint is no-signaling~\cite{PopescuRohrlich1994,Barrett2005}: local choices of operations at one location cannot instantaneously affect the statistics of outcomes at a spacelike separated location. In quantum theory this property is not an additional postulate but a consequence of the tensor-product structure and the completeness of measurements.

\begin{definition}[No-signaling correlations]
\label{def:no_signaling}
Let $X, Y$ be sets of inputs and $A, B$ sets of outputs. A conditional distribution $p(a,b | x,y)$ is no-signaling if for all $x,x' \in X$ and all $y,y' \in Y$,
\begin{eqnarray}
\sum_{a \in A} p(a,b | x,y) = \sum_{a \in A} p(a,b | x',y) \quad (\forall \, b \in B),
\label{eq:no_signal_A_to_B}
\end{eqnarray}
and
\begin{eqnarray}
\sum_{b \in B} p(a,b | x,y) = \sum_{b \in B} p(a,b | x,y') \quad (\forall \, a \in A).
\label{eq:no_signal_B_to_A}
\end{eqnarray}
\end{definition}

\begin{theorem}[Quantum mechanics is no-signaling~\cite{NielsenChuang2010}]
\label{thm:quantum_no_signaling}
Let $\hat{\rho}_{AB}$ be a bipartite quantum state on $\mathcal{H}_A\otimes\mathcal{H}_B$. For each input $x \in X$ let $\{\hat{M}_a^x\}_{a \in A}$ be a POVM on $\mathcal{H}_A$, and for each $y \in Y$ let $\{\hat{N}_b^y\}_{b \in B}$ be a POVM on $\mathcal{H}_B$. Define
\begin{eqnarray}
p(a,b|x,y) := \tr{(\hat{M}_a^x \otimes \hat{N}_b^y) \hat{\rho}_{AB}}.
\label{eq:quantum_correlation}
\end{eqnarray}
Then, $p(a,b | x,y)$ is no-signaling in the sense of {\bf Definition~\ref{def:no_signaling}}.
\end{theorem}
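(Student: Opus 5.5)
The plan is to verify the two no-signaling conditions of \textbf{Definition~\ref{def:no_signaling}} directly, using only linearity of the trace, the POVM completeness relations, and the partial trace. By the symmetry between the two parties, it suffices to prove Eq.~(\ref{eq:no_signal_A_to_B}); Eq.~(\ref{eq:no_signal_B_to_A}) follows by exchanging the roles of $A$ and $B$.

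Fix $x\in X$, $y\in Y$ and $b\in B$. First I will sum Eq.~(\ref{eq:quantum_correlation}) over $a$ and pull the sum inside the trace by linearity:
\begin{eqnarray}
\sum_{a\in A} p(a,b|x,y) = \tr{\Big(\big(\textstyle\sum_{a} \hat{M}_a^x\big)\otimes \hat{N}_b^y\Big)\hat{\rho}_{AB}}.
\end{eqnarray}
Completeness of the POVM $\{\hat{M}_a^x\}_a$, namely $\sum_a \hat{M}_a^x=\hat{\mathbb{1}}_A$, then reduces the right-hand side to $\tr{(\hat{\mathbb{1}}_A\otimes\hat{N}_b^y)\hat{\rho}_{AB}}$. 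Using the defining property of the partial trace, this equals $\tr{\hat{N}_b^y\hat{\rho}_B}$ with $\hat{\rho}_B=\mathrm{Tr}_A\hat{\rho}_{AB}$. This expression does not depend on $x$, so replacing $x$ by $x'$ gives the same value, which is exactly Eq.~(\ref{eq:no_signal_A_to_B}).

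The only step requiring any care is the interchange of the sum over $a$ with the trace when $A$ is countably infinite. I would handle it by noting that the partial sums $\sum_{a\in A_0}\hat{M}_a^x$, over finite $A_0\subseteq A$, are positive operators increasing to $\hat{\mathbb{1}}_A$ in the weak operator topology. Since $\hat{\rho}_{AB}$ is trace class, monotone convergence then justifies the exchange. For finite alphabets, which is the setting of \textbf{Definition~\ref{def:no_signaling}} as used later, this issue does not arise and the argument is purely algebraic.
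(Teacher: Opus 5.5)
Your proposal is correct and follows essentially the same route as the paper: sum over $a$, use linearity of the trace and POVM completeness to reduce the marginal to $\tr{(\hat{\openone}_A \otimes \hat{N}_b^y)\hat{\rho}_{AB}}$, which is independent of $x$, and treat the other marginal symmetrically (the paper writes it out explicitly rather than invoking symmetry). Your additional remarks are sound refinements of the paper's purely algebraic argument. These are the identification with the reduced state $\hat{\rho}_B$ and the monotone-convergence justification of the sum--trace interchange for countably infinite outcome sets.
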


\begin{proof}[Proof sketch]
Sum Eq.~(\ref{eq:quantum_correlation}) over $a$ and use POVM completeness $\sum_a \hat{M}_a^x=I_A$ to obtain
\begin{eqnarray}
\sum_{a \in A} p(a,b | x,y) = \tr{(I_A \otimes \hat{N}_b^y) \hat{\rho}_{AB}},
\end{eqnarray}
which is independent of $x$. The other marginal is analogous. Full proof is provided in appendix~\ref{appendix:nosignaling}.
\end{proof}

The no-signaling constrains what distributed inference protocols can do without communication: the local output statistics available to one party cannot depend on the other party's input choice. In our framework, the admissible learner behaviors in relativistic/distributed scenarios must lie in a constraint set carved out by Eq.~(\ref{eq:no_signal_A_to_B}) and Eq.~(\ref{eq:no_signal_B_to_A}). When the relevant input/output alphabets are finite, these constraints are linear and define a polytope; consequently, the existence of a protocol meeting a target success criterion becomes a concrete feasibility question over an explicitly described set.

Taken together, the countable collapse of {\bf Corollary~\ref{cor:countable_emx}} and the structural constraints of {\bf Theorem~\ref{thm:no_cloning}} and {\bf Theorem~\ref{thm:quantum_no_signaling}} motivate a general viewpoint. The set-theoretic notion of learnability treats ``learner existence'' as a purely mathematical fact about unconstrained functions. Operationally, however, learnability is a property of what can be achieved by admissible processes under concrete information constraints. Making the admissible class explicit provides a principled bridge between statistical learning theory and the physics of information, and it clarifies which aspects of the Ben-David undecidability phenomenon are artifacts of non-operational idealizations and which reflect deeper limitations that persist under physically meaningful restrictions.

\section{A Proposed Framework: Physics-Aware Learnability (PL)}

The preceding sections highlight a tension that is easy to overlook when the learnability is treated purely as a property of set-theoretic functions. On the one hand, the abstraction ``a learner is an arbitrary map from samples to hypotheses'' enables powerful structural results and isolates statistical phenomena. On the other hand, the EMX independence phenomenon shows that this abstraction can turn learnability into a statement whose truth may depend on set-theoretic axioms~\cite{BenDavid2019}. The natural response is not to abandon abstraction, but to place the abstraction at the correct boundary: the definition of learnability should quantify over admissible inference procedures, and the admissibility should be specified by an explicit model of physical access to data. Here, we formalize this idea as \emph{physics-aware learnability} (PL) and organizes it into a sequence of operational steps (Fig.~\ref{fig:pl_framework}): we first define a learning task in a theory-agnostic way, then specify an admissible family of physical protocols, and only then ask whether some admissible protocol can achieve near-optimal utility with high probability.

\begin{figure}[t]
\centering
\includegraphics[width=\linewidth]{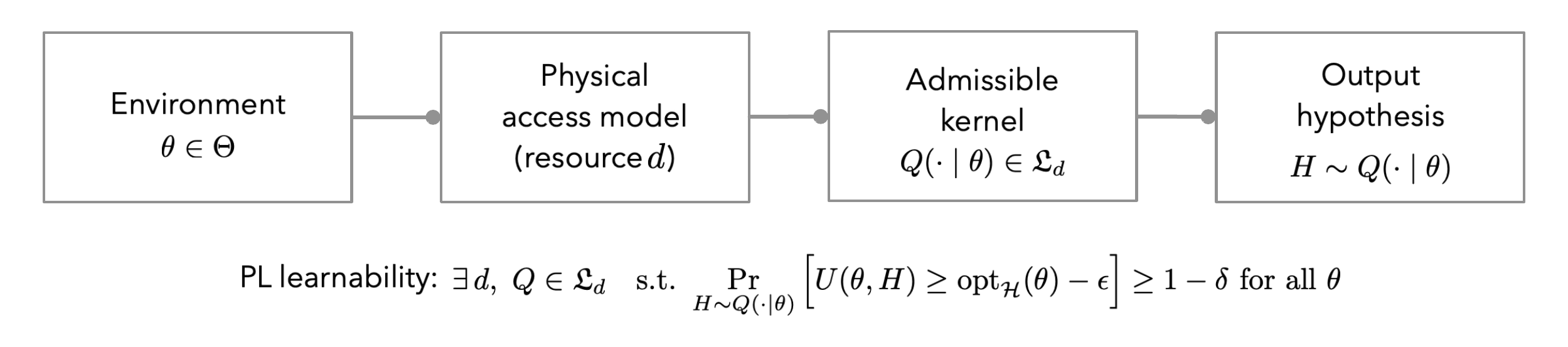}
\caption{Schematic of physics-aware learnability (PL). A learning task $(\Theta,\mathcal{H},U)$ is paired with a physically specified access model that determines, for each sample budget $d$, a set $\mathfrak{L}_d$ of admissible conditional output laws $Q(\cdot \mid \theta)$. PL learnability asks whether some $Q \in \mathfrak{L}_d$ achieves near-optimal utility with high probability uniformly over environments.}
\label{fig:pl_framework}
\end{figure}

The guiding intuition is simple, but its consequences are easy to miss if one starts from purely set-theoretic learners. In an actual experiment, ``data'' are not abstract points in a set, but physical systems prepared by an unknown source and delivered through an interface with finite resources. A learning procedure is itself a physical process. Specifically, it interrogates the available systems subject to the dynamical rules of a background theory, and it produces a finite classical record---a bit string that names the final hypothesis. This is where the operational boundary lies. Beyond that boundary, the set theory may endow us with arbitrary functions on continua. However, on the laboratory side, only those input--output behaviors that can arise from the admissible physical interactions should count as ``learners''. Put differently, the learnability should not be a statement about the existence of a function in a mathematical universe. Instead, it should be a statement about the existence of a protocol consistent with a specified access model.

Once one takes this boundary seriously, several familiar ``background assumptions'' become structural. In quantum experiments, the no-cloning forbids the routine classical idealization that one can freely duplicate an unknown sample; the measurement back-action constrains what can be inferred from finitely many copies; and in distributed settings, the no-signaling constrains which correlations can be generated without communication. Even in fully classical laboratories, finite precision and finite memory mean that a continuum-valued variable is never accessed directly: the interface returns a coarse-grained label. PL elevates all such constraints from informal caveats to an explicit part of the learning model.

\subsection{Operational primitives: environments, hypotheses, and admissible protocols}

A useful slogan is: before asking what algorithm we will run, we should decide what we will call a success. In PL, the notion of success is encoded in a task, while the physically meaningful notion of what we are allowed to do is encoded in an access model. Separating these two ingredients is what ultimately lets us compare classical, quantum, and coarse-grained settings within one common language.

\begin{definition}[Learning task]
\label{def:task}
A learning task is a triple $(\Theta, \mathcal{H}, U)$ where $\Theta$ is a set of environments, $\mathcal{H}$ is a set of hypotheses, and
\begin{eqnarray}
U:\Theta \times \mathcal{H} \to [0,1]
\label{eq:utility}
\end{eqnarray}
is a utility (or reward) functional. For $\theta \in \Theta$, define the optimal value
\begin{eqnarray}
\textrm{opt}_{\mathcal{H}}(\theta) := \sup_{h \in \mathcal{H}} U(\theta,h).
\label{eq:opt_task}
\end{eqnarray}
\end{definition}

The three components play conceptually distinct roles. The environment parameter $\theta \in \Theta$ is the unknown ``state of the world'' one aims to learn from (a probability distribution in classical statistics, a quantum state or channel in quantum settings, a no-signaling box in distributed scenarios, and so on). The hypothesis $h \in\mathcal{H}$ is what the learner is allowed to output as its final explanation/decision (a classifier, a regressor, a finite set, a quantum label, etc.). Finally, the utility $U(\theta,h)$ specifies what we will optimize: it is the operational score by which we will judge whether the learner did well in environment $\theta$ when it outputs $h$. Normalizing to $[0,1]$ is convenient and entails no loss of generality (any bounded performance metric can be rescaled). Crucially, {\bf Definition~\ref{def:task}} does not say how $h$ is obtained from data; it only defines what would count as near-optimal performance if we had the power to choose any hypothesis in $\mathcal{H}$.

To avoid importing set-theoretic idealizations through the output space, we explicitly encode the physical fact that a learner produces a finite classical description. It is therefore natural to treat the hypothesis space as representable.

\begin{definition}[Representable hypothesis class]
\label{def:representable_hypotheses}
A hypothesis class $\mathcal{H}$ is representable if there exists an injection $\textrm{enc}:\mathcal{H} \to \{0,1\}^\star$. Equivalently, $\mathcal{H}$ is at most countable. We implicitly identify hypotheses with their encodings whenever a learning procedure is required to output a hypothesis.
\end{definition}

The representability is deliberately minimal: it does not impose efficiency, runtime bounds, or any specific computational model. It only enforces the operational constraint that an output hypothesis must be nameable by a finite classical string. This is precisely the point at which purely set-theoretic pathologies can enter if left unchecked. When $\mathcal{H}$ contains objects that require genuinely non-finitary descriptions (for example, arbitrary subsets of a continuum), quantifying over ``all learners'' can silently quantify over non-operational outputs. By keeping $\mathcal{H}$ representable, PL ensures that any difficulty in learnability comes from the \emph{access model} (what information can be extracted) rather than from unnameable outputs.

The access model is captured by specifying, for each sample budget $d$, which conditional output laws are physically attainable.

\begin{definition}[Admissible protocol family]
\label{def:protocol_family}
Fix a learning task $(\Theta,\mathcal{H},U)$ with representable $\mathcal{H}$. An admissible protocol family is a sequence $\mathfrak{L}=\{\mathfrak{L}_d\}_{d\in\mathbb{N}}$ where each $\mathfrak{L}_d$ is a set of Markov kernels
\begin{eqnarray}
Q:\Theta \to \hat{\Delta}(\mathcal{H}), \quad \theta \mapsto Q(\cdot|\theta),
\label{eq:kernel_def}
\end{eqnarray}
interpreted as: with sample budget $d$, the learner can output a random hypothesis $H \sim Q(\cdot|\theta)$ when the environment is $\theta$. We assume $\mathfrak{L}_d$ is closed under classical post-processing: if $Q \in \mathfrak{L}_d$ and $\Pi:\mathcal{H} \to \hat{\Delta}(\mathcal{H})$ is a Markov kernel (a randomized relabeling), then $\Pi \circ Q \in \mathfrak{L}_d$. We also assume convexity: if $Q_1,Q_2 \in \mathfrak{L}_d$ and $\lambda \in [0,1]$, then $\lambda Q_1 + (1-\lambda) Q_2 \in \mathfrak{L}_d$.
\end{definition}

This kernel-based viewpoint is the operational heart of PL. A protocol may use internal randomness, adaptively interrogate its samples, or carry out complicated physical dynamics; nevertheless, once we only care about whether it succeeds, all of that structure collapses to the conditional output law $Q(\cdot|\theta)$. The family $\mathfrak{L}_d$ is therefore the correct object to encode physical constraints: it is the set of all behaviors that are achievable with $d$ units of access to the environment. The closure assumptions reflect basic operational facts: one may randomize between two procedures, and any physically produced classical record can be classically reprocessed without violating physical laws. Different physical models correspond to different choices of $\mathfrak{L}$ (classical i.i.d. access, $d$-copy quantum access, no-signaling access, coarse-grained access, and so on). With these primitives in place, we can now state the learnability notion that PL proposes.

\subsection{Physics-aware learnability}

We now define the learnability relative to an admissible protocol family. The definition is intentionally parallel to standard PAC/EMX definitions, with the crucial change that quantification over learners is restricted to $\mathfrak{L}$ rather than ranging over arbitrary set-theoretic functions.

\begin{definition}[PL learnability]
\label{def:pl_learnability}
Let $(\Theta, \mathcal{H}, U)$ be a learning task and let $\mathfrak{L}=\{\mathfrak{L}_d\}_{d \in \mathbb{N}}$ be an admissible protocol family. Fix $\epsilon,\delta \in (0,1)$. We say that the task is $(\epsilon,\delta)$-learnable in the PL sense relative to $\mathfrak{L}$ if there exists $d \in \mathbb{N}$ and $Q \in \mathfrak{L}_d$ such that for every $\theta \in \Theta$,
\begin{eqnarray}
\Pr_{H\sim Q(\cdot|\theta)}\left[ U(\theta,H) \ge \textrm{opt}_{\mathcal{H}}(\theta) - \epsilon \right] \ge 1-\delta.
\label{eq:pl_guarantee}
\end{eqnarray}
\end{definition}

Several points are worth drawing out explicitly, because they capture what is genuinely new compared to the standard set-theoretic treatment.

\smallskip
\paragraph*{\bf (i) From learners-as-functions to learners-as-physical behaviors.}
Classically, one often writes a learner as a function $G$ that maps samples to hypotheses, and then defines learnability by existential quantification over all such maps. PL replaces that existential quantification by the existence of a kernel $Q \in \mathfrak{L}_d$. This is more than a change of notation: a kernel is exactly the operational object that a physical learner induces. It packages all randomness, adaptivity, and internal degrees of freedom into a single conditional law that must be achievable under the physical access model.

\smallskip
\paragraph*{\bf (ii) Learnability becomes a statement about a \emph{pair}: task + access model.}
{\bf Definition~\ref{def:pl_learnability}} makes explicit that learnability is not an absolute statement about $(\Theta,\mathcal{H},U)$ alone, but a relative statement about the pair $((\Theta,\mathcal{H},U),\mathfrak{L})$. This is not a weakness. Rather, it is the formal expression of the empirical fact that data access is never free. A task might be learnable under classical i.i.d. sampling but not under one-shot quantum access; it might be unlearnable at infinite precision but become learnable once the laboratory interface is coarse-grained. PL is designed to articulate such distinctions without changing the statistical objective.

\smallskip
\paragraph*{\bf (iii) The ``physics'' enters only through admissibility, not through utility.}
The utility $U$ expresses what is valued (for EMX, captured mass; for state discrimination, success probability; for estimation, fidelity or risk), while $\mathfrak{L}$ expresses what is possible. This separation is methodologically important, because it lets one hold the learning objective fixed and ask how different physical constraints reshape the feasible set of behaviors. In particular, PL avoids the ambiguity of mixing physical constraints into the performance metric itself.

\smallskip
\paragraph*{\bf (iv) Why this matters for the Ben-David undecidability phenomenon.}
The independence result for EMX arises when learners are allowed to be arbitrary set-theoretic maps on samples from an uncountable domain~\cite{BenDavid2019}. PL intervenes exactly at the operational boundary. In other words, the output space is representable, and the admissible class $\mathfrak{L}$ is specified by a physical access model. When $\mathfrak{L}_d$ is defined by finitary physical constraints (for instance, ``POVMs on $d$ copies'' or ``functions of a discretized observation''), the resulting learnability statements no longer quantify over all abstract functions on a continuum. As we will see later, this turns the ZFC-independence example into a provably learnable operational task under finite precision.

\smallskip
\paragraph*{\bf (v) Sample complexity as a resource.}
Although {\bf Definition~\ref{def:pl_learnability}} is phrased as an existence statement, it naturally induces a notion of \emph{PL sample complexity} (or more generally, resource complexity): the smallest $d$ for which there exists an admissible kernel achieving Eq.~(\ref{eq:pl_guarantee}). In classical settings, $d$ is the number of i.i.d.\ samples. On the other hand, in quantum settings, it becomes a copy complexity; under coarse-graining it can be interpreted as the number of interface queries. In all cases, $d$ is the parameter through which operational constraints enter quantitatively.


\subsection{Recovering standard learning as a degenerate physical model}

The PL template subsumes the classical i.i.d. sample model as a special case. The purpose of the next result is not to re-prove the conventional learning theory, but to certify that PL does not alter the semantics of standard models when the admissible class is chosen to match them.

We illustrate this in the EMX setting. Let $\mathcal{X}$ be a domain and $\mathcal{F} \subseteq \{0,1\}^{\mathcal{X}}$ be a representable family, identified with its range of encodings. Let $\Theta$ be a class of finitely supported distributions $P$ over $\mathcal{X}$. The EMX utility is
\begin{eqnarray}
U(P,f) := P(f),
\label{eq:emx_utility}
\end{eqnarray}
and $\textrm{opt}_{\mathcal{F}}(P)=\sup_{f\in\mathcal{F}}P(f)$ as in Eq.~(\ref{eq:opt_emx}).

Define the classical i.i.d. admissible family $\mathfrak{L}^{\textrm{cl}}$ by declaring that $Q \in \mathfrak{L}^{\textrm{cl}}_d$ if and only if there exists a (possibly randomized) proper learning rule that, on input a sample $S=(x_1,\ldots,x_d) \sim P^d$, outputs $f \in \mathcal{F}$ with law $Q(\cdot | P)$.

\begin{theorem}[PL-EMX coincides with standard EMX under classical i.i.d. access]
\label{thm:pl_equals_emx_classical}
A family $\mathcal{F}$ is $(\epsilon,\delta)$-EMX learnable over $\Theta$ in the standard sense if and only if the learning task $(\Theta,\mathcal{F},U)$ with $U$ given by Eq.~(\ref{eq:emx_utility}) is $(\epsilon,\delta)$-learnable in the PL sense relative to $\mathfrak{L}^{\textrm{cl}}$.
\end{theorem}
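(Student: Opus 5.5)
The plan is to unfold the definitions on both sides and check that they match term by term. Both notions quantify existentially over a sample budget and a procedure, and the success event is literally the same, since $U(P,f)=P(f)$ and $\textrm{opt}_{\mathcal{H}}(P)=\textrm{opt}_{\mathcal{F}}(P)$ with $\mathcal{H}=\mathcal{F}$. The only bookkeeping concerns (a) deterministic versus randomized learners, and (b) pushing the sample-level probability $\Pr_{S\sim P^d}$ forward to the kernel-level probability $\Pr_{H\sim Q(\cdot|P)}$.

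\textbf{($\Rightarrow$).} Suppose $G$ is a standard $(\epsilon,\delta)$-EMX learner for $\mathcal{F}$ over $\Theta$ with sample size $d$. Define
\begin{eqnarray}
Q_G(A\mid P):=\Pr_{S\sim P^d}[G(S)\in A] \quad \text{for } A\subseteq\mathcal{F}.
\end{eqnarray}
This is well defined: $P$ is finitely supported, so $P^d$ is a finitely supported distribution on $\mathcal{X}^d$ and no measurability issue arises. Also, $\mathcal{F}$ is countable, so $Q_G(\cdot|P)\in\Delta(\mathcal{F})$. By construction $Q_G\in\mathfrak{L}^{\textrm{cl}}_d$, since a deterministic proper rule is a special case of a possibly randomized one. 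The pushforward identity then gives
\begin{eqnarray}
\Pr_{H\sim Q_G(\cdot|P)}[P(H)\ge \textrm{opt}_{\mathcal{F}}(P)-\epsilon]=\Pr_{S\sim P^d}[P(G(S))\ge \textrm{opt}_{\mathcal{F}}(P)-\epsilon]\ge 1-\delta
\end{eqnarray}
for every $P\in\Theta$, which is exactly Eq.~(\ref{eq:pl_guarantee}).

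\textbf{($\Leftarrow$).} Suppose $Q\in\mathfrak{L}^{\textrm{cl}}_d$ satisfies Eq.~(\ref{eq:pl_guarantee}). By definition, $Q$ is induced by a possibly randomized proper rule. Write this rule as $G(S,R)$ with internal randomness $R\sim\mu$ independent of $S$. Because the success probability holds in expectation over $R$, it does not immediately give a deterministic learner. I see two routes.

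\emph{Route 1.} If the paper's ``standard sense'' already allows randomized learners, as is customary in EMX where Ref.~\cite{BenDavid2019} phrases learners as possibly randomized functions, then $G(\cdot,R)$ is itself the standard learner and the same pushforward identity finishes the proof.

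\emph{Route 2.} If a deterministic $G$ is required, derandomize by absorbing the randomness into the sample. Use $d'=d+k$ samples, and let the extra samples generate the coin flips. This works poorly in general, since $P$ is arbitrary and may be a point mass.

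I therefore plan to adopt Route 1, and to state explicitly the convention that standard EMX learners may be randomized maps $G:\mathcal{X}^d\times\Omega\to\mathcal{F}$. Under that convention the two definitions coincide verbatim.

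The main obstacle is this semantic point about randomized versus deterministic learners in the converse direction. I will resolve it by fixing the convention rather than by a technical derandomization. The remaining steps are routine applications of the pushforward identity $\Pr_{H\sim Q(\cdot|P)}[E]=\Pr_{S,R}[G(S,R)\in E]$, with the same $d$ on both sides, so the sample complexities coincide as well.
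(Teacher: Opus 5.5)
Your proposal is correct and follows the paper's proof essentially step for step. Both directions are a direct unfolding of the definitions: the standard learner's output law under $S\sim P^d$, averaged over internal randomness, is identified with a kernel $Q(\cdot\mid P)\in\mathfrak{L}^{\textrm{cl}}_d$ at the same budget $d$, and the success event depends only on the output hypothesis. The paper settles the randomized-versus-deterministic issue exactly as your Route~1 does, by writing ``proper (possibly randomized) learner'' already in the ($\Rightarrow$) direction. Present this as a convention adopted here rather than one taken from Ref.~\cite{BenDavid2019}, since the paper's own EMX definition, modeled on that reference, uses a deterministic map $G:\bigcup_n\mathcal{X}^n\to\mathcal{F}$.
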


\begin{proof}
We unpack the two notions and observe that they coincide once the admissible family is chosen to be the classical i.i.d. model.

($\Rightarrow$) Suppose $\mathcal{F}$ is standard $(\epsilon,\delta)$-EMX learnable over $\Theta$. Then, there exists a sample size $d$ and a proper (possibly randomized) learner $G$ such that for every $P \in \Theta$,
\begin{eqnarray}
\Pr_{S \sim P^d}\left[P\big(G(S)\big) \ge \textrm{opt}_{\mathcal{F}}(P)-\epsilon\right] \ge 1-\delta.
\label{eq:standard_emx}
\end{eqnarray}
Let $Q(\cdot | P)$ denote the output distribution of $G(S)$ when $S \sim P^d$ (if $G$ is randomized, the probability in Eq.~(\ref{eq:standard_emx}) also averages over its internal randomness). By construction, this conditional law is realizable under classical sampling, hence $Q\in\mathfrak{L}^{\textrm{cl}}_d$. Moreover, the event inside Eq.~(\ref{eq:standard_emx}) depends only on the output hypothesis, so Eq.~(\ref{eq:standard_emx}) is exactly the PL condition in Eq.~(\ref{eq:pl_guarantee}) for the task $(\Theta,\mathcal{F},U)$.

($\Leftarrow$) Conversely, suppose the PL condition holds relative to $\mathfrak{L}^{\textrm{cl}}$. Then, there exist $d$ and a kernel $Q \in \mathfrak{L}^{\textrm{cl}}_d$ such that for every $P \in \Theta$,
\begin{eqnarray}
\Pr_{H\sim Q(\cdot|P)}\left[P(H) \ge \textrm{opt}_{\mathcal{F}}(P)-\epsilon\right] \ge 1-\delta.
\end{eqnarray}
By definition of $\mathfrak{L}^{\textrm{cl}}_d$, the kernel $Q(\cdot | P)$ is implemented by some proper (possibly randomized) learning rule on i.i.d. samples of size $d$. Running that rule defines a standard EMX learner whose output law is $Q$, and the displayed inequality is exactly the standard EMX guarantee.
\end{proof}

{\bf Theorem~\ref{thm:pl_equals_emx_classical}} is a semantic checkpoint. PL does not compete with classical learning theory; it parameterizes it by making the access model explicit. When $\mathfrak{L}$ is chosen to match the classical i.i.d. interface, nothing changes. The benefit is that once the access model is explicit, we can vary it in principled ways. We next do so for two canonical physical restrictions: quantum access (where admissibility is dictated by measurement theory) and finite precision (where admissibility is dictated by coarse-graining).

\subsection{Quantum PL: admissible learners as POVMs on $d$ copies}

Quantum mechanics provides a particularly clean instantiation of the PL idea because the access model is sharply constrained by postulates of the theory. This style of thinking---treating the interface to quantum data as part of the learning model---is also standard in quantum machine learning~\cite{Biamonte2017QML,Ciliberto2018QML}. If the environment is an unknown quantum state, then ``observing a sample'' means receiving a physical system in that state, and the only admissible interactions are quantum operations followed by a classical readout. The sample budget $d$ is not an accounting device but a physical resource: no-cloning prevents the learner from generating additional i.i.d. samples from a single specimen (cf. {\bf Theorem~\ref{thm:no_cloning}}).

Let $\mathcal{H}$ be a (finite or countable) set of hypotheses. Let $\mathcal{K}$ be a finite-dimensional Hilbert space. An environment is a density operator $\hat{\rho}$ on $\mathcal{K}$, and the $d$-sample resource is $\hat{\rho}^{\otimes d}$ on $\mathcal{K}^{\otimes d}$. A general quantum-to-classical protocol that outputs $h \in \mathcal{H}$ can be modeled as a quantum measurement with outcomes labeled by $\mathcal{H}$. The next theorem states that this modeling is exact: the admissible family $\mathfrak{L}^{\rm q}_d$ for $d$-copy quantum access is precisely the set of kernels induced by POVMs on $\mathcal{K}^{\otimes d}$ (together with the classical post-processing closure already built into {\bf Definition~\ref{def:protocol_family})}.

\begin{figure}[t]
\centering
\includegraphics[width=0.95\linewidth]{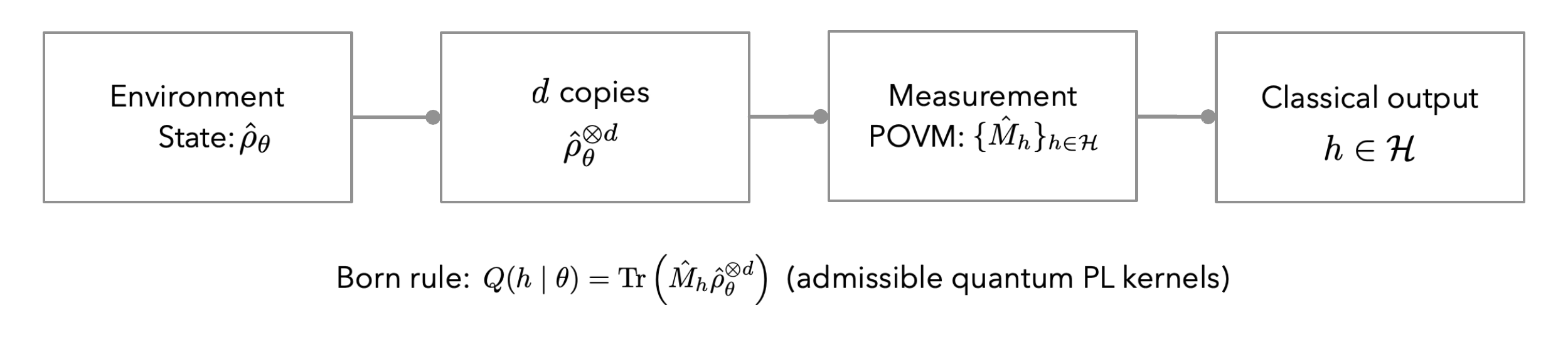}
\caption{Quantum instantiation of PL. Under $d$-copy access to an unknown state $\hat{\rho}_\theta$, any admissible protocol reduces to a POVM $\{\hat{M}_h\}_{h \in \mathcal{H}}$ on $\hat{\rho}_\theta^{\otimes d}$ followed by classical post-processing ({\bf Theorem~\ref{thm:povm_representation}}). The copy budget $d$ is a physical resource due to the no-cloning theorem ({\bf Theorem~\ref{thm:no_cloning}}).}
\label{fig:quantum_pl}
\end{figure}

\begin{theorem}[POVM representation of quantum PL protocols~\cite{NielsenChuang2010,Watrous2018}]
\label{thm:povm_representation}
Fix $d \in \mathbb{N}$ and a finite or countable hypothesis set $\mathcal{H}$. A Markov kernel $Q(\cdot | \hat{\rho})$ from density matrices on $\mathcal{K}$ to $\hat{\Delta}(\mathcal{H})$ is physically realizable by a quantum protocol acting on $\hat{\rho}^{\otimes d}$ and producing a classical outcome in $\mathcal{H}$ if and only if there exists a POVM $\{\hat{M}_h\}_{h \in \mathcal{H}}$ on $\mathcal{K}^{\otimes d}$ such that
\begin{eqnarray}
Q(h | \hat{\rho}) = \tr{\hat{M}_h \hat{\rho}^{\otimes d}} \quad (\forall\, h \in \mathcal{H}).
\label{eq:qkernel_povm}
\end{eqnarray}
\end{theorem}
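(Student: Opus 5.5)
The proof splits into the two directions of the ``if and only if''. Throughout, a quantum protocol acting on $\hat{\rho}^{\otimes d}$ and producing a classical outcome in $\mathcal{H}$ is modeled in the standard way.
\begin{itemize}
\item It may append an ancilla in a fixed state $\hat{\sigma}$ on an auxiliary space $\mathcal{A}$.
\item It applies a sequence of CPTP maps, possibly including intermediate measurements whose outcomes classically control later operations. This is how adaptivity and internal randomness enter.
\item It ends with a readout into a classical register spanned by orthonormal $\{\ket{h}\}_{h\in\mathcal{H}}$.
\end{itemize}
The whole protocol is therefore a single CPTP map $\Phi$ from operators on $\mathcal{K}^{\otimes d}$ to diagonal operators on $\ell^2(\mathcal{H})$. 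We set $Q(h|\hat{\rho}) := \bra{h}\Phi(\hat{\rho}^{\otimes d})\ket{h}$. Intermediate measurements and classical control compose into one CPTP map via the quantum instrument formalism~\cite{Watrous2018}. I would state this modeling assumption explicitly, since it is where ``physically realizable'' acquires its meaning.

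\textbf{($\Leftarrow$)} Given a POVM $\{\hat{M}_h\}$ on $\mathcal{K}^{\otimes d}$, I would realize it by a Naimark dilation. Take an isometry $V:\mathcal{K}^{\otimes d}\to\mathcal{K}^{\otimes d}\otimes\ell^2(\mathcal{H})$ with $V\ket{\psi}=\sum_h \hat{M}_h^{1/2}\ket{\psi}\otimes\ket{h}$. It is an isometry because
\[
V^\dagger V=\sum_h \hat{M}_h = I .
\]
Measuring the register in the $\{\ket{h}\}$ basis then yields outcome $h$ with probability $\tr{\hat{M}_h\hat{\rho}^{\otimes d}}$ by the Born rule, which is Eq.~(\ref{eq:qkernel_povm}). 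For countable $\mathcal{H}$, the sum defining $V$ converges strongly, because $\sum_h\|\hat{M}_h^{1/2}\psi\|^2=\|\psi\|^2$.

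\textbf{($\Rightarrow$)} Given the protocol map $\Phi$, I would define $\hat{M}_h := \Phi^\dagger(\ketbra{h}{h})$, where $\Phi^\dagger$ is the Heisenberg-picture adjoint. The required properties follow directly:
\begin{itemize}
\item \emph{Positivity.} $\Phi$ is completely positive, so $\Phi^\dagger$ is positive and $\hat{M}_h\ge 0$.
\item \emph{Completeness.} $\Phi$ is trace preserving, so $\Phi^\dagger$ is unital. Hence
\[
\sum_h \hat{M}_h=\Phi^\dagger\Big(\sum_h\ketbra{h}{h}\Big)=\Phi^\dagger(I)=I .
\]
For countable $\mathcal{H}$, the sum is understood in the weak or strong sense and justified by normality of $\Phi^\dagger$. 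In finite dimension this reduces to checking $\sum_h\tr{\hat{M}_h\hat{\tau}}=1$ for all density operators $\hat{\tau}$, which holds because $\Phi(\hat{\tau})$ is a normalized diagonal state.
\item \emph{Born rule.} The trace duality $\tr{\hat{M}_h\hat{\rho}^{\otimes d}}=\tr{\ketbra{h}{h}\Phi(\hat{\rho}^{\otimes d})}=Q(h|\hat{\rho})$ gives Eq.~(\ref{eq:qkernel_povm}).
\end{itemize}
If an ancilla $\hat{\sigma}$ is present, I would first absorb it by replacing $\Phi$ with $\Phi(\cdot\otimes\hat{\sigma})$, which is again CPTP. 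Equivalently, $\hat{M}_h=\mathrm{Tr}_{\mathcal{A}}[(I\otimes\hat{\sigma})\,\tilde{M}_h]$, and this is still a POVM.

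The key point is that the POVM is independent of $\hat{\rho}$. It depends only on the protocol, so the same $\{\hat{M}_h\}$ works for all environments simultaneously.

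I expect the main obstacle to be justifying that every adaptive protocol collapses to one CPTP map. Classical post-processing is also needed for consistency with Definition~\ref{def:protocol_family}. For that step I would show directly that POVM-induced kernels are closed under post-processing: if $\Pi$ is a Markov kernel, then $\hat{M}'_{h'}=\sum_h \Pi(h'|h)\hat{M}_h$ is again a POVM. This confirms that the characterized family already satisfies the closure assumptions.
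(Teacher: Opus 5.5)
Your proposal is correct and follows the paper's route. For ($\Rightarrow$) the paper likewise models the protocol as a CPTP map $\hat{\Phi}$ into a classical register, sets $\hat{M}_h=\hat{\Phi}^\dagger(\ketbra{h}{h})$, and uses complete positivity and unitality of the adjoint; for ($\Leftarrow$) it simply invokes the Born rule for measuring the POVM. Your Naimark dilation, ancilla absorption, normality remark for countable $\mathcal{H}$, and post-processing closure check are sound elaborations of steps the paper takes as modeling assumptions, not a different argument.
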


\begin{proof}
($\Rightarrow$) Consider any physically realizable protocol on $\hat{\rho}^{\otimes d}$ with classical output in $\mathcal{H}$. Without loss of generality, model the protocol as a completely positive trace-preserving (CPTP) map $\hat{\Phi}$ from operators on $\mathcal{K}^{\otimes d}$ to operators on a classical register $\mathcal{C}$ spanned by $\{\ket{h}\}_{h \in \mathcal{H}}$, such that $\hat{\Phi}(\hat{\rho}^{\otimes d})$ is diagonal in this basis and its diagonal entries give the output distribution. Then,
\begin{eqnarray}
Q(h | \hat{\rho}) = \tr{ \ketbra{h}{h} \hat{\Phi}(\hat{\rho}^{\otimes d})}.
\label{eq:classical_register_probability}
\end{eqnarray}
Let $\hat{\Phi}^\dagger$ denote the Hilbert--Schmidt adjoint of $\hat{\Phi}$, defined by $\tr{\hat{A}\hat{\Phi}(\hat{B})}=\tr{\hat{\Phi}^\dagger(\hat{A})\hat{B}}$ for all operators $\hat{A}$ on $\mathcal{C}$ and $\hat{B}$ on $\mathcal{K}^{\otimes d}$. Define
\begin{eqnarray}
\hat{M}_h := \hat{\Phi}^\dagger\left(\ketbra{h}{h}\right).
\label{eq:Mh_def}
\end{eqnarray}
Since $\hat{\Phi}$ is completely positive, $\hat{\Phi}^\dagger$ is completely positive, hence each $\hat{M}_h$ is positive semidefinite. Moreover, since $\hat{\Phi}$ is trace-preserving, $\hat{\Phi}^\dagger$ is unital, so
\begin{eqnarray}
\sum_{h \in \mathcal{H}} \hat{M}_h = \hat{\Phi}^\dagger\left(\sum_{h \in \mathcal{H}} \ketbra{h}{h}\right) = \hat{\Phi}^\dagger(I_{\mathcal{C}}) = I_{\mathcal{K}^{\otimes d}},
\label{eq:povm_sum_to_identity}
\end{eqnarray}
which shows $\{\hat{M}_h\}$ is a POVM. Finally, combining Eq.~(\ref{eq:classical_register_probability}) with adjointness yields
\begin{eqnarray}
Q(h | \hat{\rho}) = \tr{\ketbra{h}{h}\hat{\Phi}(\hat{\rho}^{\otimes d})} = \tr{\hat{\Phi}^\dagger(\ketbra{h}{h})\hat{\rho}^{\otimes d}} = \tr{\hat{M}_h \hat{\rho}^{\otimes d}},
\label{eq:povm_probability}
\end{eqnarray}
as required.

($\Leftarrow$) Conversely, given a POVM $\{\hat{M}_h\}$, define the protocol that performs this measurement on $\hat{\rho}^{\otimes d}$ and outputs the classical label $h$. The Born rule gives exactly Eq.~(\ref{eq:qkernel_povm}).
\end{proof}

{\bf Theorem~\ref{thm:povm_representation}} is the quantum analogue of the classical statement ``a learner is a function of the sample.'' This states that a quantum PL learner is, essentially, a choice of measurement on $d$ copies and a relabeling of outcomes. In this form, the physics enters with complete clarity: admissibility is encoded in the requirement that output laws arise from POVMs on $\hat{\rho}^{\otimes d}$, and the resource parameter $d$ becomes a copy complexity.

It is also useful to see explicitly how this fits into the $(\Theta,\mathcal{H},U)$ template. In quantum PL, the environment parameter $\theta$ selects a state $\hat{\rho}_\theta$ (or more generally a preparation procedure); the hypothesis space $\mathcal{H}$ is the finite or countable set of classical labels the learner can output; and the utility $U(\theta,h)$ can encode any operational performance criterion (e.g., correct identification probability, fidelity thresholds, or task-specific payoffs). The admissible family $\mathfrak{L}^{\rm q}_d$ is then exactly ``POVMs on $\hat{\rho}_\theta^{\otimes d}$ + classical post-processing.'' This example matters for our broader thesis because it shows that $\mathfrak{L}$ is not an ad hoc restriction: in important physical regimes it can be derived directly from standard axioms, and it yields a mathematically crisp characterization of admissible inference.

\subsection{Coarse-graining and finite precision: a PL reduction principle}

Quantum constraints are one way that physics restricts admissible inference, but they are not the only way. Even in purely classical laboratories, a universal restriction is finite precision. When an underlying variable takes values in a continuum, the measurement interface does not return an exact real number; it returns a digitized label (a bin index, a finite-resolution pixel, a discretized readout, etc.). In PL terms, this means the learner does not access $x \in \mathcal{X}$ itself, but only a coarse-grained observation $\pi(x)$ through a specified interface map $\pi$.

Let $\mathcal{X}$ be a domain and let $\pi:\mathcal{X} \to \mathcal{Y}$ be a measurable map into a countable set $\mathcal{Y}$. Think of $\pi$ as an $\ell$-bit digitization device. It reports a discretized label $\pi(x)$ rather than the full continuum-valued $x$. For a distribution $P$ on $\mathcal{X}$, let $\pi_\# P$ denote the pushforward distribution on $\mathcal{Y}$,
\begin{eqnarray}
(\pi_\# P)(B) := P\left(\pi^{-1}(B)\right) \quad (\forall \, B \subseteq \mathcal{Y}).
\label{eq:pushforward_def}
\end{eqnarray}
Given a family $\mathcal{G} \subseteq \{0,1\}^{\mathcal{Y}}$, define its pullback to $\mathcal{X}$ by
\begin{eqnarray}
\pi^{-1}(\mathcal{G}) := \left\{ g \circ \pi: g \in \mathcal{G} \right\} \subseteq \{0,1\}^{\mathcal{X}}.
\label{eq:pullback_class}
\end{eqnarray}

\begin{figure}[t]
\centering
\includegraphics[width=0.70\linewidth]{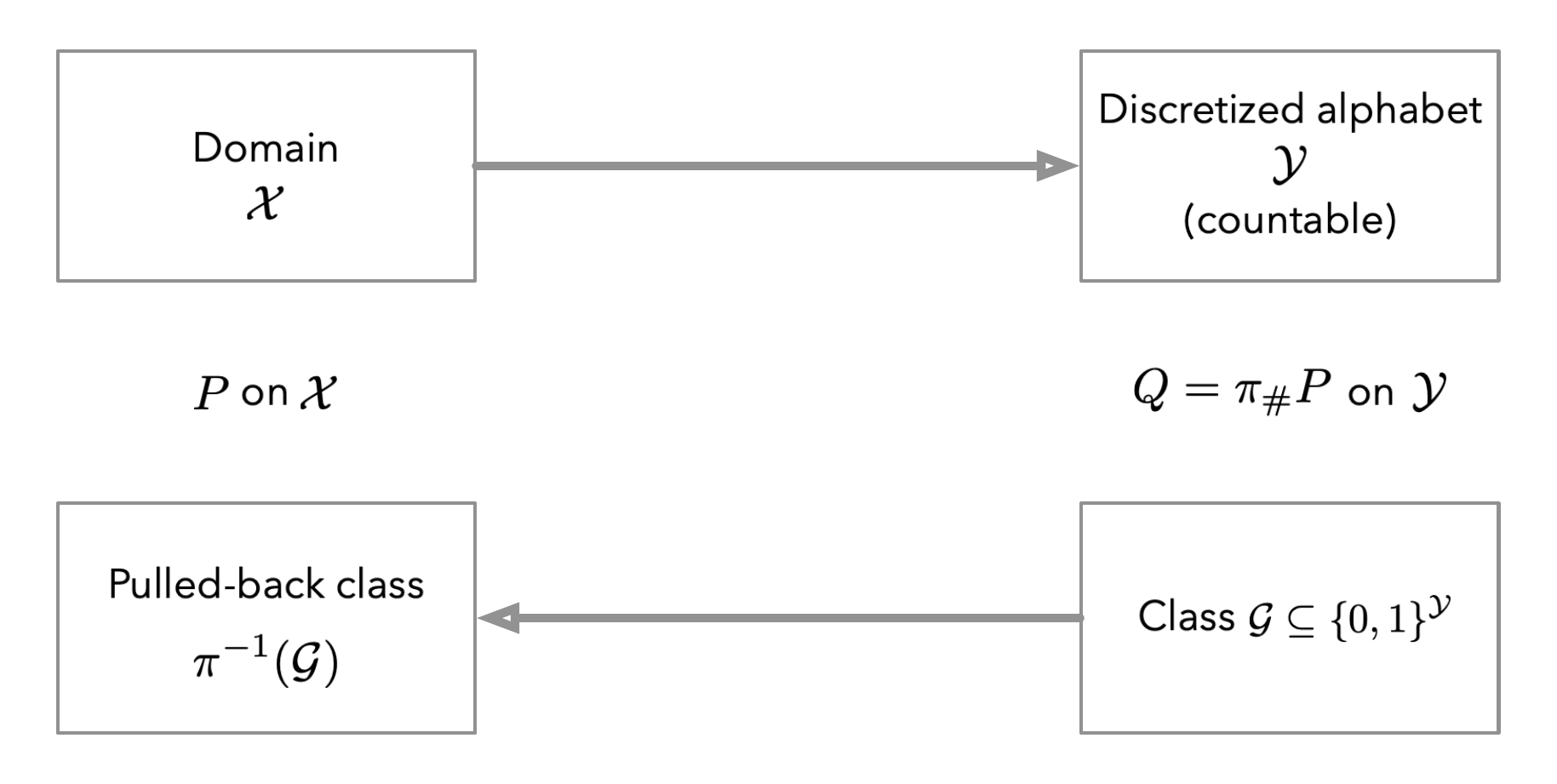}
\caption{Coarse-graining reduction. A finite-precision interface $\pi:\mathcal{X} \to \mathcal{Y}$ induces a pushforward distribution $Q=\pi_\#P$ on the countable alphabet and a pulled-back hypothesis class $\pi^{-1}(\mathcal{G})$ on $\mathcal{X}$. The identity $P(g\circ\pi)=Q(g)$ (Eq.~\ref{eq:utility_preserved}) implies that learning on $\mathcal{Y}$ yields PL learning on $\mathcal{X}$ under coarse-grained access (Theorem~\ref{thm:coarse_graining_reduction}).}
\label{fig:coarse_graining}
\end{figure}

\begin{theorem}[Coarse-graining reduction for EMX]
\label{thm:coarse_graining_reduction}
Fix $\epsilon,\delta \in (0,1)$. Let $\mathcal{G} \subseteq \{0,1\}^{\mathcal{Y}}$ be a representable family over a countable $\mathcal{Y}$, and let $\Theta$ be any class of finitely supported distributions over $\mathcal{X}$. Consider the EMX task on $\mathcal{X}$ with hypothesis family $\pi^{-1}(\mathcal{G})$ and utility $U(P,f)=P(f)$. Suppose that $\mathcal{G}$ is $(\epsilon,\delta)$-EMX learnable over the pushed-forward class $\pi_\# \Theta:=\{\pi_\# P: P \in \Theta\}$. Then $\pi^{-1}(\mathcal{G})$ is $(\epsilon,\delta)$-learnable in the PL sense relative to the access model that reveals only the discretized samples $\pi(x)$.
\end{theorem}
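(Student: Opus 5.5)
The proof is a direct simulation argument built on one identity. We first verify that pushforward and pullback preserve the EMX utility pointwise. For any $g\in\mathcal{G}$ and any finitely supported $P$ on $\mathcal{X}$, the definition of pushforward in Eq.~(\ref{eq:pushforward_def}) gives
\begin{eqnarray}
P(g\circ\pi)=P\big(\pi^{-1}(\{y:g(y)=1\})\big)=(\pi_\#P)(g).
\label{eq:utility_preserved}
\end{eqnarray}
Measurability is not an issue because $P$ is finitely supported, and then $\pi_\#P$ is finitely supported on $\mathcal{Y}$.

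Taking suprema over $g$ shows that the map $g\mapsto g\circ\pi$ from $\mathcal{G}$ onto $\pi^{-1}(\mathcal{G})$ preserves values. Hence $\mathrm{opt}_{\pi^{-1}(\mathcal{G})}(P)=\mathrm{opt}_{\mathcal{G}}(\pi_\#P)$. The map need not be injective, but that is harmless since only values matter. Representability of $\pi^{-1}(\mathcal{G})$ follows because it is the image of the countable set $\mathcal{G}$. We name each pulled-back hypothesis by the encoding of some chosen preimage $g$.

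Next we construct the admissible kernel. Let $G$ be the assumed $(\epsilon,\delta)$-EMX learner on $\mathcal{Y}$ with sample size $d$, possibly randomized, and set $\Phi(g)=g\circ\pi$. The protocol draws $x_1,\dots,x_d\sim P$ i.i.d., the interface reveals only $y_j=\pi(x_j)$, the protocol computes $g=G(y_1,\dots,y_d)$, and it outputs $\Phi(g)$. The key observation is that $(\pi(x_1),\dots,\pi(x_d))$ has law $(\pi_\#P)^{d}$, because $\pi$ applied coordinatewise to an i.i.d. sample yields an i.i.d. sample from the pushforward. The resulting kernel $Q(\cdot\mid P)$ is therefore a function of the coarse-grained data only, followed by classical post-processing through $\Phi$. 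It lies in the coarse-grained access family at budget $d$, since that family is closed under classical post-processing.

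Finally we transfer the guarantee. For every $P\in\Theta$, we have $\pi_\#P\in\pi_\#\Theta$, so with probability at least $1-\delta$ the output satisfies $(\pi_\#P)(g)\ge\mathrm{opt}_{\mathcal{G}}(\pi_\#P)-\epsilon$. By Eq.~(\ref{eq:utility_preserved}) and the equality of optima, this event coincides with $P(\Phi(g))\ge\mathrm{opt}_{\pi^{-1}(\mathcal{G})}(P)-\epsilon$, which is Eq.~(\ref{eq:pl_guarantee}).

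I expect the main obstacle to be precision about the access model rather than any estimate. We must state the coarse-grained family $\mathfrak{L}^{\pi}_d$ explicitly as the kernels of the form $P\mapsto$ law of $R(\pi(x_1),\dots,\pi(x_d))$, with $R$ a randomized rule into $\pi^{-1}(\mathcal{G})$. We must also check that it satisfies the convexity and post-processing axioms of Definition~\ref{def:protocol_family}. Both follow by mixing rules and by composing $R$ with a relabeling kernel.
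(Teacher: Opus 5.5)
Your proof is correct and follows the paper's argument step for step: run the $\mathcal{Y}$-learner on $(\pi(x_1),\dots,\pi(x_d))$, output $g\circ\pi$, and transfer the guarantee through the identity $P(g\circ\pi)=(\pi_\#P)(g)$ and the resulting equality of optima. Your extra remarks fill in points the paper leaves implicit: that the coarse-grained sample has law $(\pi_\#P)^{d}$, that $\pi^{-1}(\mathcal{G})$ is representable, and an explicit description of the access family. One small correction: $\Phi\circ G$ belongs to that family directly by your explicit description, not by post-processing closure, since the paper's closure axiom only covers relabelings of $\mathcal{H}$ into itself.
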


\begin{proof}
Let $G$ be an $(\epsilon,\delta)$-EMX learner for $\mathcal{G}$ over $\pi_\#\Theta$ with sample size $d=d(\epsilon,\delta)$. We construct an admissible PL learner for the pulled-back problem on $\mathcal{X}$ by using $G$ as a subroutine on the \emph{observations} $\pi(x)$.

\smallskip
\paragraph*{\bf Construction of the PL protocol.} Under the coarse-grained access model, a $d$-sample from $P$ appears to the learner only through the discretized sequence $(\pi(x_1),\ldots,\pi(x_d))\in\mathcal{Y}^d$. The protocol runs $G$ on this sequence, obtains some $g\in\mathcal{G}$, and outputs the hypothesis
\begin{eqnarray}
f := g \circ \pi \in \pi^{-1}(\mathcal{G}).
\end{eqnarray}
This protocol is admissible by construction (it depends only on the available coarse-grained record).

\smallskip
\paragraph*{\bf Preservation of utilities.} Fix any $P\in\Theta$ and let $Q:=\pi_\# P\in\pi_\#\Theta$ be the induced distribution on $\mathcal{Y}$. For every $g\in\mathcal{G}$ we have, by definition of the pushforward measure,
\begin{eqnarray}
P(g \circ \pi) = P\left(\pi^{-1}\big(\{y:g(y)=1\}\big)\right) = Q(g).
\label{eq:utility_preserved}
\end{eqnarray}
This identity says that the probability mass captured by $f=g\circ\pi$ under $P$ is exactly the mass captured by $g$ under the induced distribution $Q$.

\smallskip
\paragraph*{\bf Preservation of optima.} Taking suprema over $g \in \mathcal{G}$ in Eq.~(\ref{eq:utility_preserved}) yields
\begin{eqnarray}
\textrm{opt}_{\pi^{-1}(\mathcal{G})}(P) = \sup_{g\in\mathcal{G}} P(g\circ\pi) = \sup_{g\in\mathcal{G}} Q(g) = \textrm{opt}_{\mathcal{G}}(Q).
\label{eq:opt_preserved}
\end{eqnarray}

\smallskip
\paragraph*{\bf Transfer of the EMX guarantee.} Since $G$ is an $(\epsilon,\delta)$-EMX learner for $\mathcal{G}$ under $Q$, with probability at least $1-\delta$ it outputs $g$ satisfying
\begin{eqnarray}
Q(g) \ge \textrm{opt}_{\mathcal{G}}(Q)-\epsilon.
\end{eqnarray}
Combining this with Eq.~(\ref{eq:utility_preserved}) and Eq.~(\ref{eq:opt_preserved}), the corresponding $f=g\circ\pi$ satisfies
\begin{eqnarray}
P(f)\ \ge\ \textrm{opt}_{\pi^{-1}(\mathcal{G})}(P)-\epsilon,
\label{eq:coarse_graining_guarantee}
\end{eqnarray}
which is exactly the PL guarantee Eq.~(\ref{eq:pl_guarantee}) for the coarse-grained access model.
\end{proof}

{\bf Theorem~\ref{thm:coarse_graining_reduction}} is not merely a technical lemma; it is a reduction principle that crystallizes the operational content of finite precision. Once the laboratory interface is a coarse-graining $\pi$, the learner can only distinguish environments through the induced distribution $Q=\pi_\#P$ on the countable alphabet $\mathcal{Y}$. In that sense, the physically meaningful learnability question is the learnability of the induced problem on $\mathcal{Y}$, not on the full continuum $\mathcal{X}$. The theorem makes this intuition exact: it shows that EMX learning under coarse-grained access is equivalent to standard EMX learning on the pushed-forward problem, with no loss in $(\epsilon,\delta)$ and with the same sample size.

This is precisely the kind of statement that PL is designed to enable. In purely set-theoretic formulations, it is easy to conflate the ``true'' domain $\mathcal{X}$ with what is operationally available. {\bf Theorem~\ref{thm:coarse_graining_reduction}} separates these: it tells us that if two environments agree after passing through the interface $\pi$, then no coarse-grained protocol can distinguish them, and the only hypotheses that can be meaningfully evaluated are those that depend on $\pi(x)$. As a consequence, learnability phenomena that are sensitive to the full set-theoretic structure of an uncountable $\mathcal{X}$---including the Ben-David undecidability example---need not survive under physically realistic interfaces. The next subsection makes this point concrete.

\subsection{A PL resolution of the EMX independence example under finite precision}

We now connect the coarse-graining principle to the canonical EMX class that exhibits ZFC-independence in the unconstrained model~\cite{BenDavid2019}. For a set $\mathcal{Y}$, denote by
\begin{eqnarray}
\mathcal{F}_{\textrm{fin}}^{\mathcal{Y}} := \big\{F \subseteq \mathcal{Y}: \mbox{$F$ is finite}\big\},
\label{eq:F_fin}
\end{eqnarray}
the family of all finite subsets (identified with their indicator functions). For a coarse-graining $\pi:\mathcal{X} \to \mathcal{Y}$, define the corresponding physically realizable finite-subset class on $\mathcal{X}$ by
\begin{eqnarray}
\mathcal{F}_{\textrm{fin}}^{(\pi)} := \pi^{-1}\!\left(\mathcal{F}_{\textrm{fin}}^{\mathcal{Y}}\right) = \big\{\pi^{-1}(F): F \in \mathcal{F}_{\textrm{fin}}^{\mathcal{Y}}\big\}.
\label{eq:F_fin_pi}
\end{eqnarray}
This class should be read operationally: a learner that only observes $\pi(x)$ cannot output an arbitrary finite subset of $\mathcal{X}$, but it can output a finite collection of discretization cells, i.e. a set of the form $\pi^{-1}(F)$.

\begin{corollary}[Finite-precision EMX is provably learnable]
\label{cor:finite_precision_emx}
Let $\mathcal{Y}$ be countable and let $\pi:\mathcal{X} \to \mathcal{Y}$ be any map. Then, $\mathcal{F}_{\textrm{fin}}^{(\pi)}$ is $(1/3, 1/3)$-learnable in the PL sense under the access model that reveals only $\pi(x)$.
\end{corollary}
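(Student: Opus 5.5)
The plan is to obtain the corollary by composing two earlier results: Corollary~\ref{cor:countable_emx} supplies a learner on the countable alphabet $\mathcal{Y}$, and Theorem~\ref{thm:coarse_graining_reduction} transports that learner back to $\mathcal{X}$.

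\textbf{Step 1 (learnability on $\mathcal{Y}$).} Set $\mathcal{G}:=\mathcal{F}_{\textrm{fin}}^{\mathcal{Y}}$. Since $\mathcal{Y}$ is countable, Corollary~\ref{cor:countable_emx} gives a $(1/3,1/3)$-EMX learner for $\mathcal{G}$ over \emph{all} finitely supported distributions on $\mathcal{Y}$, with some sample size $d$. This learner is built from the $2\to1$ scheme of Theorem~\ref{thm:countable_two_to_one} via Lemma~\ref{lem:learnability_compression_equivalence}.

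\textbf{Step 2 (checking the hypotheses of Theorem~\ref{thm:coarse_graining_reduction}).} I need three facts.
\begin{itemize}
\item $\mathcal{G}$ is representable. The finite subsets of a countable set form a countable family, which can be encoded by finite strings through the fixed injection $\idx$.
\item The pushforward class is covered by Step 1. Take $\Theta$ to be the finitely supported distributions on $\mathcal{X}$. For any such $P$ with support $\{x_1,\ldots,x_k\}$, the pushforward $\pi_\#P$ is supported on $\{\pi(x_1),\ldots,\pi(x_k)\}$, so it is again finitely supported. Hence $\pi_\#\Theta$ lies inside the class on which the Step 1 learner succeeds.
\item Measurability causes no trouble. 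Every subset of $\mathcal{X}$ is measurable under the finitely-supported convention, so $\pi$ may be an arbitrary map; the identity $P(g\circ\pi)=(\pi_\#P)(g)$ only uses $P(\pi^{-1}(B))=\sum_{x\in\pi^{-1}(B)}P(\{x\})$.
\end{itemize}

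\textbf{Step 3 (transfer).} Applying Theorem~\ref{thm:coarse_graining_reduction} with $\epsilon=\delta=1/3$ shows that $\pi^{-1}(\mathcal{G})=\mathcal{F}_{\textrm{fin}}^{(\pi)}$ is $(1/3,1/3)$-learnable in the PL sense under the access model that reveals only $\pi(x)$. Concretely, the PL learner runs the countable-domain learner on $(\pi(x_1),\ldots,\pi(x_d))$ and outputs $\pi^{-1}(F)$.

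\textbf{Admissibility.} The resulting kernel must lie in the coarse-grained admissible family, so I will confirm two properties:
\begin{itemize}
\item the protocol depends only on the record $(\pi(x_1),\ldots,\pi(x_d))$;
\item its output $\pi^{-1}(F)$ is named by the finite string encoding $F$, so the hypothesis class is representable in the sense of Definition~\ref{def:representable_hypotheses}.
\end{itemize}
I will also note that the argument never invokes a cardinality assumption on $\mathcal{X}$. It uses only $\mathcal{Y}$ and $\idx$, which is why the statement is ZFC-provable even for $\mathcal{X}=[0,1]$.

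\textbf{Main obstacle.} I expect the real difficulty to be the bookkeeping between the two learnability notions rather than any estimate. The distribution class on $\mathcal{Y}$ must contain $\pi_\#\Theta$, and the kernel produced must be admissible. I would handle this by fixing $\Theta$ explicitly as the finitely supported distributions on $\mathcal{X}$ and defining the coarse-grained family $\mathfrak{L}_d$ as the kernels arising from randomized functions of the discretized sample.
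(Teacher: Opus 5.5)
Your proposal is correct and matches the paper's proof: apply Theorem~\ref{thm:coarse_graining_reduction} with $\mathcal{G}=\mathcal{F}_{\textrm{fin}}^{\mathcal{Y}}$, using the ZFC-provable weak EMX learnability on the countable alphabet from Corollary~\ref{cor:countable_emx} as the input learner. The extra checks you add (representability of $\mathcal{G}$, finite support of $\pi_\#P$, measurability of $\pi^{-1}(B)$ under the finitely-supported convention) are details the paper leaves implicit, and all of them are sound.
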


\begin{proof}
We apply {\bf Theorem~\ref{thm:coarse_graining_reduction}} with $\mathcal{G}=\mathcal{F}_{\textrm{fin}}^{\mathcal{Y}}$. Because $\mathcal{Y}$ is countable, the weak EMX learnability of $\mathcal{F}_{\textrm{fin}}^{\mathcal{Y}}$ over finitely supported distributions is provable in ZFC (cf. {\bf Corollary~\ref{cor:countable_emx}} from Sec.~III). The reduction theorem then transfers this learnability statement back to $\mathcal{X}$ under the coarse-grained access model, yielding the claim for $\mathcal{F}_{\textrm{fin}}^{(\pi)}$.
\end{proof}

{\bf Corollary~\ref{cor:finite_precision_emx}} encapsulates the operational moral of the undecidability phenomenon. In the unconstrained set-theoretic model, learnability of the class of all finite subsets of $[0,1]$ is intertwined with cardinal arithmetic and becomes independent of ZFC~\cite{BenDavid2019}. In PL, the interface to $[0,1]$ is not the identity map but a physically chosen coarse-graining $\pi$ that yields a countable observation alphabet and restricts the learner to hypotheses that can be named at that resolution. Under this operational restriction, the corresponding EMX task becomes provably learnable.

This does not claim that physics ``decides'' set theory, nor that physical principles settle questions such as the continuum hypothesis. What it does claim is more modest and more relevant to learning: the objects that drive set-theoretic independence in the EMX example are non-operational idealizations. The independence arises when one insists on (i) an identity interface to an uncountable domain---as if one could observe exact real numbers---and (ii) quantification over arbitrary set-theoretic learners and hypothesis sets on that domain. PL replaces these idealizations by the operational content that is actually present in the laboratory: a finite-resolution interface $\pi$ and a representable output space.

Once $\pi$ is specified, the learning task itself changes in a controlled and principled way: the learner is no longer trying to name an arbitrary finite subset of $\mathcal{X}$, but rather a finite collection of distinguishable regions $\pi^{-1}(F)$. This is exactly what an experiment can implement, because the only evidence available is the stream of coarse-grained labels $\pi(x_1),\ldots,\pi(x_d)$. At that point, the problem collapses to an EMX instance on a countable alphabet $\mathcal{Y}$, and countability brings us back into the realm where ZFC provides determinate, finitary proofs of learnability (Sec.~III). In other words, the apparent ``mystery'' is not that physics resolves set theory, but that the set-theoretic independence was tied to an unphysical limiting idealization.

Finally, this perspective also clarifies how one should interpret the classical undecidability result in practice. One may take finer and finer coarse-grainings $\pi$ and ask how the PL sample complexity behaves as the resolution increases. In the limit of infinite precision (which is not physically attainable), the induced discrete alphabets may approximate the full continuum and the set-theoretic subtleties can re-emerge. PL therefore does not erase foundational issues; rather, it localizes them: it tells us that any physically realizable learning problem is posed at finite resolution and hence admits a well-defined operational learnability status, while undecidability is a warning about what can happen when one conflates that operational problem with a non-operational mathematical limit.

\section{Case Studies and Technical Program}

The PL formalism of the previous section is deliberately abstract: it separates the definition of the learning objective from the specification of admissible physical protocols. The purpose of this section is to show that this separation has concrete mathematical consequences. We focus on three case studies that together illustrate the main methodological claim of this paper. First, we revisit the canonical EMX class whose learnability is independent of ZFC and show that, once one replaces ``access to a continuum'' by a finite-precision interface, the corresponding operational learning problem admits an explicit $(\epsilon,\delta)$ learner with sharp sample complexity. Second, we use the quantum instantiation of PL to derive a copy-complexity lower bound for a basic identification task; this exhibits, in a particularly transparent form, how no-cloning turns sample size into a physical resource. Third, we identify a broad regime in which the PL feasibility question is algorithmically decidable from a finite description of the physical constraints, and we explain why no-signaling constraints fall into this regime.

\subsection{Finite-subset EMX under an operational ordering}

The ZFC-independence result of Ben-David \emph{et al.} concerns the family $\mathcal{F}_{\textrm{fin}}^{[0,1]}$ of all finite subsets of $[0,1]$ and the class of finitely supported distributions on $[0,1]$~\cite{BenDavid2019}. The operational perspective suggests that the relevant question is not learnability over the bare continuum, but the learnability relative to an interface that assigns to each observation a finite classical description. In Sec.~IV, we formalized this via the coarse-graining maps $\pi:\mathcal{X} \to \mathcal{Y}$ with countable range. Here we complement that structural reduction with an explicit learner and a simple, quantitative analysis.

Let $\mathcal{X}$ be countable and fix an injection $\idx:\mathcal{X}\to\mathbb{N}$, viewed as a physically available naming scheme. For $t \in \mathbb{N}$, let us define the initial segment
\begin{eqnarray}
A_t := \{x \in \mathcal{X}: \idx(x) \le t\}.
\label{eq:initial_segment_At}
\end{eqnarray}
Since $\idx$ is injective, $A_t$ is finite for every $t$. Consider the EMX family $\mathcal{F}_{\textrm{fin}}^{\mathcal{X}}$ and restrict distributions to be finitely supported, so that $\textrm{opt}_{\mathcal{F}_{\textrm{fin}}^{\mathcal{X}}}(P)=1$.

\begin{theorem}[A quantile learner for $\mathcal{F}_{\textrm{fin}}^{\mathcal{X}}$]
\label{thm:quantile_emx_countable}
Let $\mathcal{X}$ be countable and let $\idx:\mathcal{X}\to\mathbb{N}$ be an injection. Define a proper learner $G$ as follows: given a sample $S=(x_1,\ldots,x_d) \in \mathcal{X}^d$, let
\begin{eqnarray}
T(S) := \max_{1\le j \le d}\ \idx(x_j),
\quad
G(S) := A_{T(S)} \in \mathcal{F}_{\textrm{fin}}^{\mathcal{X}}.
\label{eq:quantile_learner_def}
\end{eqnarray}
Then, for every $\epsilon,\delta\in(0,1)$ and every finitely supported distribution $P$ over $\mathcal{X}$,
\begin{eqnarray}
\Pr_{S\sim P^d}\left[P\big(G(S)\big) \ge 1-\epsilon\right] \ge 1-\delta
\label{eq:quantile_emx_guarantee}
\end{eqnarray}
whenever
\begin{eqnarray}
d \ge \frac{\ln(1/\delta)}{-\ln(1-\epsilon)}.
\label{eq:quantile_sample_complexity}
\end{eqnarray}
In particular, $d=O\left(\frac{1}{\epsilon}\ln\frac{1}{\delta}\right)$ for $\epsilon$ bounded away from $1$.
\end{theorem}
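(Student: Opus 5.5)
The plan is to view the quantile learner as returning an ``upper order statistic'' of the random variable $\idx(x)$, $x\sim P$, and to bound the probability that this order statistic falls too low. Fix a finitely supported $P$ and $\epsilon\in(0,1)$. The map $t\mapsto P(A_t)$ is nondecreasing in $t$. Because $\mathrm{supp}(P)$ is finite and $\idx$ is injective, $P(A_t)=1$ for all $t\ge \max_{x\in\mathrm{supp}(P)}\idx(x)$. So the threshold
\begin{eqnarray}
t^\star := \min\{t\in\mathbb{N}: P(A_t)\ge 1-\epsilon\}
\end{eqnarray}
is well defined. By monotonicity, $G(S)=A_{T(S)}$ satisfies $P(G(S))\ge 1-\epsilon$ whenever $T(S)\ge t^\star$.

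The failure event is therefore contained in $\{T(S)<t^\star\} = \{T(S)\le t^\star-1\}$. This event means that every sample point lies in $A_{t^\star-1}$; if $t^\star=0$ the failure event is empty, so assume $t^\star\ge1$. By minimality of $t^\star$ we have $P(A_{t^\star-1})<1-\epsilon$. Independence of the $d$ draws then gives
\begin{eqnarray}
\Pr_{S\sim P^d}\left[P(G(S))<1-\epsilon\right]\le \Pr\left[x_j\in A_{t^\star-1}\ \forall j\right] = P(A_{t^\star-1})^d < (1-\epsilon)^d.
\end{eqnarray}
It remains to require $(1-\epsilon)^d\le\delta$. Taking logarithms, and using that $\ln(1-\epsilon)<0$ flips the inequality, this is exactly $d\ge \ln(1/\delta)/(-\ln(1-\epsilon))$, which is Eq.~(\ref{eq:quantile_sample_complexity}). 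For the asymptotic claim, the elementary bound $-\ln(1-\epsilon)\ge\epsilon$ shows that $d=\lceil \epsilon^{-1}\ln(1/\delta)\rceil$ suffices. This gives $O(\epsilon^{-1}\ln(1/\delta))$ uniformly in $\epsilon$, so in fact no restriction away from $1$ is needed.

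Finally, note that $G(S)$ is finite, hence proper, and that $\mathrm{opt}=1$, so Eq.~(\ref{eq:quantile_emx_guarantee}) is the $(\epsilon,\delta)$-EMX guarantee. The only delicate point, and the step I expect to need care, is the use of the discrete threshold $t^\star$ together with the strict inequality coming from its minimality. The argument must not assume that some $t$ achieves $P(A_t)=1-\epsilon$ exactly, since atoms can make $t\mapsto P(A_t)$ jump past that value. Working with $t^\star-1$ and the strict bound $P(A_{t^\star-1})<1-\epsilon$ handles these atoms cleanly. Everything else is a direct computation.
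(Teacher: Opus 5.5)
Your proposal is correct and follows the paper's proof: both take the minimal threshold $t^\star$ with $P(A_{t^\star})\ge 1-\epsilon$, use minimality to get $P(A_{t^\star-1})<1-\epsilon$, and bound the failure event $\{T(S)<t^\star\}$ by $P(A_{t^\star-1})^d<(1-\epsilon)^d\le\delta$. Your extra remarks are valid small refinements of the paper's presentation: the case $t^\star=0$, and the observation that $-\ln(1-\epsilon)\ge\epsilon$ gives the $O(\epsilon^{-1}\ln(1/\delta))$ bound without keeping $\epsilon$ away from $1$.
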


\begin{proof}
Fix a finitely supported distribution $P$ on $\mathcal{X}$. Since $P$ has finite support, $\lim_{t \to \infty}P(A_t)=1$. Let $t_\epsilon$ be the smallest integer such that
\begin{eqnarray}
P(A_{t_\epsilon}) \ge 1-\epsilon.
\label{eq:t_eps_def}
\end{eqnarray}
By minimality, $P(A_{t_\epsilon-1}) < 1-\epsilon$. Consider the event
\begin{eqnarray}
E := \big\{\,T(S) < t_\epsilon \big\}.
\label{eq:event_E}
\end{eqnarray}
On $E$, all sample points satisfy $\idx(x_j)\le t_\epsilon-1$, hence $x_j\in A_{t_\epsilon-1}$ for every $j$. Therefore,
\begin{eqnarray}
\Pr_{S\sim P^d}[E] = P(A_{t_\epsilon-1})^d < (1-\epsilon)^d.
\label{eq:E_probability}
\end{eqnarray}
On the complement event $E^c$, we have $T(S) \ge t_\epsilon$, hence $A_{t_\epsilon} \subseteq A_{T(S)}=G(S)$. Thus,
\begin{eqnarray}
P\big(G(S)\big) \ge P(A_{t_\epsilon}) \ge 1-\epsilon.
\label{eq:good_event_implies_mass}
\end{eqnarray}
By combining Eq.~(\ref{eq:E_probability}) and Eq.~(\ref{eq:good_event_implies_mass}), we can yield
\begin{eqnarray}
\Pr_{S\sim P^d}\left[P\big(G(S)\big) \ge 1-\epsilon\right] \ge 1-(1-\epsilon)^d.
\label{eq:final_bound}
\end{eqnarray}
The choice of Eq.~(\ref{eq:quantile_sample_complexity}) ensures $(1-\epsilon)^d\le\delta$, proving Eq.~(\ref{eq:quantile_emx_guarantee}).
\end{proof}

{\bf Theorem~\ref{thm:quantile_emx_countable}} is deliberately elementary: it is a one-line algorithm whose analysis uses only the monotonicity of initial segments and the i.i.d. assumption. Yet, it captures the operational heart of the EMX task for finite-support distributions. In other words, the objective is to recover a high-probability portion of the support, and a single ``quantile-like'' statistic of the sample suffices once the domain has been endowed with a physically meaningful naming scheme.

Via coarse-graining, this yields an explicit PL learner for the finite-precision variant of the Ben-David class.

\begin{corollary}[Explicit PL learner under finite precision]
\label{cor:quantile_emx_coarse_grain}
Let $\pi:\mathcal{X} \to \mathcal{Y}$ be a coarse-graining with countable $\mathcal{Y}$, and fix an injection $\idx:\mathcal{Y}\to\mathbb{N}$. Consider the hypothesis class $\mathcal{F}_{\textrm{fin}}^{(\pi)}=\pi^{-1}(\mathcal{F}_{\textrm{fin}}^{\mathcal{Y}})$ from Eq.~(\ref{eq:F_fin_pi}) and the access model that reveals only $\pi(x)$. Then, for every $\epsilon,\delta \in (0,1)$, the PL task is $(\epsilon,\delta)$-learnable with sample size $d$ as in Eq.~(\ref{eq:quantile_sample_complexity}). An explicit learner is obtained by applying Eq.~(\ref{eq:quantile_learner_def}) to the discretized samples and outputting the corresponding union of discretization cells.
\end{corollary}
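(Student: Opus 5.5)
The plan is to compose Theorem~\ref{thm:quantile_emx_countable} with the coarse-graining reduction of Theorem~\ref{thm:coarse_graining_reduction}. Take $\mathcal{G}=\mathcal{F}_{\textrm{fin}}^{\mathcal{Y}}$, which is representable because $\mathcal{Y}$ is countable: each finite subset of $\mathcal{Y}$ is named by the finite list of its $\idx$-values. Let $\Theta$ be the class of finitely supported distributions on $\mathcal{X}$.

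The first step is to check that the pushed-forward class $\pi_\#\Theta$ consists of finitely supported distributions on $\mathcal{Y}$. If $P$ is supported on a finite set $\{x_1,\ldots,x_k\}$, then $\pi_\#P$ is supported on $\{\pi(x_1),\ldots,\pi(x_k)\}$. Hence Theorem~\ref{thm:quantile_emx_countable}, applied on the countable domain $\mathcal{Y}$ with the given injection $\idx$, applies to every $Q\in\pi_\#\Theta$.

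It gives $\Pr_{S\sim Q^d}[Q(G(S))\ge 1-\epsilon]\ge 1-\delta$ whenever $d$ satisfies Eq.~(\ref{eq:quantile_sample_complexity}). To turn this into an $(\epsilon,\delta)$-EMX guarantee, I need $\textrm{opt}_{\mathcal{F}_{\textrm{fin}}^{\mathcal{Y}}}(Q)=1$. This holds because the finite support of $Q$ itself belongs to $\mathcal{F}_{\textrm{fin}}^{\mathcal{Y}}$. So $G$ is an $(\epsilon,\delta)$-EMX learner for $\mathcal{G}$ over $\pi_\#\Theta$ with sample size $d$.

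Next, I invoke Theorem~\ref{thm:coarse_graining_reduction}. The protocol it constructs observes only $(\pi(x_1),\ldots,\pi(x_d))$, runs $G$ to obtain $A_{T}$ with $T=\max_j\idx(\pi(x_j))$, and outputs $A_T\circ\pi=\pi^{-1}(A_T)$. This output is exactly the union of the discretization cells $\pi^{-1}(\{y\})$ over $y\in A_T$, and it lies in $\mathcal{F}_{\textrm{fin}}^{(\pi)}$. The identities $P(g\circ\pi)=(\pi_\#P)(g)$ and $\textrm{opt}_{\pi^{-1}(\mathcal{G})}(P)=\textrm{opt}_{\mathcal{G}}(\pi_\#P)$, namely Eqs.~(\ref{eq:utility_preserved})--(\ref{eq:opt_preserved}), transfer the guarantee without changing $\epsilon$, $\delta$, or $d$. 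This yields $(\epsilon,\delta)$-PL learnability.

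The remaining point is to confirm that the induced kernel belongs to the admissible family of the coarse-grained access model. This family is the set of output laws of (possibly randomized) functions of the discretized record. Our map is a deterministic function of that record, so the kernel is admissible.

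I do not expect a real obstacle; the argument is bookkeeping. The one point to state carefully is that $\textrm{opt}=1$ is attained in the pushed-forward problem, so the bound $Q(G(S))\ge 1-\epsilon$ coincides with the EMX criterion. A second point is that the output $\pi^{-1}(A_T)$ can be an infinite subset of $\mathcal{X}$. It is still a legitimate hypothesis in $\mathcal{F}_{\textrm{fin}}^{(\pi)}$, and it is finitely named by $T$.
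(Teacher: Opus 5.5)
Your proposal is correct and follows essentially the same route as the paper. You push $P$ forward to the finitely supported $\pi_\# P$ on $\mathcal{Y}$, run the quantile learner of Theorem~\ref{thm:quantile_emx_countable} on the discretized sample, and pull the output back via $P(\pi^{-1}(F))=(\pi_\#P)(F)$. The only difference is that the paper uses this pushforward identity directly instead of citing Theorem~\ref{thm:coarse_graining_reduction}, so your extra checks (representability of $\mathcal{F}_{\textrm{fin}}^{\mathcal{Y}}$, admissibility of the induced kernel) are harmless additional bookkeeping.
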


\begin{proof}
Let $P$ be a finitely supported distribution on $\mathcal{X}$ and let $Q=\pi_\#P$ be its pushforward to $\mathcal{Y}$. Since $\textrm{supp}(P)$ is finite, so is $\textrm{supp}(Q)$, and hence $\textrm{opt}_{\mathcal{F}_{\textrm{fin}}^{\mathcal{Y}}}(Q)=1$. Apply {\bf Theorem~\ref{thm:quantile_emx_countable}} on $(\mathcal{Y},Q)$ to obtain, with probability at least $1-\delta$, a finite set $F \subseteq \mathcal{Y}$ such that $Q(F) \ge 1-\epsilon$. Output $\pi^{-1}(F) \in \mathcal{F}_{\textrm{fin}}^{(\pi)}$. By definition of pushforward,
\begin{eqnarray}
P\left(\pi^{-1}(F)\right) = Q(F) \ge 1-\epsilon.
\label{eq:pullback_probability}
\end{eqnarray}
This is exactly the PL guarantee for this access model.
\end{proof}

{\bf Corollary~\ref{cor:quantile_emx_coarse_grain}} strengthens the qualitative statement of {\bf Corollary~\ref{cor:finite_precision_emx}} by providing an explicit $(\epsilon,\delta)$ sample complexity and a concrete protocol. In particular, the operational variant of the EMX problem that arises under finite precision is not merely ``provably learnable in ZFC'' but it is learnable by an algorithm whose behavior has an immediate physical interpretation. Thus, it returns the union of all discretization cells whose indices do not exceed the maximum observed index. In this sense, the ZFC-independence of $\mathcal{F}_{\textrm{fin}}^{[0,1]}$ is revealed to be highly sensitive to the idealization that the learner has access to the full continuum and may output arbitrary finite subsets thereof, rather than the finite information actually provided by a measurement interface.

\subsection{Quantum case study: copy complexity from the geometry of states}

In quantum PL, the admissible learners are constrained not only by finite description, but by the structure of quantum measurements on finitely many copies of an unknown state ({\bf Theorem~\ref{thm:povm_representation}}). A basic and physically compelling learning task in this setting is the state identification~\cite{Huang2020Predicting,Zhao2024Learning}: the environment is promised to be one of two possible states, and the learner must output which one. Although this task is conceptually simpler than EMX, it plays a similar foundational role. It isolates, in its cleanest form, how physical laws constrain inference.

Let $\hat{\rho}_0$ and $\hat{\rho}_1$ be density operators on a finite-dimensional Hilbert space $\mathcal{K}$. Consider the binary hypothesis task with environment set $\Theta=\{0,1\}$, hypothesis set $\mathcal{H}=\{0,1\}$, and utility
\begin{eqnarray}
U(\theta,h) := \mathbb{1}\{\theta=h\}.
\label{eq:binary_utility}
\end{eqnarray}
Under $d$-copy quantum access, an admissible protocol is a POVM $\{\hat{M}_0,\hat{M}_1\}$ on $\mathcal{K}^{\otimes d}$, and the induced decision rule is $Q(h | \theta)=\tr{\hat{M}_h \hat{\rho}_\theta^{\otimes d}}$. The following theorem gives the fundamental limitation on simultaneous success for both environments.

\begin{theorem}[A worst-case Helstrom bound~\cite{Helstrom1976,Watrous2018}]
\label{thm:helstrom_worst_case}
Let $\hat{\rho}_0,\hat{\rho}_1$ be density operators on $\mathcal{K}$ and let $d\in\mathbb{N}$. For any POVM $\{\hat{M}_0,\hat{M}_1\}$ on $\mathcal{K}^{\otimes d}$,
\begin{eqnarray}
\tr{\hat{M}_0\,\hat{\rho}_0^{\otimes d}} + \tr{\hat{M}_1 \hat{\rho}_1^{\otimes d}} \le 1 + \frac{1}{2}\left\|\hat{\rho}_0^{\otimes d}-\hat{\rho}_1^{\otimes d}\right\|_1.
\label{eq:helstrom_sum_bound}
\end{eqnarray}
Moreover, equality is achieved by choosing $\hat{M}_0$ to be the projector onto the support of the positive part of $\hat{\rho}_0^{\otimes d}-\hat{\rho}_1^{\otimes d}$ and $\hat{M}_1=\hat{\openone} - \hat{M}_0$.
\end{theorem}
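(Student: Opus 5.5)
The plan is to reduce the sum of success probabilities to a single linear functional of $\hat{M}_0$ and then optimize over the operator interval $0\le\hat{M}_0\le\hat{\openone}$. Write $\hat{A}:=\hat{\rho}_0^{\otimes d}$, $\hat{B}:=\hat{\rho}_1^{\otimes d}$ and $\hat{\Delta}:=\hat{A}-\hat{B}$, which is Hermitian and traceless because both operators have unit trace. Using $\hat{M}_1=\hat{\openone}-\hat{M}_0$ and $\tr{\hat{B}}=1$, the left-hand side of Eq.~(\ref{eq:helstrom_sum_bound}) becomes
\begin{eqnarray}
\tr{\hat{M}_0\hat{A}}+\tr{\hat{B}}-\tr{\hat{M}_0\hat{B}} = 1+\tr{\hat{M}_0\hat{\Delta}}.
\end{eqnarray}
So it suffices to show $\tr{\hat{M}_0\hat{\Delta}}\le \tfrac12\|\hat{\Delta}\|_1$ for every $0\le\hat{M}_0\le\hat{\openone}$, with equality at the stated projector.

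Next I would take the Jordan decomposition of $\hat{\Delta}$ via its spectral decomposition: $\hat{\Delta}=\hat{\Delta}_+-\hat{\Delta}_-$ with $\hat{\Delta}_\pm\ge0$ having orthogonal supports. Then $\|\hat{\Delta}\|_1=\tr{\hat{\Delta}_+}+\tr{\hat{\Delta}_-}$, and tracelessness gives $\tr{\hat{\Delta}_+}=\tr{\hat{\Delta}_-}$. Hence $\tr{\hat{\Delta}_+}=\tfrac12\|\hat{\Delta}\|_1$. This step, which produces the factor $\tfrac12$, is where the normalization of the states is used, and I expect it to be the only point needing care.

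For the upper bound, I would write $\tr{\hat{M}_0\hat{\Delta}}=\tr{\hat{M}_0\hat{\Delta}_+}-\tr{\hat{M}_0\hat{\Delta}_-}$. The second term is nonnegative because it is the trace of a product of two PSD operators; one sees this by writing it as $\tr{\hat{\Delta}_-^{1/2}\hat{M}_0\hat{\Delta}_-^{1/2}}\ge0$. For the first term, $\hat{M}_0\le\hat{\openone}$ gives $\tr{\hat{M}_0\hat{\Delta}_+}=\tr{\hat{\Delta}_+^{1/2}\hat{M}_0\hat{\Delta}_+^{1/2}}\le\tr{\hat{\Delta}_+}$. Combining these yields $\tr{\hat{M}_0\hat{\Delta}}\le\tfrac12\|\hat{\Delta}\|_1$, which is the claimed inequality.

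For attainment, take $\hat{M}_0=\hat{P}_+$, the projector onto the support of $\hat{\Delta}_+$, and $\hat{M}_1=\hat{\openone}-\hat{P}_+$; this is a valid POVM. Orthogonality of supports gives $\hat{P}_+\hat{\Delta}_-=0$ and $\hat{P}_+\hat{\Delta}_+=\hat{\Delta}_+$. Therefore $\tr{\hat{P}_+\hat{\Delta}}=\tr{\hat{\Delta}_+}=\tfrac12\|\hat{\Delta}\|_1$, so equality holds.
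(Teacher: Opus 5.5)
Your proposal is correct and follows the paper's proof essentially step for step: rewriting the left-hand side as $1+\tr{\hat{M}_0\hat{\Delta}}$, using the Jordan decomposition with $\tr{\hat{\Delta}_+}=\tfrac12\|\hat{\Delta}\|_1$ from tracelessness, and attaining equality with the projector onto $\mathrm{supp}(\hat{\Delta}_+)$. Your square-root argument for the two trace inequalities simply makes explicit the positivity facts that the paper asserts without justification.
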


\begin{proof}
Let $\hat{\Delta}:=\hat{\rho}_0^{\otimes d}-\hat{\rho}_1^{\otimes d}$, which is Hermitian and satisfies $\tr{\hat{\Delta}}=0$ because both states have trace $1$. Using $\hat{M}_1=\hat{\openone} - \hat{M}_0$, we can rewrite the left-hand side of Eq.~(\ref{eq:helstrom_sum_bound}) as
\begin{eqnarray}
\tr{\hat{M}_0 \hat{\rho}_0^{\otimes d}}+\tr{(\hat{\openone}-\hat{M}_0) \hat{\rho}_1^{\otimes d}} = \tr{\hat{\rho}_1^{\otimes d}} + \tr{\hat{M}_0 (\hat{\rho}_0^{\otimes d} - \hat{\rho}_1^{\otimes d})} = 1 + \tr{\hat{M}_0\,\hat{\Delta}}.
\label{eq:rewrite_sum}
\end{eqnarray}
Since $\{\hat{M}_0, \hat{M}_1\}$ is a POVM, $0 \preceq \hat{M}_0 \preceq \hat{\openone}$. Let $\hat{\Delta}=\hat{\Delta}_+-\hat{\Delta}_-$ be the Jordan decomposition, where $\hat{\Delta}_+,\hat{\Delta}_-\succeq 0$ have orthogonal supports. Then,
\begin{eqnarray}
\tr{\hat{M}_0\,\hat{\Delta}} = \tr{\hat{M}_0 \hat{\Delta}_+} - \tr{\hat{M}_0 \hat{\Delta}_-} \le \tr{\hat{\Delta}_+},
\label{eq:bound_by_positive_part}
\end{eqnarray}
because $\tr{\hat{M}_0 \hat{\Delta}_+} \le \tr{\hat{\Delta}_+}$ (as $\hat{M}_0 \preceq \hat{\openone}$) and $\tr{\hat{M}_0 \hat{\Delta}_-} \ge 0$. Moreover, since $\tr{\hat{\Delta}}=0$, we have $\tr{\hat{\Delta}_+} = \tr{\hat{\Delta}_-} = \frac{1}{2}\|\hat{\Delta}\|_1$. By substituting into Eq.~(\ref{eq:rewrite_sum}), we can yield Eq.~(\ref{eq:helstrom_sum_bound}).

For achievability, take $\hat{M}_0$ to be the projector onto $\textrm{supp}(\hat{\Delta}_+)$, so that $\tr{\hat{M}_0 \hat{\Delta}_+} = \tr{\hat{\Delta}_+}$ and $\tr{\hat{M}_0 \hat{\Delta}_-}=0$, hence $\tr{\hat{M}_0 \hat{\Delta}} = \tr{\hat{\Delta}_+} = \frac{1}{2}\|\hat{\Delta}\|_1$, which saturates Eq.~(\ref{eq:bound_by_positive_part}).
\end{proof}

{\bf Theorem~\ref{thm:helstrom_worst_case}} immediately yields a PL impossibility statement: if one demands near-certain identification of both environments, then the $d$-copy trace distance must be correspondingly close to its maximal value. In particular, for non-orthogonal pure states this enforces a quantitative lower bound on the number of available copies.

\begin{lemma}[Trace distance of pure states]
\label{lem:pure_state_trace_distance}
Let $\ket{\psi}, \ket{\phi}$ be unit vectors and let $\hat{\rho}=\ketbra{\psi}{\psi}$, $\hat{\sigma}=\ketbra{\phi}{\phi}$. Then,
\begin{eqnarray}
\|\hat{\rho}-\hat{\sigma}\|_1 = 2\sqrt{1- \abs{\braket{\psi}{\phi}}^2}.
\label{eq:pure_state_trace_distance}
\end{eqnarray}
Consequently,
\begin{eqnarray}
\left\|\hat{\rho}^{\otimes d} - \hat{\sigma}^{\otimes d}\right\|_1 = 2\sqrt{1-\abs{\braket{\psi}{\phi}}^{2d}}.
\label{eq:pure_state_trace_distance_tensor}
\end{eqnarray}
\end{lemma}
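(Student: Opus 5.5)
The plan is to reduce the trace norm to an eigenvalue computation on a two-dimensional subspace. Write $\gamma:=\braket{\psi}{\phi}$. If $\abs{\gamma}=1$, then the states coincide up to a phase and both sides vanish, so we assume $\abs{\gamma}<1$. In that case $\ket{\psi}$ and $\ket{\phi}$ span a two-dimensional subspace $\mathcal{V}$. The operator $\hat{\Delta}:=\hat{\rho}-\hat{\sigma}$ is Hermitian, supported on $\mathcal{V}$, and vanishes on $\mathcal{V}^\perp$. Its trace is zero, so its two possibly nonzero eigenvalues have the form $\pm\lambda$, and $\|\hat{\Delta}\|_1=2\lambda$.

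To find $\lambda$, I will use $\tr{\hat{\Delta}^2}=2\lambda^2$. Expanding the square gives
\[
\tr{\hat{\Delta}^2}=\tr{\hat{\rho}^2}+\tr{\hat{\sigma}^2}-2\,\tr{\hat{\rho}\hat{\sigma}}=2-2\abs{\gamma}^2,
\]
since pure states are idempotent and $\tr{\hat{\rho}\hat{\sigma}}=\abs{\braket{\psi}{\phi}}^2$. Therefore $\lambda=\sqrt{1-\abs{\gamma}^2}$, which is Eq.~(\ref{eq:pure_state_trace_distance}). As a sanity check, I may also write $\ket{\phi}=\gamma\ket{\psi}+\sqrt{1-\abs{\gamma}^2}\,\ket{\psi^\perp}$ (after absorbing a phase into $\ket{\psi^\perp}$). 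The $2\times2$ matrix of $\hat{\Delta}$ in this basis has determinant $-(1-\abs{\gamma}^2)$ and trace $0$, which gives the same eigenvalues.

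For the tensor-power statement, I note that $\hat{\rho}^{\otimes d}=\ketbra{\psi^{\otimes d}}{\psi^{\otimes d}}$ and $\hat{\sigma}^{\otimes d}=\ketbra{\phi^{\otimes d}}{\phi^{\otimes d}}$ are again pure states. Their overlap factorizes: $\braket{\psi^{\otimes d}}{\phi^{\otimes d}}=\gamma^d$. Applying the first part to the unit vectors $\ket{\psi}^{\otimes d}$ and $\ket{\phi}^{\otimes d}$ gives Eq.~(\ref{eq:pure_state_trace_distance_tensor}) directly.

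The only step needing care is the claim that the spectrum of $\hat{\Delta}$ is exactly $\{\lambda,-\lambda,0,\dots\}$. This follows from $\hat{\Delta}$ having rank at most two and trace zero. Everything else is routine algebra.
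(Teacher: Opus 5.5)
Your proof is correct and follows essentially the same route as the paper: both restrict $\hat{\rho}-\hat{\sigma}$ to the two-dimensional span of $\ket{\psi},\ket{\phi}$, where it is traceless with eigenvalues $\pm\lambda$, and both obtain the tensor-power identity by applying the single-copy result to $\ket{\psi}^{\otimes d},\ket{\phi}^{\otimes d}$ with overlap $\braket{\psi}{\phi}^d$. The only difference is how $\lambda$ is found: the paper reads it off the explicit $2\times 2$ matrix in the Gram--Schmidt basis $\{\ket{\psi},\ket{\phi_\perp}\}$, which is your sanity check, whereas your main computation $\tr{\hat{\Delta}^2}=2-2\abs{\braket{\psi}{\phi}}^2$ gets it without choosing a basis.
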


\begin{proof}
Let $c := \braket{\psi}{\phi}$ and assume $\abs{c} \in (0,1)$ (the endpoints are trivial). Define the normalized vector
\begin{eqnarray}
\ket{\phi_\perp} := \frac{\ket{\phi} - c\ket{\psi}}{\sqrt{1 - \abs{c}^2}},
\label{eq:phi_perp_def}
\end{eqnarray}
which satisfies $\braket{\psi}{\phi_\perp}=0$ and $\|\phi_\perp\|=1$. In the orthonormal basis, $\{\ket{\psi}, \ket{\phi_\perp}\}$, the operators $\hat{\rho}$ and $\hat{\sigma}$ have matrix representations
\begin{eqnarray}
\hat{\rho} \equiv
\left(
\begin{array}{cc}
1 & 0\\
0 & 0
\end{array}
\right),
\quad
\hat{\sigma} \equiv
\left(
\begin{array}{cc}
\abs{c}^2 & c\sqrt{1-\abs{c}^2}\\
c^\ast\sqrt{1-\abs{c}^2} & 1-\abs{c}^2
\end{array}
\right).
\label{eq:rho_sigma_matrix}
\end{eqnarray}
Therefore, $\hat{\rho}-\hat{\sigma}$ is represented by
\begin{eqnarray}
\hat{\rho}-\hat{\sigma} \equiv
\left(
\begin{array}{cc}
1-\abs{c}^2 & -c\sqrt{1-\abs{c}^2}\\
-c^\ast\sqrt{1-\abs{c}^2} & -(1-\abs{c}^2)
\end{array}
\right).
\label{eq:diff_matrix}
\end{eqnarray}
A direct computation shows that this $2 \times 2$ Hermitian matrix has eigenvalues $\pm\sqrt{1-\abs{c}^2}$. Hence its trace norm, which is the sum of absolute eigenvalues, equals $2\sqrt{1-\abs{c}^2}$, proving Eq.~(\ref{eq:pure_state_trace_distance}). For Eq.~(\ref{eq:pure_state_trace_distance_tensor}), by applying Eq.~(\ref{eq:pure_state_trace_distance}) with $\braket{\psi^{\otimes d}}{\phi^{\otimes d}}=c^d$, we can yield the claim.
\end{proof}

\begin{corollary}[Copy complexity for reliable identification]
\label{cor:copy_complexity_bound}
Let $\ket{\psi}, \ket{\phi}$ be distinct non-orthogonal pure states with overlap $\gamma:=\abs{\braket{\psi}{\phi}} \in (0,1)$. Consider the binary task in Eq.~(\ref{eq:binary_utility}) with environments $\hat{\rho}_0=\ketbra{\psi}{\psi}$ and $\hat{\rho}_1=\ketbra{\phi}{\phi}$ under $d$-copy quantum access. If a protocol achieves
\begin{eqnarray}
\tr{\hat{M}_0 \hat{\rho}_0^{\otimes d}} \ge 1-\delta
\quad\mbox{and}\quad
\tr{\hat{M}_1 \hat{\rho}_1^{\otimes d}} \ge 1-\delta,
\label{eq:two_sided_success}
\end{eqnarray}
then necessarily
\begin{eqnarray}
\delta \ge \frac{1-\sqrt{1-\gamma^{2d}}}{2}.
\label{eq:delta_lower_bound}
\end{eqnarray}
Equivalently, to make $\delta<1/2$, one must have
\begin{eqnarray}
d \ge \frac{1}{-2\ln \gamma}\ln\left(\frac{1}{4\delta(1-\delta)}\right).
\label{eq:d_lower_bound}
\end{eqnarray}
\end{corollary}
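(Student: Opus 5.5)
The plan is to combine the worst-case Helstrom bound of \textbf{Theorem~\ref{thm:helstrom_worst_case}} with the pure-state trace-distance formula of \textbf{Lemma~\ref{lem:pure_state_trace_distance}}, and then rearrange the resulting inequality. The key point is to track the direction of each inequality carefully, since the final step divides by $\ln\gamma<0$.

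First, I add the two success conditions in Eq.~(\ref{eq:two_sided_success}). This gives $\tr{\hat{M}_0\hat{\rho}_0^{\otimes d}}+\tr{\hat{M}_1\hat{\rho}_1^{\otimes d}}\ge 2-2\delta$. \textbf{Theorem~\ref{thm:helstrom_worst_case}} bounds the same left-hand side from above by $1+\frac12\|\hat{\rho}_0^{\otimes d}-\hat{\rho}_1^{\otimes d}\|_1$. By Eq.~(\ref{eq:pure_state_trace_distance_tensor}) this upper bound equals $1+\sqrt{1-\gamma^{2d}}$. Chaining the two gives
\begin{eqnarray}
1-2\delta \le \sqrt{1-\gamma^{2d}},
\end{eqnarray}
which is exactly Eq.~(\ref{eq:delta_lower_bound}) after solving for $\delta$.

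For the copy-count form, I assume $\delta<1/2$, so that $1-2\delta>0$. Both sides of the displayed inequality are then nonnegative, and squaring preserves its direction: $(1-2\delta)^2\le 1-\gamma^{2d}$. Since $1-(1-2\delta)^2=4\delta(1-\delta)$, this rearranges to
\begin{eqnarray}
\gamma^{2d}\le 4\delta(1-\delta).
\end{eqnarray}
The right-hand side lies in $(0,1]$ for $\delta\in(0,1/2)$, so taking logarithms is legitimate and gives $2d\ln\gamma\le\ln(4\delta(1-\delta))$. Because $\gamma\in(0,1)$, the quantity $-2\ln\gamma$ is strictly positive. Multiplying by $-1$ and dividing by it yields $d\ge \ln\bigl(1/(4\delta(1-\delta))\bigr)/(-2\ln\gamma)$, which is Eq.~(\ref{eq:d_lower_bound}).

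The only delicate step is this final sign bookkeeping. The inequality must read $\gamma^{2d}\le 4\delta(1-\delta)$: a small overlap power is what the success requirement forces. Dividing by the negative number $2\ln\gamma$ then flips it into a lower bound on $d$. I will state explicitly that "equivalently" is meant as a necessary condition under the hypothesis $\delta<1/2$. I will also note the degenerate case $\delta\ge 1/2$, where Eq.~(\ref{eq:delta_lower_bound}) holds trivially and no constraint on $d$ follows.
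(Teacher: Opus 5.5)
Your proposal is correct and follows the paper's proof exactly: you sum the two success conditions, apply the worst-case Helstrom bound together with the pure-state trace-distance formula to get $1-2\delta\le\sqrt{1-\gamma^{2d}}$, and then square and take logarithms to obtain $\gamma^{2d}\le 4\delta(1-\delta)$. You are more explicit than the paper's one-line rearrangement about the squaring under $\delta<1/2$ and the sign of $\ln\gamma$, but the argument is the same.
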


\begin{proof}
Summing Eq.~(\ref{eq:two_sided_success}) gives $\tr{\hat{M}_0 \hat{\rho}_0^{\otimes d}} + \tr{\hat{M}_1\,\hat{\rho}_1^{\otimes d}} \ge 2(1-\delta)$. By {\bf Theorem~\ref{thm:helstrom_worst_case}} and {\bf Lemma~\ref{lem:pure_state_trace_distance}},
\begin{eqnarray}
2(1-\delta) \le 1 + \frac{1}{2}\left\|\hat{\rho}_0^{\otimes d} - \hat{\rho}_1^{\otimes d}\right\|_1 = 1+\sqrt{1-\gamma^{2d}}.
\label{eq:combine_bounds}
\end{eqnarray}
Rearranging yields Eq.~(\ref{eq:delta_lower_bound}). For~(\ref{eq:d_lower_bound}), note that Eq.~(\ref{eq:delta_lower_bound}) implies $\sqrt{1-\gamma^{2d}}\ge 1-2\delta$, hence $\gamma^{2d} \le 1-(1-2\delta)^2=4\delta(1-\delta)$, and taking logarithms gives Eq.~(\ref{eq:d_lower_bound}).
\end{proof}

{\bf Corollary~\ref{cor:copy_complexity_bound}} illustrates a distinct, genuinely physical source of limitation that is invisible in the set-theoretic view of learnability. The obstacle is not logical independence but the geometry of quantum state space: two non-orthogonal states remain partially confusable no matter what measurement is applied to finitely many copies. In classical learning theory, one might attempt to circumvent such confusability by duplicating the sample and repeating a test; in quantum PL, no-cloning precludes this maneuver. The number of copies is therefore not merely a book-keeping parameter but a fundamental resource, and PL makes the dependence of learnability on this resource mathematically explicit.

\subsection{No-signaling constraints and decidability in finite operational models}

The previous case studies addressed how physical constraints change the truth of learnability statements by changing the object quantified over. There is another, complementary sense in which operational constraints can enhance the foundational status of learnability: they can render the PL feasibility question \emph{algorithmically decidable} from finite data. The key is that many physically motivated admissible classes are describable by finitely many linear constraints (no-signaling) or by convex semidefinite constraints (quantum)~\cite{BangChoJae2026}, so that optimizing or verifying performance reduces to standard convex feasibility.

We formalize this in a minimal finite setting. Let $\Theta$ and $\mathcal{H}$ be finite sets. For each sample budget $d$, identify a Markov kernel $Q \in \hat{\Delta}(\mathcal{H})^\Theta$ with the vector of probabilities $q_{\theta,h}:=Q(h | \theta)$. Suppose the admissible set $\mathfrak{L}_d$ is specified as a rational polytope in this finite-dimensional space.

\begin{definition}[Rational polytope admissibility]
\label{def:rational_polytope}
An admissible set $\mathfrak{L}_d \subseteq \hat{\Delta}(\mathcal{H})^\Theta$ is a rational polytope if there exist matrices $A_d$ and $B_d$ with rational entries such that $Q \in \mathfrak{L}_d$ if and only if its coordinate vector $q$ satisfies
\begin{eqnarray}
q A_d \le B_d
\label{eq:polytope_constraints}
\end{eqnarray}
(including the simplex constraints $q_{\theta,h} \ge 0$ and $\sum_h q_{\theta,h}=1$).
\end{definition}

\begin{theorem}[Decidability of PL feasibility under polytope constraints~\cite{BoydVandenberghe2004}]
\label{thm:finite_model_decidability}
Let $(\Theta,\mathcal{H},U)$ be a learning task with finite $\Theta$ and $\mathcal{H}$, and fix $\epsilon,\delta \in (0,1)$. For a fixed sample budget $d$, assume $\mathfrak{L}_d$ is a rational polytope in the sense of {\bf Definition~\ref{def:rational_polytope}}. Then, deciding whether there exists $Q\in\mathfrak{L}_d$ satisfying the PL guarantee in Eq.~(\ref{eq:pl_guarantee}) reduces to a linear feasibility problem over rational constraints, and is therefore algorithmically decidable.
\end{theorem}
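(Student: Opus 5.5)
The plan is to convert the PL guarantee, Eq.~(\ref{eq:pl_guarantee}), into finitely many linear inequalities in the coordinates $q_{\theta,h}=Q(h|\theta)$. These are then appended to the rational description of $\mathfrak{L}_d$ from \textbf{Definition~\ref{def:rational_polytope}}. Since $\Theta$ and $\mathcal{H}$ are finite, the supremum in Eq.~(\ref{eq:opt_task}) is a maximum over finitely many values, so $\mathrm{opt}_{\mathcal{H}}(\theta)$ is attained for each $\theta$. For each $\theta$ I define the $\epsilon$-optimal set
\begin{eqnarray}
\mathcal{G}_\theta(\epsilon):=\{h\in\mathcal{H}: U(\theta,h)\ge \mathrm{opt}_{\mathcal{H}}(\theta)-\epsilon\}.
\end{eqnarray}
This is a fixed finite subset of $\mathcal{H}$ determined by the task data and independent of $Q$. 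The event inside Eq.~(\ref{eq:pl_guarantee}) is exactly $\{H\in\mathcal{G}_\theta(\epsilon)\}$. The guarantee is therefore equivalent to
\begin{eqnarray}
\sum_{h\in\mathcal{G}_\theta(\epsilon)} q_{\theta,h}\ \ge\ 1-\delta \qquad (\forall\,\theta\in\Theta).
\end{eqnarray}
This gives $|\Theta|$ linear inequalities with $0/1$ coefficients and right-hand side $1-\delta$.

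Next I assemble the feasibility problem. Its variables are the vector $q\in\mathbb{R}^{\Theta\times\mathcal{H}}$. Its constraints are $qA_d\le B_d$, which already includes nonnegativity and normalization, together with the $|\Theta|$ inequalities above. A kernel $Q\in\mathfrak{L}_d$ satisfying Eq.~(\ref{eq:pl_guarantee}) exists if and only if this system has a real solution, so the equivalence runs in both directions. Deciding feasibility of a finite system of linear inequalities with rational coefficients is decidable. One route is Fourier--Motzkin elimination. Another is the ellipsoid or simplex method in exact rational arithmetic, using the fact that a feasible rational polyhedron has a vertex of polynomially bounded bit size~\cite{BoydVandenberghe2004}.

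The main obstacle is rationality and effectiveness of the inputs, not the reduction itself. Two things must be computable. First, the sets $\mathcal{G}_\theta(\epsilon)$ must be computable, which requires deciding comparisons $U(\theta,h)\ge\max_{h'}U(\theta,h')-\epsilon$. Second, the constant $1-\delta$ must be rational for the LP to have rational data. I would handle this by making the standing assumption that $U$, $\epsilon$, and $\delta$ are given as rationals, or more generally as computable reals for which these finitely many comparisons can be decided. This is implicit in the phrase ``given a finite description'' in the statement. Under that assumption each $\mathcal{G}_\theta(\epsilon)$ is obtained by finitely many exact comparisons, and the constraint matrix stays rational.

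A secondary point is the phrase ``for a fixed sample budget $d$''. The theorem only claims decidability at fixed $d$. I would remark that deciding existence of some $d$ is a different, semi-decidable question, which we can enumerate over $d$, and is not asserted here.
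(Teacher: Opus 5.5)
Your proposal is correct and follows the paper's proof essentially verbatim. You use finiteness of $\mathcal{H}$ to get an attained optimum, define the $\epsilon$-optimal sets $\mathcal{G}_\theta(\epsilon)$, rewrite the PL guarantee as the $|\Theta|$ linear inequalities $\sum_{h\in\mathcal{G}_\theta(\epsilon)} q_{\theta,h}\ge 1-\delta$, append them to $qA_d\le B_d$, and invoke decidability of rational linear feasibility. Your added caveat, that $U$, $\epsilon$, $\delta$ must be rational (or at least have decidable comparisons) so that the $\mathcal{G}_\theta(\epsilon)$ are computable and the system really has rational data, is a genuine point the paper leaves implicit; it tightens the argument rather than changing it.
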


\begin{proof}
For each $\theta \in \Theta$, let
\begin{eqnarray}
\mathcal{G}_\theta(\epsilon)\;:=\;\left\{h\in\mathcal{H}:\ U(\theta,h)\ \ge\ \textrm{opt}_{\mathcal{H}}(\theta)-\epsilon\right\}.
\label{eq:good_set}
\end{eqnarray}
Because $\mathcal{H}$ is finite, $\textrm{opt}_{\mathcal{H}}(\theta)$ is attained and $\mathcal{G}_\theta(\epsilon)$ is well-defined. The PL condition in Eq.~(\ref{eq:pl_guarantee}) is equivalent to the family of linear inequalities
\begin{eqnarray}
\sum_{h\in\mathcal{G}_\theta(\epsilon)} q_{\theta,h} \ge 1-\delta, \quad (\forall \, \theta \in \Theta),
\label{eq:pl_linear_constraints}
\end{eqnarray}
together with $Q \in \mathfrak{L}_d$. By {\bf Definition~\ref{def:rational_polytope}}, the latter is equivalent to the finite set of rational linear inequalities as in Eq.~(\ref{eq:polytope_constraints}). Therefore, the existence of $Q$ satisfying PL is exactly the feasibility of a rational linear system, which is decidable by standard linear programming methods.
\end{proof}

No-signaling constraints provide a canonical and physically meaningful source of polytope admissibility: for finite input--output alphabets, the no-signaling conditions of {\bf Definition~\ref{def:no_signaling}} are linear equalities, and together with nonnegativity and normalization they define a convex polytope. Consequently, whenever a learning scenario can be cast so that admissible learner behaviors are parameterized by a finite no-signaling correlation table, the associated PL feasibility question is decidable in the concrete sense of {\bf Theorem~\ref{thm:finite_model_decidability}}. This does not ``decide'' the set-theoretic independence exhibited by EMX over the continuum. Instead, it illustrates a different and operationally relevant principle: once one commits to a physically specified constraint set of finite description, the question ``does there exist an admissible procedure achieving a target performance?'' becomes a computational question about a finite convex set, rather than a logical question about the existence of arbitrary functions on infinite domains.

The three case studies thus converge on a common methodological message. PL does not attempt to repair the set-theoretic foundations of unconstrained learnability by importing physical axioms as new mathematical postulates. Rather, it changes the object of study from ``functions that happen to satisfy a statistical inequality'' to ``operational protocols permitted by a specified physical model.'' In doing so, it both removes sources of purely set-theoretic indeterminacy that arise from non-operational idealizations and reveals new, intrinsically physical barriers---such as copy complexity under no-cloning---that shape what it means to learn in the real world.

\section{Discussions and outlook}

\subsection{What does it mean to ``decide learnability'' in the physical world?}

The word ``decide'' carries an ambiguity that is harmless in much of classical learning theory but becomes unavoidable once one confronts independence phenomena. In the PAC/VC paradigm, the statement ``$\mathcal{H}$ is learnable'' is simultaneously (i) a well-defined mathematical proposition and (ii) an operational claim that a concrete procedure exists. These two readings coincide because the existential quantifier ranges over objects whose existence can be certified by finitary constructions and whose analysis reduces to finite witnesses. The EMX example of Ben-David \emph{et al.} shows that this coincidence is not a theorem of nature but a special feature of the classification setting~\cite{BenDavid2019}. In that example, the mathematical proposition ``there exists a learner'' is sensitive to set-theoretic axioms, while the operational content of learning---which is always mediated by finite records and finite interventions---is left implicit.

The physics-aware learnability (PL) resolves this ambiguity by making the interface between the learner and the world explicit. The definition of PL learnability ({\bf Definition~\ref{def:pl_learnability}}) is not a new performance criterion; it is a different location of the existential quantifier. The question is no longer ``does there exist an arbitrary set-theoretic function with a certain property?'' but rather ``does there exist an admissible physical protocol, within a specified access model, that achieves the same property?'' The difference is subtle in notation and profound in meaning.

This shift has two consequences that, taken together, clarify what it means to ``decide learnability'' in the physical world.

\smallskip\noindent
{\bf First, PL disentangles set-theoretic indeterminacy from operational feasibility.}
The Ben-David construction is not ``refuted'' by PL. Rather, PL asks a different question: what aspects of the continuum are actually interrogated by a physically admissible sampling interface? Under coarse-graining, the learner sees only a countable alphabet and can output only hypotheses that are finitely nameable at that resolution. In that setting, the relevant EMX instance becomes provably learnable ({\bf Corollary~\ref{cor:finite_precision_emx}}), and one can even write down explicit learners with transparent sample complexity ({\bf Corollary~\ref{cor:quantile_emx_coarse_grain}}). The moral is not that physics settles cardinal arithmetic, but that independence results can be artifacts of allowing hypotheses and learners to depend on mathematical structure that no physical interface can supply.

\smallskip\noindent
{\bf Second, PL turns ``deciding learnability'' into a feasibility question about a finitely specified constraint set.}
Once an access model is fixed, the admissible behaviors of a learner constitute a set $\mathfrak{L}_d$ of conditional output laws. If the environment class and the hypothesis class are finitely described, and $\mathfrak{L}_d$ is itself given by finitely many physical constraints, then the PL question becomes a finite mathematical problem: does a certain convex set intersect a certain performance region? In this regime, ``deciding learnability'' is closer in spirit to checking the feasibility of a design specification than to settling a set-theoretic statement.

The following theorem formalizes this operational reading in two representative physical models. The first is the no-signaling model (and, more broadly, any model whose admissible behaviors form a rational polytope); the second is the finite-dimensional quantum model (where admissible behaviors arise from POVMs on $d$ copies, hence form a semidefinite-representable set).

\begin{theorem}[Operational decidability in finite PL models]
\label{thm:operational_decidability}
Let $(\Theta,\mathcal{H},U)$ be a learning task with finite $\Theta$ and finite $\mathcal{H}$. Fix $\epsilon,\delta \in (0,1)$ and a sample budget $d \in \mathbb{N}$. Define for each $\theta \in \Theta$ the set of $\epsilon$-optimal hypotheses
\begin{eqnarray}
\mathcal{G}_\theta(\epsilon) := \left\{h \in \mathcal{H}: U(\theta,h) \ge \mathrm{opt}_{\mathcal{H}}(\theta)-\epsilon\right\}.
\label{eq:good_set_discussion}
\end{eqnarray}
Consider the PL feasibility question: does there exist an admissible kernel $Q\in\mathfrak{L}_d$, such that $\sum_{h \in \mathcal{G}_\theta(\epsilon)} Q(h|\theta) \ge 1-\delta \quad (\forall \, \theta \in \Theta)$?
Then the following hold.
\begin{itemize}
\item[(i)] If $\mathfrak{L}_d$ is a rational polytope ({\bf Definition~\ref{def:rational_polytope}}), the feasibility question reduces to a linear feasibility problem over rational constraints, and is algorithmically decidable.
\item[(ii)] Suppose $\Theta=\{1,\ldots,\abs{\Theta}\}$ indexes a finite set of density operators $\{\hat{\rho}_\theta\}_{\theta \in \Theta}$ on a finite-dimensional Hilbert space $\mathcal{K}$, and suppose $\mathfrak{L}_d$ is the quantum PL admissible set consisting of kernels realizable by POVMs on $d$ copies, i.e. kernels of the form
\begin{eqnarray}
Q(h | \theta) = \tr{\hat{M}_h \hat{\rho}_\theta^{\otimes d}} \quad\mbox{with}\quad \hat{M}_h \succeq 0, \ \ \sum_{h\in\mathcal{H}} \hat{M}_h=\hat{\openone}_{\mathcal{K}^{\otimes d}}.
\label{eq:quantum_kernel_form_discussion}
\end{eqnarray}
\end{itemize}
Then, the feasibility question reduces to semidefinite feasibility with linear matrix inequalities. In particular, feasibility can be decided to arbitrary numerical precision by standard SDP methods.
\end{theorem}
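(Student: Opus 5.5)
The plan is to reduce the PL guarantee to finitely many linear constraints and then read off the type of the feasibility problem in each case.

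\textbf{Step 1 (linearization of the PL condition).} Since $\mathcal{H}$ is finite, $\mathrm{opt}_{\mathcal{H}}(\theta)=\max_h U(\theta,h)$ is attained. Hence each $\mathcal{G}_\theta(\epsilon)$ is a well-defined finite set, determined once from the finite table $\{U(\theta,h)\}$. The event $\{U(\theta,H)\ge\mathrm{opt}_{\mathcal{H}}(\theta)-\epsilon\}$ is exactly $\{H\in\mathcal{G}_\theta(\epsilon)\}$. So Eq.~(\ref{eq:pl_guarantee}) is equivalent to the $\abs{\Theta}$ inequalities $\sum_{h\in\mathcal{G}_\theta(\epsilon)}Q(h|\theta)\ge 1-\delta$. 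These are linear in the coordinates $q_{\theta,h}$, with $0/1$ coefficients and right-hand side $1-\delta$. This is the same observation used in the proof of Theorem~\ref{thm:finite_model_decidability}.

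\textbf{Step 2 (case (i)).} Adjoin these inequalities to the rational system $qA_d\le B_d$ of Definition~\ref{def:rational_polytope}. The result is a finite linear system in $\abs{\Theta}\abs{\mathcal{H}}$ variables, so Theorem~\ref{thm:finite_model_decidability} applies verbatim. Exact decidability follows from rational LP feasibility, via Fourier--Motzkin elimination or an exact-arithmetic simplex method. This presumes $\delta$ is rational, or more generally that the entries of $U$, $\epsilon$ and $\delta$ are given as rationals so that the sets $\mathcal{G}_\theta(\epsilon)$ are computable. I will state that presumption explicitly.

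\textbf{Step 3 (case (ii)).} By Theorem~\ref{thm:povm_representation}, $\mathfrak{L}_d$ is exactly the set of kernels of the form in Eq.~(\ref{eq:quantum_kernel_form_discussion}), so I take the POVM elements $\{\hat M_h\}_{h\in\mathcal{H}}$ as the decision variables. They are Hermitian matrices on the finite-dimensional space $\mathcal{K}^{\otimes d}$, of dimension $(\dim\mathcal{K})^d$. The admissibility constraints are then:
\begin{itemize}
\item the linear matrix inequalities $\hat M_h\succeq 0$, one for each $h$;
\item the affine equality $\sum_h \hat M_h=\hat{\openone}$.
\end{itemize}
Substituting $Q(h|\theta)=\tr{\hat M_h\hat\rho_\theta^{\otimes d}}$ into the constraints of Step~1 gives $\tr{(\sum_{h\in\mathcal{G}_\theta(\epsilon)}\hat M_h)\hat\rho_\theta^{\otimes d}}\ge 1-\delta$. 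These are affine scalar constraints in the real coordinates of the $\hat M_h$, because the map $\hat M\mapsto\tr{\hat M\hat\rho}$ is real-linear on Hermitian matrices. Writing the $\hat M_h$ in a real basis of Hermitian matrices, the whole problem becomes a standard-form SDP feasibility problem. Each scalar inequality can be written as a $1\times1$ LMI, and the block-diagonal direct sum gives a single LMI if desired.

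\textbf{Main obstacle.} The delicate point is the claim about deciding case (ii), not the reduction itself. Exact decidability of SDP feasibility is subtle, because feasible sets may contain only irrational or doubly-exponential points. I will therefore claim only what the theorem states: feasibility can be decided to arbitrary numerical precision, for instance by interior-point or ellipsoid methods applied to the slightly relaxed problem with tolerance $\eta>0$. The feasible set is compact, since $0\preceq\hat M_h\preceq\hat{\openone}$, and this gives the boundedness those methods require. If exact decidability were wanted, I would add a remark that, for rational or algebraic data $\hat\rho_\theta$, the feasibility statement is a first-order sentence over the reals. It is then decidable by Tarski--Seidenberg quantifier elimination, though this is not needed for the stated claim.
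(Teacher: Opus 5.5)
Your proposal is correct and follows essentially the same route as the paper. You linearize the PL condition through the $\epsilon$-optimal sets $\mathcal{G}_\theta(\epsilon)$. In case (i) you adjoin these constraints to the rational polytope, reducing to Theorem~\ref{thm:finite_model_decidability}; in case (ii) you adjoin them to the POVM semidefinite constraints, using the trace pairing to make them linear in the $\hat{M}_h$.

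Your extra caveats are sound refinements that the paper leaves implicit. First, exact LP decidability needs rational or computable $U$, $\epsilon$, $\delta$, so that the sets $\mathcal{G}_\theta(\epsilon)$ and the right-hand sides are effectively given. Second, SDP feasibility is decided only to numerical precision unless one adds a Tarski--Seidenberg argument.
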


\begin{proof}
Part (i) is exactly the content of {\bf Theorem~\ref{thm:finite_model_decidability}}. We include it here to emphasize the interpretational point that polyhedral physical constraints turn PL feasibility into a finite linear system.

For part (ii), introduce matrix variables $\{\hat{M}_h\}_{h \in \mathcal{H}}$ on $\mathcal{K}^{\otimes d}$. The condition that these variables define a POVM is the semidefinite constraint
\begin{eqnarray}
\hat{M}_h \succeq 0 \quad (\forall \, h \in \mathcal{H}), \quad \sum_{h \in \mathcal{H}} \hat{M}_h = \hat{\openone}_{\mathcal{K}^{\otimes d}}.
\label{eq:povm_sdp_constraints}
\end{eqnarray}
For each $\theta\in\Theta$, the PL constraint becomes, using~(\ref{eq:quantum_kernel_form_discussion}),
\begin{eqnarray}
\sum_{h \in \mathcal{G}_\theta(\epsilon)} \tr{\hat{M}_h \hat{\rho}_\theta^{\otimes d}} \ge 1-\delta.
\label{eq:quantum_pl_constraints}
\end{eqnarray}
Since $\tr{\hat{M}_h\,\hat{\rho}_\theta^{\otimes d}}$ is linear in $\hat{M}_h$, the inequalities Eq.~(\ref{eq:quantum_pl_constraints}) are linear constraints on the matrix variables. Therefore, the existence of a quantum-admissible $Q$ satisfying PL is equivalent to feasibility of the semidefinite system in Eq.~(\ref{eq:povm_sdp_constraints}) and Eq.~(\ref{eq:quantum_pl_constraints}). The numerical decidability to arbitrary precision follows from standard SDP feasibility algorithms.
\end{proof}

{\bf Theorem~\ref{thm:operational_decidability}} suggests a concrete answer to the motivating question ``what does it mean to decide learnability?'' In the physical world, the question is not whether a statement is provable in ZFC, but whether a device meeting a specification exists within a physically admissible constraint set. When this constraint set is finitely described---as it is for finite-alphabet no-signaling models or finite-dimensional quantum models with finitely many relevant environments---``deciding learnability'' becomes an explicit feasibility problem, and the relevant mathematics is convex optimization rather than set-theoretic independence.

At the same time, PL does not promise that all foundational difficulties disappear. Physics can also introduce new kinds of obstruction. No-cloning makes sample size a hard resource ({\bf Corollary~\ref{cor:copy_complexity_bound}}), and no-signaling can restrict the admissible flow of information in distributed protocols. Moreover, as one enlarges the admissible model class (for example, by allowing unbounded-dimensional quantum systems or by permitting complex adaptive interactions), feasibility questions that are convex in small finite models can become dramatically more subtle. In this sense, PL replaces a single notion of ``decidability'' by a more faithful stratification: some questions are mathematically independent when the learner is an arbitrary function, some become operationally decidable under finite physical descriptions, and some may remain computationally intractable or even undecidable when the admissible physics is sufficiently expressive. The essential gain is that PL makes explicit which sense of ``decide'' is being invoked and where the difficulty truly resides.

\subsection{Open problems \& outlook}

The PL framework is intentionally a beginning rather than an endpoint. Its main purpose is to provide a clean interface for asking questions that are simultaneously learning-theoretic and physically meaningful. We highlight several directions where the definitions introduced here suggest precise problems whose solutions would deepen the connection between learning theory and the physics of information.

\subsubsection*{PL compression and effective structure}
The Ben-David equivalence between weak EMX learnability and monotone compression relies on union-boundedness and on quantifying over set-theoretic learners. In PL, the learner is a constrained protocol and the output hypotheses are representable objects. A natural question is whether there exists an operational analogue of monotone compression, formulated in terms of admissible transformations on data registers, that is equivalent to PL learnability for broad classes of tasks. One expects that such a notion, if it exists, would refine classical compression schemes by incorporating the resource that the physics actually constrains: copies, communication, or measurement access. Establishing a PL compression theorem would provide a structural backbone for the framework analogous to the role played by VC dimension in PAC learning and by monotone compression in EMX.

\subsubsection*{Dimension-like characterizations under restricted physics}
Ben-David \emph{et al.} show that no dimension of ``finite character'' can characterize EMX learnability in full generality unless ZFC is inconsistent~\cite{BenDavid2019}. PL suggests a sharper conjecture: while no absolute dimension parameter exists for general learning tasks, there may exist relative invariants, depending on $\mathfrak{L}$, that characterize PL learnability within operationally meaningful regimes. For example, one may ask whether coarse-grained EMX classes admit a dimension-like characterization on the induced discrete alphabet, or whether certain quantum PL tasks admit invariants tied to state distinguishability, channel capacities, or information radius. The challenge is to define such invariants in a way that is both physically interpretable and mathematically robust.

\subsubsection*{Coarse-graining as a design variable}
In PL, finite precision is not merely a nuisance but a structural feature of the access model. This suggests a design question: given a task on a continuum domain, what is the coarsest interface $\pi$ (or the cheapest measurement in a quantum setting) that renders the task learnable to a desired accuracy? Conversely, how does the achievable accuracy scale as a function of resolution, measurement noise, or sampling budget? These questions recast learning as the joint design of a hypothesis-selection rule and a measurement interface, and they naturally lead to trade-offs that combine statistical complexity with physical resource costs.

\subsubsection*{Gaps between quantum and no-signaling learnability}
Quantum mechanics is no-signaling ({\bf Theorem~\ref{thm:quantum_no_signaling}}), but the converse is not true: no-signaling correlations form a strictly larger set than quantum correlations in many settings~\cite{Brunner2014Bell}. This separation suggests the possibility of learning tasks that are feasible under no-signaling admissibility but infeasible under quantum admissibility, or vice versa when additional structure is imposed. Formulating explicit tasks in which such gaps can be proved, and relating those gaps to information principles (such as information causality or uncertainty relations), would clarify whether ``physics-aware'' should be understood as ``quantum-aware'' or whether more general operational principles are the right primitive.

\subsubsection*{The decidability frontier for PL}
{\bf Theorem~\ref{thm:operational_decidability}} identifies a regime in which PL feasibility is decidable because the admissible behaviors form a finitely described convex set. A natural program is to map the boundary of this regime. Which extensions of $\mathfrak{L}$ preserve convexity and finite describability? When does the PL decision problem become computationally hard even though it remains decidable? Under what conditions can it become undecidable in the algorithmic sense? This problem is not merely about complexity theory; it is a foundational question about which physical descriptions yield a mathematically tractable theory of learnability and which do not~\cite{BangChoJae2026}.

\subsection{Conclusion}

The ambition of the conventional learning theory is to ``identify the learnable.'' Ben-David \emph{et al.} remind us that this ambition is inseparable from the formal language in which we ask it. In other words, when the learners are treated as arbitrary functions over infinite domains, even simple-looking learning questions can become independent of the standard axioms of mathematics~\cite{BenDavid2019}. PL reframes this lesson as an opportunity. By treating learners as admissible physical protocols and by making the data interface explicit, PL shifts learnability from a statement about set-theoretic existence to a statement about operational feasibility.

This shift has two complementary effects. It can remove purely set-theoretic indeterminacy that arises from non-operational idealizations, as in the finite-precision reinterpretation of the EMX example. At the same time, it can reveal obstacles that are intrinsically physical, such as the copy complexity induced by quantum distinguishability and the restrictions imposed by no-signaling. In this sense, PL does not ``solve'' undecidability by importing physics as additional axioms. Instead, it proposes that the meaningful notion of learnability is inherently relative: relative to a measurement interface, relative to admissible transformations, and relative to the resources that physics makes scarce.

The broader outlook is that learning theory and the foundations of physics share a common theme: both fields study what can be inferred about an unknown world from finite interaction with it. Where classical learning theory often assumes an idealized channel from world to data, PL asks us to write that channel down and take its laws seriously. Doing so does not diminish the mathematical nature of learnability; it sharpens it, by ensuring that the theorems we prove are about learners that could, in principle, exist.

\appendix

\section{Proof of the no-cloning theorem (Theorem~\ref{thm:no_cloning})}\label{appendix:nocloning}

Assume, towards a contradiction, that there exists a linear isometry $\hat{U}:\mathcal{H}\otimes\mathcal{H} \to \mathcal{H}\otimes\mathcal{H}$ and a fixed unit ``blank'' state $\ket{0} \in \mathcal{H}$ such that
\begin{eqnarray}
\hat{U}\big(\ket{\psi}\otimes\ket{0}\big)=\ket{\psi}\otimes\ket{\psi}
\end{eqnarray}
for every unit vector $\ket{\psi} \in \mathcal{H}$.

Because $\dim(\mathcal{H}) \ge 2$, we can choose two distinct non-orthogonal unit vectors. For example, fix an orthonormal pair $\ket{e_0}, \ket{e_1}$ and take $\ket{\psi} = \ket{e_0}$ and $\ket{\phi}=\cos\theta \ket{e_0} + \sin\theta \ket{e_1}$ for some $\theta \in (0,\pi/2)$. Then $0 < \abs{\braket{\psi}{\phi}}=\cos\theta < 1$.

Now compute the inner product of the two purportedly cloned states in two ways. First, by definition of $\hat{U}$,
\begin{eqnarray}
\hat{U}\ket{\psi}\otimes\ket{0} = \ket{\psi}\otimes\ket{\psi},
\quad
\hat{U}\ket{\phi}\otimes\ket{0} = \ket{\phi}\otimes\ket{\phi},
\end{eqnarray}
Second, since $\hat{U}$ is an isometry, it preserves inner products:
\begin{eqnarray}
\bigl( \bra{\psi}\otimes\bra{0} \bigr) \bigl( \ket{\phi}\otimes\ket{0} \bigr) = \left( \bra{\psi}\otimes\bra{0} \hat{U}^\dagger \right) \left( \hat{U}\ket{\phi}\otimes\ket{0} \right) 
\end{eqnarray}
The left-hand side is $\braket{\psi}{\phi}\braket{0}{0}=\braket{\psi}{\phi}$, while the right-hand side equals
\begin{eqnarray}
\bigl( \bra{\psi}\otimes\bra{\psi} \bigr) \bigl( \ket{\phi}\otimes\ket{\phi} \bigr) = \braket{\psi}{\phi}\braket{\psi}{\phi} = \braket{\psi}{\phi}^2.
\end{eqnarray}
Therefore, $\braket{\psi}{\phi}=\braket{\psi}{\phi}^2$. Let $c:=\braket{\psi}{\phi}$. The equality $c=c^2$ implies $c \in \{0,1\}$ (over $\mathbb{C}$ as well), contradicting $0 < \abs{c} < 1$. This contradiction completes the proof.

\section{Proof of quantum no-signaling (Theorem~\ref{thm:quantum_no_signaling})}\label{appendix:nosignaling}

Fix any bipartite state $\hat{\rho}_{AB}$ on $\mathcal{H}_A\otimes\mathcal{H}_B$, any POVMs $\{\hat{M}_a^x\}_{a \in A}$ on $\mathcal{H}_A$ indexed by $x \in X$, and any POVMs $\{\hat{N}_b^y\}_{b \in B}$ on $\mathcal{H}_B$ indexed by $y \in Y$. Define
\begin{eqnarray}
p(a,b | x,y) := \tr{(\hat{M}_a^x \otimes \hat{N}_b^y) \hat{\rho}_{AB}}.
\end{eqnarray}

We verify the two no-signaling conditions. Fix $y \in Y$ and $b \in B$. Using linearity of trace and POVM completeness on $A$, i.e. $\sum_{a \in A}\hat{M}_a^x=\hat{\openone}_A$ for every $x$, we compute
\begin{eqnarray}
\sum_{a \in A} p(a,b | x,y) = \sum_{a \in A}\tr{(\hat{M}_a^x\otimes \hat{N}_b^y)\hat{\rho}_{AB}} = \tr{\Big(\sum_{a \in A}\hat{M}_a^x\Big) \otimes \hat{N}_b^y \hat{\rho}_{AB}} = \tr{(\hat{\openone}_A \otimes \hat{N}_b^y) \hat{\rho}_{AB}},
\end{eqnarray}
which is independent of $x$. This proves Eq.~(\ref{eq:no_signal_A_to_B}).

The second condition is proved analogously. Fix $x \in X$ and $a \in A$, and use completeness on $B$, $\sum_{b \in B}\hat{N}_b^y=\hat{\openone}_B$ for every $y$, to obtain
\begin{eqnarray}
\sum_{b\in B} p(a,b | x,y) = \tr{(\hat{M}_a^x \otimes \hat{\openone}_B) \hat{\rho}_{AB}},
\end{eqnarray}
which is independent of $y$. This proves Eq.~(\ref{eq:no_signal_B_to_A}) and completes the proof.


%

\end{document}